\documentclass[journal]{IEEEtran} 
\usepackage[T1]{fontenc}
\usepackage[latin9]{inputenc}
\usepackage{color}
\usepackage{verbatim}
\usepackage{float}
\usepackage{amsmath}
\usepackage{amsthm}
\usepackage{amssymb}
\usepackage{stmaryrd}
\usepackage{stackrel}
\usepackage{graphicx}
\usepackage{wasysym}
\usepackage{mathtools}
\usepackage{multirow}
\usepackage[dvipsnames]{xcolor}
\usepackage{hyperref}
\makeatletter

\floatstyle{ruled}
\newfloat{algorithm}{tbp}{loa}
\providecommand{\algorithmname}{Algorithm}
\floatname{algorithm}{\protect\algorithmname}

\theoremstyle{plain}
\newtheorem{thm}{\protect\theoremname}
\theoremstyle{definition}
\newtheorem{defn}{\protect\definitionname}
\theoremstyle{definition}
\newtheorem{problem}{\protect\problemname}
\theoremstyle{plain}
\newtheorem{lem}{\protect\lemmaname}
\theoremstyle{definition}
\newtheorem{example}{\protect\examplename}
\theoremstyle{remark}
\newtheorem{rem}{\protect\remarkname}
\theoremstyle{plain}

\usepackage{cite}
\usepackage{algpseudocode}
\usepackage{setspace}
\IEEEoverridecommandlockouts

\makeatother

\usepackage{babel}
\providecommand{\corollaryname}{Corollary}
\providecommand{\definitionname}{Definition}
\providecommand{\examplename}{Example}
\providecommand{\lemmaname}{Lemma}
\providecommand{\problemname}{Problem}
\providecommand{\remarkname}{Remark}
\providecommand{\theoremname}{Theorem}

\newcommand{\UntilOp}{\mathcal{U}}
\newcommand{\Eventually}{\diamondsuit}
\newcommand{\Always}{\square}
\newcommand{\next}{\Circle}
\newcommand{\AP}{{AP}}

\SetSymbolFont{stmry}{bold}{U}{stmry}{m}{n}

\def\BibTeX{{\rm B\kern-.05em{\sc i\kern-.025em b}\kern-.08em
		T\kern-.1667em\lower.7ex\hbox{E}\kern-.125emX}}
\begin{document}

\title{Optimal Probabilistic Motion Planning with Potential Infeasible LTL
	Constraints
	
    \thanks{This work was supported in part by the National Natural Science Foundation of China under Grant 62173314, Grant U2013601, and Grant 61625303.}	
    
	\thanks{$^{1}$Department of Mechanical Engineering, Lehigh University, Bethlehem, PA, 18015, USA. $^{2}$Department of Mechanical Engineering, University of Iowa Technology Institute, The University of Iowa,
		Iowa City, IA, 52246, USA. $^{3}$Department of Automation, University of Science and Technology
		of China, Hefei, Anhui, 230026, China.}
}
		
\author{Mingyu Cai$^{1,2}$, Shaoping Xiao$^{2}$, Zhijun Li$^{3}$, and Zhen Kan$^{3}$}
\maketitle

\begin{abstract}
This paper studies optimal motion
planning
subject to motion and environment uncertainties. By modeling the system as a probabilistic labeled
Markov decision process (PL-MDP), the control objective is to synthesize a finite-memory policy, under which the agent satisfies complex high-level tasks expressed as linear temporal logic (LTL) with desired satisfaction probability.
In particular, the cost optimization of the trajectory that satisfies infinite horizon tasks is considered, and the trade-off between reducing the expected mean cost and maximizing the probability of task satisfaction is analyzed. Instead of using traditional Rabin automata, the LTL formulas are converted to limit-deterministic B\"uchi automata (LDBA) with a reachability acceptance condition and a compact graph structure. 
The novelty of this work lies in considering the cases where LTL specifications can be potentially infeasible and developing
a relaxed product MDP between PL-MDP and LDBA. The relaxed product MDP allows the agent
to revise its motion plan whenever the task is not fully feasible and quantify the revised plan's violation measurement.
A multi-objective optimization problem is then formulated to jointly
consider the probability of task satisfaction, the violation with respect to original task constraints, and the implementation cost of the policy execution. The formulated problem can be solved via coupled linear
programs. To the best of our knowledge, this work first
bridges the gap between probabilistic planning revision of potential infeasible LTL specifications and optimal control synthesis
of both plan prefix and plan suffix of the trajectory over the infinite horizons. Experimental results
are provided to demonstrate the effectiveness of the proposed framework.
	
	\global\long\def\Dist{\operatorname{Dist}}%
	\global\long\def\Inf{\operatorname{Inf}}%
	\global\long\def\Sense{\operatorname{Sense}}%
	\global\long\def\Eval{\operatorname{Eval}}%
	\global\long\def\Info{\operatorname{Info}}%
	\global\long\def\ResetRabin{\operatorname{ResetRabin}}%
	\global\long\def\Post{\operatorname{Post}}%
	\global\long\def\Acc{\operatorname{Acc}}%
\end{abstract}

\begin{IEEEkeywords}
 Formal Methods in Robotics and Automation, Probabilistic Model Checking, Network Flow, Decision Making, Linear Programming, Motion Planning, Optimal Control
\end{IEEEkeywords}

\section{Introduction}

    Autonomous agents operating in complex environments
	are often subject to a variety of uncertainties. Typical uncertainties
	arise from the stochastic behaviors of the motion (e.g., potential
	sensing noise or actuation failures) and uncertain environment properties
	(e.g., mobile obstacles or time-varying areas of interest). In addition
	to motion and environment uncertainties, another layer of complexity
	in robotic motion planning is the feasibility of desired behaviors.
	For instance, areas of interest to be visited can be found to be prohibitive
	to the agent in practice (e.g., surrounded by water that the ground
	robot cannot traverse), resulting in that the user-specified tasks cannot
	be fully realized. Motivated by these challenges, this work considers
	motion planning of a mobile agent with potentially infeasible task
	specifications subject to motion and environment uncertainties, i.e., motion planning and decision making of stochastic systems. 

    Linear temporal logic (LTL) is a formal language
	capable of describing complex missions \cite{Baier2008}. For example, motion planning
	with LTL task specifications has generated substantial interest in
	robotics (cf. \cite{Kloetzer2008,Kantaros2018,srinivasan2020control}, to name a
	few). Recently, there has been growing attention in the control synthesis community to address Markov decision process (MDP) with LTL specifications based
	on probabilistic model checking, such as co-safe LTL tasks \cite{Ulusoy2014a, jagtap2020formal}, computation tree logic tasks
	\cite{Lahijanian2012}, stochastic signal temporal logic tasks \cite{Nuzzo2019},
	and reinforcement-learning-based approaches \cite{Sadigh2014,hasanbeig2018logically, hasanbeig2019certified, Modular_CAI,  cai2021safe}. 
	However, these aforementioned works only considered feasible specifications
	that can be fully executed. Thus, a challenging problem is how missions
	can be successfully managed in a dynamic and uncertain environment
	where the desired tasks are only partially feasible.

    This work studies the control synthesis of a mobile
	agent with LTL specifications that can be infeasible. The uncertainties in both robot motion (e.g., potential actuation failures) and workspace properties (e.g., obstacles or areas of interest) are
	considered. It gives rise to the probabilistic
	labeled Markov decision process (PL-MDP). Our objective is to
	generate control policies in decreasing priority order to (1) accomplish
	the pre-specified task with desired probability; (2) fulfill the pre-specified
	task as much as possible if it is not entirely feasible; and (3) minimize
	the expected implementation cost of the trajectory. Although the above
	objectives have been studied individually in the literature, this work
 	considers them together in a probabilistic manner. 

    \textbf{Related works:} From the aspect of optimization, the satisfaction of the general form of LTL tasks in stochastic systems involves the lasso-type policies comprised of a plan prefix and a plan suffix \cite{Baier2008}. When considering cost optimization subject to LTL
	specifications with infinite horizons over MDP models, the planned policies generally
	have a decision-making structure consisting of plan prefix and plan suffix. The prefix policies drive the system into an accepting maximum end component (AMEC), and the suffix policies involve the behaviors within the AMEC \cite{Baier2008}.
	Optimal policies of prefix and suffix structures have been investigated
	in the literature \cite{Svorevnova2013,Smith2011,Ding2014a,Guo2018, Forejt2011,Forejt2012}.
	A sub-optimal solution was developed in \cite{Svorevnova2013}, and
	minimizing the bottleneck cost was considered in \cite{Smith2011}.
	The works of \cite{Ding2014a,Guo2018,Forejt2011,Forejt2012} optimized
	the total cost of plan prefix and suffix while maximizing the satisfaction
	probability of specific LTL tasks. However, the aforementioned works
	\cite{Svorevnova2013,Smith2011,Ding2014a,Guo2018, Forejt2011,Forejt2012} mainly focused
	on motion planning in feasible cases and relied on a critical assumption
	of the existence of AMECs or an
	accepting run under a policy in the product MDP. Such an assumption may not be valid
	if desired tasks can not be fully completed in the operating environment. 
	
	When considering infeasible tasks, motion planning in a potential conflict situation has been partially
	investigated via control synthesis under soft LTL constraints \cite{Guo2015}
	and the minimal revision of motion plans \cite{Kim2012,Kim2012a,Tumova2013}. Recent works \cite{cai2020receding, peterson2021distributed, li2021online} extend the above approaches by considering dynamic or time-bounded temporal logic constraints. The works of \cite{vasile2017minimum} and \cite{wongpiromsarn2021minimum} leverage sampling-based methods for traffic environments.
	However, only deterministic transition systems were considered in \cite{Guo2015,Kim2012,Kim2012a,Tumova2013,cai2020receding, peterson2021distributed, li2021online,vasile2017minimum,wongpiromsarn2021minimum}. On the other hand, when considering
	probabilistic systems, a learning-based approach was utilized in the works of \cite{Cai2020} and \cite{cai2021_soft_reinforcement}. However, these works do not provide formal guarantees for multi-objective problems. The iterative temporal planning was developed in \cite{Lahijanian2016} and \cite{Lacerda2019} with partial satisfaction
	guarantees, and the work \cite{niu2020optimal} proposed a minimum violation control for finite stochastic games subject to co-safe LTL.
	These results are limited to finite horizons.
	In contrast, the satisfaction of the general LTL tasks in stochastic systems involves the lasso-type policies comprised of prefix and suffix structures \cite{Baier2008}. This work considers decision-making over infinite horizons
	in a stochastic system where desired tasks might not
	be fully feasible. In addition, this work also studies probabilistic cost optimization
	of the agent trajectory, which receives little attention in the works
	of \cite{Guo2015,Kim2012,Kim2012a,Tumova2013,cai2020receding, peterson2021distributed, li2021online,vasile2017minimum,wongpiromsarn2021minimum, Cai2020,cai2021_soft_reinforcement,Lahijanian2016}.

	  From the perspective of automaton structures, limit-deterministic B\"uchi automata (LDBA)
	are often used instead of traditional deterministic Rabin automata (DRA) \cite{Baier2008} to reduce the automaton size. It is well-known that the Rabin automata, in the
	worst case, are doubly exponential in the size of the LTL formula,
	while LDBA only has an exponential-sized automaton \cite{Sickert2016}.
	In addition, the B\"uchi accepting condition of LDBA, unlike the Rabin accepting condition,
	does not apply rejecting transitions. It allows us to constructively
	convert the  problem of satisfying the LTL formula to an almost-sure reachability
	problem \cite{Hasanbeig2019a,Bozkurt2020,cai2020reinforcement}. As a result, LDBA based control synthesis
	has been increasingly used for motion planning with LTL constraints
	\cite{Hasanbeig2019a,Bozkurt2020,cai2020reinforcement}. However, in the aforementioned
	works, cost optimization was not considered, and most of them only considered feasible cases (i.e., with goals
	to reach AMECs). In this work, the product MDP with LDBA is extended to the relaxed product
	MDP, which facilitates the optimization process to handle infeasible LTL specifications, reduces
	the automaton complexity, and improves the computational efficiency, 
    
	\textbf{Contributions:} Our work for the first time bridges the
	gap between planning revision for potentially infeasible task specifications
	and optimal control synthesis of stochastic systems subject to motion and environment uncertainties.
	In addition, we analyze the finite-memory policy of the PL-MDP that satisfies complex LTL specifications with desired probability and consider cost optimization in
	both plan prefix and plan suffix of the agent trajectory over infinite horizons. 
	The novelty of this work is the development of a relaxed product MDP between PL-MDP and LDBA to address the cases in which LTL specifications can be potentially infeasible. The relaxed product MDP allows the agent to revise its motion plan whenever the task is not fully feasible and quantify the revised plan's violation measurement. In addition, the relaxed product structure is verified to be an MDP model and a more connected directed graph. 
	Based on such a relaxed product MDP, we are able to formulate a constrained multi-objective optimization process to jointly consider the desired lower-bounded satisfaction probability of the
	task, the minimum violation cost, and the optimal implementation costs. We can find solutions by adopting coupled linear programming (LP) for MDPs relying on the network flow theory \cite{Forejt2011,Forejt2012}, which is flexible for any optimal probabilistic model checking problems. 
	We provide a comprehensive comparison with the significant existing methods, i.e., Round-Robin policy \cite{Baier2008}, PRISM \cite{Kwiatkowska2011}, and multi-objective optimization frameworks \cite{Guo2018,Forejt2011,Forejt2012}.
    Although the relaxed product MDP
	is designed to handle potentially infeasible LTL specifications, it
	is worth pointing out it is also applicable to feasible cases and
	thus generalizes most existing works. In addition, this framework can be easily adapted to formulate a hierarchical architecture that combines noisy low-level controllers and practical approaches of stochastic abstraction.
	
\section{PRELIMINARIES}

\subsection{Notations\label{subsec:notation}}

$\mathbb{N}$ represents the set of natural numbers. For an infinite path $\boldsymbol{s}=s_{0}s_{1}\ldots$ starting from state $s_0$, $\boldsymbol{s}[0]$ denotes its first element,
$\boldsymbol{s}[t], t\in\mathbb{N}$ denotes the path at step $t$, $\boldsymbol{s}[t:]$ denotes the path starting from step $t$ to the end. The expected value of a variable $x$ is $\mathbb{E}(x)$.
We use abbreviations for several notations and definitions, which are summarized in Table~\ref{tab:Abbreviation}.

\begin{table}
	\caption{\label{tab:Abbreviation} Abbreviation Summary of Notations.}
	\centering{}\resizebox{0.48\textwidth}{!}{
	\begin{tabular}{|c|c|c|c|c|c|c}
	    \hline
		\textbf{Notation Name} & \textbf{Abbreviation}  \\ 
		\hline
		\text{Limit-Deterministic B\"uchi Automaton} & \text{LDBA} \\
		\hline
		\text{Strong Connected Component} & \text{SCC} \\
		\hline
		\text{Bottom Strong Connect Component} & \text{BSCC} \\
		\hline
		\text{Accepting Bottom Strong Connect Component} & \text{ABSCC} \\
		\hline
		\text{Maximum End Component} & \text{MEC} \\
		\hline
		\text{Accepting Maximum End Component} & \text{AMEC} \\
		\hline
		\text{Average Execution Cost per Stage}  & \text{AEPS} \\
		\hline
		\text{Average Violation Cost per Stage} & \text{AVPS} \\
		\hline
		\text{Average Regulation Cost per Stage} & \text{ARPS} \\
		\hline
		\text{Linear Programming} & \text{LP} \\
		\hline
	\end{tabular}}
\end{table}

\subsection{Probabilistic Labeled MDP\label{subsec:Labeled-MDP}}

\begin{defn}
A probabilistic labeled finite MDP (PL-MDP) is a tuple $\mathcal{M}=\left(S,A,p_{S},\left(s_{0},l_{0}\right), L, p_{L}, c_{A}\right)$,
where $S$ is a finite state space, $A$ is a finite action space
(with a slight abuse of notation, $A\left(s\right)$ also denotes
the set of actions enabled at $s\in S$), $p_{S}:S\times A\times S\shortrightarrow\left[0,1\right]$
is the transition probability function, $\boldsymbol{\pi}$ is a set of atomic
propositions, and $L:S\shortrightarrow2^{\boldsymbol{\pi}}$ is a labeling function.
The pair $\left(s_{0},l_{0}\right)$ denotes an initial state $s_{0}\in S$
and an initial label $l_{0}\in L\left(s_{0}\right)$. The function
$p_{L}\left(s,l\right)$ denotes the probability of $l\subseteq L\left(s\right)$
associated with $s\in S$ satisfying $\sum_{l\in L\left(s\right)}p_{L}\left(s,l\right)=1,\forall s\in S$.
The cost function $c_{A}\left(s,a\right)$ indicates the cost of performing
$a\in A\left(s\right)$ at $s$. The transition probability $p_{S}$
captures the motion uncertainties of the agent, while the labeling
probability $p_{L}$ captures the environment uncertainties. 
\end{defn}

The PL-MDP $\mathcal{M}$ evolves by taking actions $a_{i}$ selected based on the policy at each
step $i\in\mathbb{N}_{0}$, where $\mathbb{N}_{0}=\mathbb{N}\cup\left\{ 0\right\} $.

\begin{defn}
The control policy $\boldsymbol{\mu}=\mu_{0}\mu_{1}\ldots$ is a
sequence of decision rules, which yields a path $\boldsymbol{s}=s_{0}s_{1}s_{2}\ldots$
over $\mathcal{M}$. As shown in \cite{Puterman2014}, $\boldsymbol{\mu}$
is called a stationary policy if $\boldsymbol{\mu}_{i}=\boldsymbol{\boldsymbol{\boldsymbol{\boldsymbol{\mu}}}}$ for all $i\geq0$,
where $\boldsymbol{\mu}$ can be either deterministic such that $\boldsymbol{\mu}:S\rightarrow A$
or stochastic such that $\boldsymbol{\mu}:S\times A\rightarrow\left[0,1\right]$.
The control policy $\boldsymbol{\mu}$ is memoryless if each $\mu_{i}$
only depends on its current state $s_{i}$. In contrast, $\boldsymbol{\mu}$
is called a finite-memory (i.e., history-dependent) policy if $\mu_{i}$ depends on its past
states. 
\end{defn}

In this work, we consider the stochastic policy. Let $\boldsymbol{\mu}(s)$ denote the probability distribution of actions at state $s$, and $\boldsymbol{\mu}(s, a)$ represent the probability of generating action $a$ at state $s$ using the policy $\boldsymbol{\mu}$. 

\begin{defn}
Given a PL-MDP $\mathcal{P}$ under policy $\boldsymbol{\pi}$, a  Markov
chain $MC_{\mathcal{M}}^{\boldsymbol{\mu}}$ of the PL-MDP $\mathcal{M}$ induced by a policy $\boldsymbol{\mu}$ is a tuple $\left(S,A,p^{\boldsymbol{\mu}}_{S},\left(s_{0},l_{0}\right), L, p_{L}\right)$ where $p^{\boldsymbol{\mu}}_{S}(s, s')=p_{S}(s,a,s')$ with $\boldsymbol{\mu}(s,a)>0$ for all $s,s' \in S$.
\end{defn}

\begin{defn}
A sub-MDP $\mathcal{M}{}_{\left(S',A'\right)}$ of $\mathcal{M}$ is a pair $(S', A')$ where $S'\subseteq S$ and $A'$ is a finite action space of $S'$ such that (i) $S'\neq\emptyset$, and $A'(s)\neq\emptyset , \forall s\in S'$; (ii) $\left\{ x'\in X' \mid p^{\mathcal{P}}(x,u,x')>0, \forall x\in X' \text{ and }  \forall u\in U'(x)\right\}$. An induced graph of $\mathcal{M}{}_{\left(S',A'\right)}$ is denoted as $\mathcal{G}_{\left(S',A'\right)}$ that is a directed graph, where if $p_{S}(s,a,s')>0$ with $a\in A'(s)$, for any $s,s'\in S'$, there exists an edge between $s$ and $s'$ in $\mathcal{G}_{\left(S',A'\right)}$. A sub-MDP is a strongly connected component (SCC) if its induced graph is strongly connected such that for all pairs of nodes $s,s' \in S'$, there is a path from $s$ to $s'$. A bottom strongly connected component (BSCC) is an SCC
from which no state outside is reachable by applying the restricted action space. 
\end{defn}

\begin{rem}
Note the evolution of a sub-MDP $\mathcal{M}{}_{\left(S',A'\right)}$ is restricted by the action space $A'$.
Given a PL-MDP and one of its SCCs, there may exist paths starting within the SCC and ending outside
of the SCC, whereas all paths starting from a BSCC will always stay within the same BSCC. In addition, a Markov chain $MC_{\mathcal{\mathcal{M}}}^{\boldsymbol{\pi}}$ is a sub-MDP of $\mathcal{P}$ induced by a policy $\boldsymbol{\pi}$, and its evolution is restricted by the policy $\boldsymbol{\mu}$.
\end{rem}

\begin{defn}
\cite{Baier2008} A sub-MDP $\mathcal{M}{}_{\left(S',A'\right)}$
is called an end component (EC) of $\mathcal{M}$ if it's a BSCC. An EC $\mathcal{M}{}_{\left(S',A'\right)}$ is called a maximal end
component (MEC) if there is no other EC $\mathcal{M}{}_{\left(S'',A''\right)}$
such that $S'\subseteq S''$ and $A'\left(s\right)\subseteq A''\left(s\right)$,
$\forall s\in S$.
\end{defn}

\subsection{LTL and Limit-Deterministic B\"uchi Automaton}

Linear temporal logic (LTL) is a formal language to describe the high-level specifications of a system. The ingredients of an LTL formula are a set of atomic propositions and combinations of several Boolean and temporal operators. The syntax of an LTL formula is defined inductively as
\begin{equation*}
        \phi   :=  \text{True} \mid a \mid \phi_1 \land \phi_2 \mid \lnot \phi \mid \next\phi \mid \phi_1 \UntilOp \phi_2\:, 
\end{equation*}

where $a\in\AP$ is an atomic proposition, $\text{True}$, negation $\lnot$, and conjunction $\land$ are propositional logic operators, and next $\next$ and until $\UntilOp$ are  temporal operators. The satisfaction relationship is denoted as $\models$. The semantics of an LTL formula are interpreted over words, which is an
infinite sequence $o=o_{0}o_{1}\ldots$ where $o_{i}\in2^{\AP}$ for
all $i\geq0$, and $2^{\AP}$ represents the power set of $\AP$, which are defined as:

\begin{equation*}
\arraycolsep=1.4pt
\begin{array}{lcl}
\boldsymbol{o} \models \text{true}  \\
\boldsymbol{o} \models \alpha  & \Leftrightarrow & \alpha\in  L(\boldsymbol{o}[0])  \\
\boldsymbol{o} \models \phi_{1}\land\phi_{2} &  \Leftrightarrow & \boldsymbol{o} \models \phi_{1} \text{ and } \boldsymbol{o} \models \phi_{2}  \\
\boldsymbol{o} \models \lnot\phi  & \Leftrightarrow & \boldsymbol{o} \mid\neq\phi  \\
\boldsymbol{o} \models \next\phi  & \Leftrightarrow & \boldsymbol{o}[1:] \models\phi  \\
\boldsymbol{o} \models \phi_1 \UntilOp \phi_2  & \Leftrightarrow & \exists t \text{ s.t. }\boldsymbol{o}[t:]\models\phi_{2}, \forall t'\in [0,t),  \boldsymbol{o}[t':]\models\phi_{1}  \\
\end{array} 
\end{equation*}

Alongside the standard operators introduced above, other propositional logic operators such as $\text{false}$, disjunction $\lor$, implication $\rightarrow$, and  temporal operators always $\Always$, eventually $\Eventually$ can be derived as usual. Thus an LTL formula describes a set of infinite traces through $S$.
Given an LTL formula that specifies the missions, its satisfaction can be evaluated by a limit deterministic B\"uchi
automaton (LDBA) \cite{Hahn2013,Sickert2016}.
\begin{defn}
	\label{def:LDBA} An LDBA is a tuple $\mathcal{A}=\left(Q,\Sigma\cup\left\{ \epsilon\right\},\delta,q_{0},F\right)$, 
	where $Q$ is a finite set of states, $\Sigma=2^{\boldsymbol{\AP}}$ is a finite
	alphabet, $\left\{ \epsilon\right\}$ is a set of indexed epsilons, each of which is enabled for one $\epsilon-$transition, $\delta\colon Q\times\left(\Sigma\cup\left\{ \epsilon\right\} \right)\shortrightarrow2^{Q}$
	is a transition function, $q_{0}\in Q$ is an initial state, and $F$
	is a set of accepting states. The states $Q$ can be partitioned into a deterministic set $Q_{D}$
	and a non-deterministic set $Q_{N}$, i.e., $Q=Q_{D}\cup Q_{N}$,
	where
	\begin{itemize}
		\item the state transitions in $Q_{D}$ are total and restricted within
		it, i.e., $\bigl|\delta(q,\alpha)\bigr|=1$ and $\delta(q,\alpha)\subseteq Q_{D}$
		for every state $q\in Q_{D}$ and $\alpha\in\Sigma$,
		\item the $\epsilon$-transitions  are only defined for state transitions from $Q_{N}$ to $Q_{D}$, and are not allowed in the deterministic set
		i.e., for any $q\in Q_{D}$, $\delta(q,\epsilon)=\emptyset,\forall\epsilon\in\left\{ \epsilon\right\}$,
		\item the accepting states are only in the deterministic set, i.e., $F\subseteq Q_{D}$.
	\end{itemize}	
\end{defn}
An $\epsilon-$transition allows an automaton to change its state without reading any atomic proposition. The run $\boldsymbol{q}=q_{0}q_{1}\ldots$
is accepted by the LDBA, if it satisfies the B\"uchi condition, i.e.,
$\inf\left(\boldsymbol{q}\right)\cap F\neq\emptyset$, where $\inf\left(\boldsymbol{q}\right)$
denotes the set of states that is visited infinitely often. As discussed
in \cite{Vardi1985}, the probabilistic verification of automaton
does not need to be fully deterministic. In other words, the automata-theoretic
approach still works if the restricted forms of non-determinism are allowed.
Therefore, LDBA has been applied for the qualitative and quantitative
analysis of MDPs \cite{Vardi1985,Courcoubetis1995,Hahn2013,Sickert2016}.
To convert an LTL formula to an LDBA, readers are referred to \cite{Hahn2013}. In the following analysis,
we use $\mathcal{A}_{\phi}$ to denote the LDBA corresponding to an
LTL formula $\phi$. 

\section{Problem Statement\label{sec:Problem-Statement}}

Consider an LTL task specification $\phi$ over $\boldsymbol{\pi}$ and a PL-MDP
$\mathcal{M}=\left(S,A,p_{S},\left(s_{0},l_{0}\right),\boldsymbol{\pi},L,p_{L},c_{A}\right)$.
It is assumed that the agent can sense its current state and the associated
labels. $\boldsymbol{\mu}(s_{k}, \mu_{k})$ represents the probability of selecting the control input $\mu_{k}$ at time $k$ for state $s_{k}$ using policy $\boldsymbol{\mu}$. The agent's path $\boldsymbol{s}_{\infty}^{\boldsymbol{\mu}}=s_{0}l_{0}\ldots s_{i}l_{i}s_{i+1}l_{i+1}\ldots$
under a control sequence $\boldsymbol{\mu}_{\infty}=\mu_{0}\mu_{1}\ldots$ is generated based on policy $\boldsymbol{\mu}$ such that
$s_{i+1}\in\left\{ s\in S\bigl|p_{S}\left(s_{i},\mu_{i},s\right)>0\right\} $, $\boldsymbol{\mu}(s_{i}, \mu_{i})>0$,  and $l_{i}\in L\left(s_{i}\right)$ with $p_{L}\left(s_{i},l_{i}\right)>0$.
Let $L\left(\boldsymbol{s}_{\infty}^{\boldsymbol{\mu}}\right)=l_{0}l_{1}\ldots l_{i}l_{i+1}\ldots$
be the sequence of labels associated with $\boldsymbol{s}_{\infty}^{\boldsymbol{\mu}}$, and
denote by $L\left(\boldsymbol{s}_{\infty}^{\boldsymbol{\mu}}\right)\models\phi$
	if $\boldsymbol{s}_{\infty}^{\boldsymbol{\mu}}$ satisfies $\phi$. The probability measurement of a run $\boldsymbol{s}_{\infty}^{\boldsymbol{\mu}}$ can be uniquely obtained by
	
	\begin{equation}
	    \mathbb{\Pr}_{\mathcal{M}}^{\boldsymbol{\mu}}(\boldsymbol{s}_{\infty}^{\boldsymbol{\mu}})=\stackrel[i=0]{n}{\prod}p_{L}(s_{i},l_{i})\cdot p_{S}(s_{i},\boldsymbol{\mu}_{i}(s_{i}),s_{i+1})\cdot \boldsymbol{\mu}(s_{i}, \mu_{i}).
	\end{equation}

	Then, the satisfaction probability under $\boldsymbol{\mu}$ from an initial
	state $s_{0}$ can be computed as
	\begin{equation}
	\mathbb{\Pr}_{\mathcal{M}}^{\boldsymbol{\mu}}(\phi)=\mathbb{\Pr}_{\mathcal{M}}^{\boldsymbol{\mu}}\left(\boldsymbol{s}_{\infty}^{\boldsymbol{\mu}}\in\boldsymbol{S}_{\infty}^{\boldsymbol{\mu}} \left.\right|L(\boldsymbol{s}_{\infty}^{\boldsymbol{\mu}})\models\phi\right),\label{eq:SatisProb}
	\end{equation}
	where $\boldsymbol{S}_{\infty}^{\boldsymbol{\mu}}$ is
	a set of all admissible paths under policy $\boldsymbol{\mu}$.
\begin{defn}
	\label{def:feasibility}	
	Given a PL-MDP $\mathcal{M}$, an LTL task
	$\phi$ is fully feasible if and only if $\mathbb{\Pr}_{\mathcal{M}}^{\boldsymbol{\mu}}\left(\phi\right)>0$ s.t. there exists a path $\boldsymbol{s}_{\infty}^{\boldsymbol{\mu}}$ over the infinite horizons under the policy $\boldsymbol{\mu}$ satisfying $\phi$.
\end{defn}	

Note that according to Def. \ref{def:feasibility}, an infeasible case means there exist no policies to satisfy the task, which can be interpreted as $\mathbb{\Pr}_{\mathcal{M}}^{\boldsymbol{\mu}}(\phi)=0$.

\begin{defn}
The expected average execution cost per stage (AEPS) of a PL-MDP $\mathcal{M}$ under the policy $\boldsymbol{\mu}$ is defined as
\begin{equation}
{J_{E}}(\mathcal{M}^{\boldsymbol{\mu}})=\mathbb{E}_{\mathcal{M}}^{\boldsymbol{\mu}}\left[\underset{n\rightarrow\infty}{\lim\sup}\frac{1}{n}\stackrel[i=0]{n}{\sum}c_{A}\left(s_{i},a_{i}\right)\right],\label{eq:ACPS}
\end{equation}
where $a_{i}$ is the action generated based on the policy $\boldsymbol{\mu}(s_{i})$.
\end{defn}	
	
A common objective in the literature is to find a policy $\boldsymbol{\mu}$
such that $\mathbb{\Pr}_{\mathcal{M}}^{\boldsymbol{\mu}}\left(\phi\right)$
is greater than the desired satisfaction probability while minimizing
the expected AEPS. However, when operating in a real-world environment with uncertainties in the dynamic system, the user-specified mission $\phi$
might not be fully feasible, resulting in $\mathbb{\Pr}_{\mathcal{M}}^{\boldsymbol{\mu}}\left(\phi\right)=0$
since there may not exist a path $\boldsymbol{s}_{\infty}^{\boldsymbol{\mu}}$
such that $L\left(\boldsymbol{s}_{\infty}^{\boldsymbol{\mu}}\right)\models\phi$.

\begin{example}	
	\label{ex:NoAMECs} Fig. \ref{example1} considers the properties of interests $\AP=\left\{\mathtt{Base1},\mathtt{Base2}, \mathtt{Obs}\right\}$ that label the environment and represent the regions of Base 1, Base 2, and obstacles, respectively. A robot is tasked to always eventually visit Base 1 and Base 2 while avoiding obstacles. The task can be expressed as an LTL formula $\phi_{\text{example}}=\Always\Eventually\mathtt{Base1}\land\Always\Eventually\mathtt{Base2}\land\lnot\mathtt{Obs}$.
	The labels of cells are assumed to be probabilistic, e.g., $\mathtt{Obs}:0.5$
	indicates that the likelihood of a cell occupied by an obstacle is
	0.5. 
	To model the motion uncertainty, the robot is allowed to transit between adjacent cells or stay in a cell with a set of actions $\left\{ \mathtt{Up,Right,Down,Left},\mathtt{Stay}\right\} $, and the cost of each action is equal to $2$. As shown in Fig. \ref{example1} (a), it's assumed to successfully take the desired action with
    a probability of $0.8$, and there is a probability of $0.2$ to take other perpendicular actions following a uniform distribution. There are no motion uncertainties for the action of "$\text{Stay}$".
    
	Fig. \ref{example1} (a) represents
	an infeasible case, where $\mathtt{Base}$ 1 is surrounded
	by obstacles and thus cannot be visited, while $\mathtt{Base}$
	2 is always accessible. Hence, it is desirable that the robot can revise its motion
	planning to mostly fulfill the given task (e.g., visit only $\mathtt{Base}$
	2 instead) whenever the task over an environment is found to be infeasible. Furthermore, it is essential to analyze the probabilistic violation of two different policies, as shown in Fig. \ref{example1} (b), due to the environment uncertainties. The generated trajectories have different probabilities of colliding with obstacles and result in different violation costs.
		\begin{figure}
		\centering{}\includegraphics[scale=0.28]{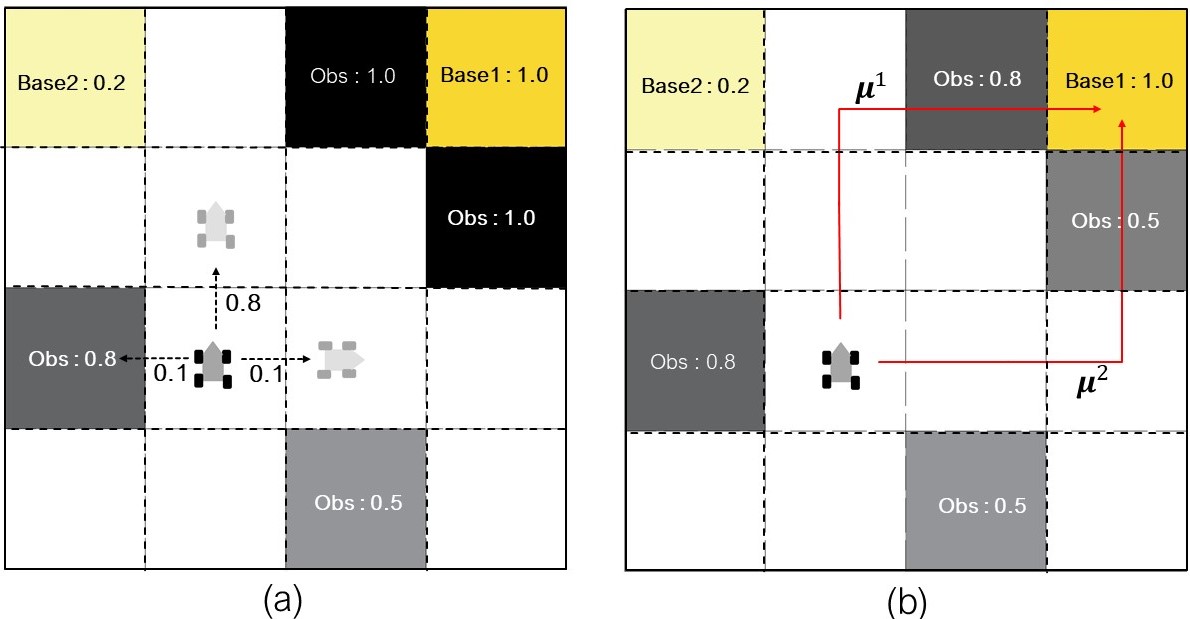}\caption{\label{example1} Example of environments where the LTL task is $\phi_{\text{example}}=\Always\Eventually\mathtt{Base1}\land\Always\Eventually\mathtt{Base2}\land\lnot\mathtt{Obs}$, and Base 1 is surrounded by obstacles with different probabilities (infeasible). (a) Motion uncertainties and inaccessible Base 1. (b) Base 1 can be visited with different risks under two different policies.}
	\end{figure}
\end{example}

As a result, to consider both feasible and infeasible tasks, a violation of task satisfaction can be defined as follows.

\begin{defn}
	\label{def:infeasibility}	
	Given a PL-MDP $\mathcal{M}$ and an LTL task $\phi$, the expected average violation cost per stage (AVPS) under the policy $\boldsymbol{\mu}$ is defined as
\begin{equation}
{J_{V}}(\mathcal{M}^{\boldsymbol{\mu}},\phi)=\mathbb{E}_{\mathcal{M}}^{\boldsymbol{\mu}}\left[\underset{n\rightarrow\infty}{\lim\inf}\frac{1}{n}\stackrel[i=0]{n}{\sum}c_{V}(s_{i}, a_{i},s_{i+1},\phi)\right],\label{eq:AVPS1}
\end{equation}
where $c_{V}(s,a,s',\phi)$ is defined as the violation cost of a transition $(s,a,s')$ with respect to $\phi$, and $a_{i}$ is the action generated based on the policy $\boldsymbol{\mu}(s_{i})$.
\end{defn}

Motivated by these challenges, the problem considered in this work
is stated as follows.

\begin{problem}
\label{Prob1}Given an LTL task $\phi$ and a PL-MDP
	$\mathcal{M}$, the goal is to find an optimal finite
	memory policy $\boldsymbol{\boldsymbol{\boldsymbol{\boldsymbol{\boldsymbol{\mu}}}}}$ from the initial state and achieve the multiple objectives with a decreasing order of priority:
	1) if $\phi$ is fully feasible, $\mathbb{\Pr}_{\mathcal{M}}^{\boldsymbol{\boldsymbol{\boldsymbol{\boldsymbol{\boldsymbol{\mu}}}}}}\left(\phi\right)\geq\gamma$, where $\gamma\in (0,1]$ is the desired satisfaction
	probability; 2) if $\phi$ is partially feasible, i.e., $\mathbb{\Pr}_{\mathcal{M}}^{\boldsymbol{\boldsymbol{\boldsymbol{\boldsymbol{\boldsymbol{\mu}}}}}}\left(\phi\right)=0$, minimizing AVPS $J_{V}(\mathcal{M}^{\boldsymbol{\mu}},\phi)$ to satisfy $\phi$ as much as possible; 3) minimizing AEPS $J_{E}(\mathcal{M}^{\boldsymbol{\mu}})$ over the infinite horizons.
\end{problem}

 Due to the consideration of infeasible
	cases, by saying to satisfy $\phi$ as much as possible in Problem \ref{Prob1}, we propose a relaxed structure and its expected average violation function to
	quantify how much the motion generated from a revised policy deviates from the desired task
	$\phi$ and minimize such a deviation. The concrete description of $c_{V}$ is introduced
	in Section \ref{subsec:Relax_PMDP}.

\section{Relaxed Product MDP Analysis}

First, Section \ref{subsec:Pro_MDP} presents the construction of LDBA-based probabilistic
product MDP. Then Section \ref{subsec:Relax_PMDP} synthesizes how it can be relaxed to handle
infeasible LTL constraints, and we concretely introduce the violation measurement of infeasible cases. Finally, the
properties of the relaxed product MDP are discussed in Section \ref{subsec:Property_relax}, which can be utilized to generate the optimal policy.

\subsection{LDBA-based Probabilistic Product MDP\label{subsec:Pro_MDP}}

We first present the definition of LDBA-based probabilistic product MDP.
\begin{defn}
	\label{def:P-MDP}Given a PL-MDP $\mathcal{M}$ and an LDBA $\ensuremath{\mathcal{A}_{\phi}}$,
	the product MDP is defined as a tuple $\mathcal{P}=\left(X,U^{\mathcal{P}},p^{\mathcal{P}},x_{0},\Acc,c_{A}^{\mathcal{P}}\right)$,
	where $X=S\times2^{\AP}\times Q$ is the set of labeled states s.t. $X=\left\{(s,l,q)\bigl|s\in S, l\in L(s), q\in Q\right\}$; $U^{\mathcal{P}}=A\cup\left\{ \epsilon\right\} $
	is the set of actions where the $\epsilon$-transitions of LDBA are regarded as actions; $x_{0}=\left(s_{0},l_{0},q_{0}\right)$ is the initial
	state; $\Acc=\left\{ \left(s,l,q\right)\in X\bigl|q\in F\right\} $
	is the set of accepting states; the cost of taking an action $u^{\mathcal{P}}\in U^{\mathcal{P}}$
	at $x=\left(s,l,q\right)$ is defined as $c_{A}^{\mathcal{P}}\left(x,u^{\mathcal{P}}\right)=c_{A}\left(s,a\right)$
	if $u^{\mathcal{P}}=a\in A\left(s\right)$ and $c_{A}^{\mathcal{P}}\left(x,u^{\mathcal{P}}\right)=0$
	otherwise;
	the transition function $p^{\mathcal{P}}:X\times U^{\mathcal{P}}\times X\shortrightarrow\left[0,1\right]$
	is defined as: for $x'=\left(s',l',q'\right)$ in $X$, 1) $p^{\mathcal{P}}\left(x,u^{\mathcal{P}},x'\right)=p_{L}\left(s',l'\right)\cdotp p_{S}\left(s,a,s^{\prime}\right)$
	if $\delta\left(q,l\right)=q^{\prime}$ and $u^{\mathcal{P}}=a\in A\left(s\right)$,
	2) $p^{\mathcal{P}}\left(x,u^{\mathcal{P}},x'\right)=1$ if $\ensuremath{u^{\mathcal{P}}\in\left\{ \epsilon\right\} }$,
	$q'\in\delta\left(q,\epsilon\right)$, and $\left(s',l'\right)=\left(s,l\right)$,
	and 3) $p^{\mathcal{P}}\left(x,u^{\mathcal{P}},x'\right)=0$ otherwise.
\end{defn}
Let $\boldsymbol{\pi}_{\mathcal{P}}$ denote the policy over $\mathcal{P}$. The product MDP  $\mathcal{P}$ captures the intersections between
all feasible paths over $\mathcal{M}$ and all words accepted to $\ensuremath{\mathcal{A}}_{\phi}$,
facilitating the identification of admissible motions that satisfy
the task $\phi$. The path $\boldsymbol{x}_{\infty}^{\boldsymbol{\pi}_{\mathcal{P}}}=x_{0}\ldots x_{i}x_{i+1}\ldots$
under a policy $\boldsymbol{\pi}_{\mathcal{P}}$
is accepted if $\inf\left(\boldsymbol{x}_{\infty}^{\boldsymbol{\pi}_{\mathcal{P}}}\right)\cap\Acc\neq\emptyset$.
If a sub-product MDP $\mathcal{P}'_{\left(X',U'\right)}$ is an MEC of $\mathcal{P}$ and $X'\cap\Acc\neq\emptyset$, $\mathcal{P}'_{\left(X',U'\right)}$
is called an accepting maximum end component (AMEC) of $\mathcal{P}$. Details of generating AMEC for a product MDP can be found in \cite{Baier2008}. Note synthesizing the AMECs doesn't require finding a set of policies that restrict the selections of actions for each state.

Denote by $\Xi_{acc}=\left\{ \Xi_{acc}^{i},i=1\ldots n_{acc}^{\mathcal{P}}\right\} $
the set of all AMECs of $\mathcal{P}$, where $\Xi_{acc}^{i}=\mathcal{P}'_{\left(X_{i}',U_{i}'\right)}$
with $X_{i}'\subseteq X$ and $U_{i}'\subseteq U^{\mathcal{P}}$ and
$n_{acc}^{\mathcal{P}}$ is the number of AMECs in $\mathcal{P}$.
Satisfying the LTL task $\phi$ is equivalent to finding a policy $\boldsymbol{\pi}_{\mathcal{P}}$ that drives the agent enter into one of an AMEC $\Xi^{i}_{acc}$ in $\mathcal{P}$. Based on that, we can define the feasibility over product MDP.

\begin{lem}
    Given a product MDP $\mathcal{P}$ constructing from a PL-MDP $\mathcal{M}$ and $\ensuremath{\mathcal{A}}_{\phi}$, the LTL task is fully feasible if and only if there exists at least one AMEC in $\mathcal{P}$ \cite{Baier2008}.
\end{lem}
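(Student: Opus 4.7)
The plan is to prove the two implications separately, exploiting the correspondence between accepting runs on $\mathcal{P}$ and BSCCs of induced Markov chains that intersect $\Acc$.

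For the backward direction ($\Leftarrow$), assume there exists an AMEC $\Xi_{acc}^{i}=\mathcal{P}'_{(X'_i,U'_i)}$ reachable from $x_{0}$. I would first construct a policy $\boldsymbol{\pi}_{\mathcal{P}}$ that drives the system from $x_{0}$ into some $x'\in X'_i$ with positive probability, using any reachability policy on the underlying directed graph of $\mathcal{P}$. Once inside the AMEC, switch to a stationary stochastic policy that assigns positive probability to every action in $U'_i(x)$ at each $x\in X'_i$. Because $\Xi_{acc}^{i}$ is an MEC, its induced sub-graph under $U'_i$ is strongly connected and closed, so the resulting Markov chain on $X'_i$ is irreducible; hence every state, and in particular every accepting state of the non-empty set $X'_i\cap\Acc$, is visited infinitely often almost surely. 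This yields $\inf(\boldsymbol{x}_\infty^{\boldsymbol{\pi}_{\mathcal{P}}})\cap\Acc\neq\emptyset$ with positive probability, which by the product-MDP construction lifts back to a path $\boldsymbol{s}_\infty^{\boldsymbol{\mu}}$ on $\mathcal{M}$ with $L(\boldsymbol{s}_\infty^{\boldsymbol{\mu}})\models\phi$, i.e., $\mathbb{\Pr}_{\mathcal{M}}^{\boldsymbol{\mu}}(\phi)>0$.

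For the forward direction ($\Rightarrow$), suppose $\phi$ is fully feasible, so some policy $\boldsymbol{\mu}$ satisfies $\mathbb{\Pr}_{\mathcal{M}}^{\boldsymbol{\mu}}(\phi)>0$. Satisfying paths lift through the product construction to accepting runs $\boldsymbol{x}_\infty^{\boldsymbol{\pi}_{\mathcal{P}}}$ with $\inf(\boldsymbol{x}_\infty^{\boldsymbol{\pi}_{\mathcal{P}}})\cap\Acc\neq\emptyset$, which carry the same positive probability in the induced Markov chain $MC_{\mathcal{P}}^{\boldsymbol{\pi}_{\mathcal{P}}}$. A standard Markov chain argument shows the trajectory is absorbed into some BSCC almost surely, so with positive probability it enters a BSCC $B$ whose state set intersects $\Acc$. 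Taking $U_B(x)$ to be the set of actions assigned positive mass by $\boldsymbol{\pi}_{\mathcal{P}}$ at each $x\in B$, the pair $(B,U_B)$ forms an end component of $\mathcal{P}$. Enlarging this EC to a containing MEC preserves the non-empty intersection with $\Acc$, yielding an AMEC.

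The main obstacle is the forward direction, specifically verifying that $(B,U_B)$ is a genuine end component, i.e., that every action in $U_B(x)$ keeps the process inside $B$; this follows from the definition of BSCC, since any positively supported transition leaving $B$ would contradict $B$ being bottom. The remaining ingredients (lifting policies across the product, irreducibility of the round-robin induced chain on an AMEC, and the EC-to-MEC enlargement) are standard product-MDP machinery from \cite{Baier2008}, so I expect the paper simply to cite them rather than reprove them in detail.
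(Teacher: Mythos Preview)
Your proposal is correct and follows the standard argument from \cite{Baier2008}. Note, however, that the paper does not supply its own proof of this lemma at all: it is stated as a known result and simply attributed to \cite{Baier2008}, so there is no in-paper argument to compare against. Your reconstruction---reachability into an AMEC plus a uniformly randomized policy for ($\Leftarrow$), and the BSCC-of-the-induced-chain argument lifting to an end component for ($\Rightarrow$)---is exactly the textbook route, and your closing remark that the paper would ``simply cite'' this machinery is accurate. One small point worth flagging: the lemma as written in the paper says only ``there exists at least one AMEC in $\mathcal{P}$'' without explicitly requiring reachability from $x_0$, whereas your ($\Leftarrow$) argument (correctly) needs the AMEC to be reachable; this is a mild imprecision in the paper's statement rather than a flaw in your proof.
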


As a result, if an LTL task is feasible with respect to the PL-MDP model, there exits at least one AEMC in corresponding to the product MDP, and satisfying the task $\phi$ is equivalent to reaching
an AMEC in $\Xi_{acc}$. For the cases that AMECs do not exist in $\mathcal{P}$, most existing works \cite{Baier2008, Randour(1)2015,chatterjee2011, Ding2014a}, and the work
of \cite{Guo2018} considered accepting strongly connected components
(ASCC) to minimize the probability of entering bad system states.
However, there is no guarantee that the agent will stay within an
ASCC to yield satisfactory performance, especially when the probability
of entering bad system states is large. Also, the existence
of ASCC is based on the existence of an accepting path, returns no solution for the case of Fig. \ref{example1} (b). Moreover, for the infeasible cases, the work \cite{Guo2018} needs first to check the existence of AMECs and then formulate ASCCs, whereas generating of AMECs is computationally expensive. In contrast, this frame designs a relaxed product MDP in the following, which allows us to apply its AMECs addressing both feasible and infeasible cases.

\subsection{Relaxed Probabilistic Product MDP\label{subsec:Relax_PMDP}}

For the product MDP $\mathcal{P}$ in Def. \ref{def:P-MDP},
the satisfaction of $\phi$ is based on the assumption that there
exists at least one AMEC in $\mathcal{P}$. However, such an assumption can not always be true in practice. To address this challenge, the relaxed product MDP is designed
to allow the agent to revise its motion plan whenever 
the desired LTL constraints cannot be strictly followed.
\begin{defn}
	\label{def:relaxed-product} The relaxed product MDP is constructed
	from $\mathcal{P}$ as a tuple $\mathcal{R}=\left(X,U^{\mathcal{R}},p^{\mathcal{R}},x_{0},\Acc,c_{A}^{\mathcal{R}},c_{V}^{\mathcal{R}}\right)$
	, where
	\begin{itemize}
	    \item $X$, $x_{0}$, and $\Acc$ are the same as in $\mathcal{P}$.
	    
	    \item $U^{\mathcal{R}}$ is the set of extended actions that are extended to jointly consider the actions of $\mathcal{M}$ and
	the input alphabet of $\ensuremath{\mathcal{\mathcal{A}_{\phi}}}$. Specifically, given a state $x=\left(s,l,q\right)\in X$,
	the available actions are $U^{\mathcal{R}}\left(x\right)=\left\{ \left(a,\iota\right)\bigl|a\in A\left(s\right),\iota\in\left(2^{\AP}\cup\left\{\epsilon\right\}\right)\right\} $.
	Given an action $u^{\mathcal{R}}=\left(a,\iota\right)\in U^{\mathcal{R}}\left(x\right)$,
	the projections of $u^{\mathcal{R}}$ to $A\left(s\right)$ in $\mathcal{M}$
	and to $2^{\AP}\cup\left\{ \epsilon\right\} $ in $\mathcal{A}_{\phi}$
	are denoted by $u\bigr|_{\mathcal{M}}^{\mathcal{\mathcal{R}}}$ and
	$u\bigr|_{\ensuremath{\mathcal{A}}}^{\mathcal{R}}$, respectively.
	
	$\text{ }$
	
	\item $p^{\mathcal{R}}:X\times U^{\mathcal{R}}\times X\shortrightarrow\left[0,1\right]$
	is the transition function. The transition probability $p^{\mathcal{R}}$ from a state $x=\left(s,l,q\right)$
	to a state $x'=\left(s',l',q'\right)$ is defined as: 1) $p^{\mathcal{R}}\left(x,u^{\mathcal{R}},x'\right)=p_{L}\left(s',l'\right)\cdotp p_{S}\left(s,a,s^{\prime}\right)$
	with $a=u\bigr|_{\mathcal{M}}^{\mathcal{\mathcal{R}}}$, if $q$ can
	be transited to $q^{\prime}$ and $u\bigr|_{\ensuremath{\mathcal{A}}}^{\mathcal{R}}\neq\epsilon$
	and $\ensuremath{\mathcal{\delta}}\left(q,u\bigr|_{\ensuremath{\mathcal{A}}}^{\mathcal{R}}\right)=q'$;
	2) $p^{\mathcal{R}}\left(x,u^{\mathcal{R}},x'\right)=1$, if $u\bigr|_{\ensuremath{\mathcal{A}}}^{\mathcal{R}}=\epsilon$,
	$q'\in\delta\left(q,\epsilon\right)$, and $\left(s',l'\right)=\left(s,l\right)$;
	3) $p^{\mathcal{R}}\left(x,u^{\mathcal{R}},x'\right)=0$ otherwise.
	Under an action $u^{\mathcal{R}}\in U^{\mathcal{R}}\left(x\right)$,
	it holds that $\sum_{x'\in X}p^{\mathcal{R}}\left(x,u^{\mathcal{R}},x'\right)=1$.
	
	$\text{ }$
	
	\item $c_{V}^{\mathcal{R}}:X\times U^{\mathcal{R}}\shortrightarrow\mathbb{R}$ is the execution cost. Given a state $x$ and an action $u^{\mathcal{R}}$, the execution
	cost
	is defined as
	\[
	c_{A}^{\mathcal{R}}(x,u^{\mathcal{R}})=\left\{ \begin{array}{cc}
	c_{A}\left(s,a\right) & \text{ if \ensuremath{u\bigr|_{\mathcal{M}}^{\mathcal{\mathcal{R}}}\in A\left(s\right),} }\\
	0 & \text{otherwise.}
	\end{array}\right.
	\]
	$\text{ }$
	
	\item $c_{V}^{\mathcal{R}}:X\times U^{\mathcal{R}}\times X\shortrightarrow\mathbb{R}$
	is the violation cost.
	The violation cost of the transition from $x=\left(s,l,q\right)$
	to $x'=\left(s',l',q'\right)$ under an action $u^{\mathcal{R}}$
	is defined as
	\[
	c_{V}^{\mathcal{R}}\left(x,u^{\mathcal{R}},x'\right)=\left\{ \begin{array}{cc}
	p_{L}\left(s',l'\right)\cdotp w_{V}\left(x,x'\right) & \text{ if \ensuremath{u\bigr|_{\ensuremath{\mathcal{A}}}^{\mathcal{R}}\neq\epsilon}, }\\
	0 & \text{otherwise},
	\end{array}\right.
	\]
	where $w_{V}\left(x,x'\right)=\Dist\left(L\left(s\right),\ensuremath{\mathcal{X}}\left(q,q^{\prime}\right)\right)$
	with $\ensuremath{\mathcal{X}}\left(q,q'\right)=\left\{ l\in2^{\boldsymbol{\pi}}\left|q\overset{l}{\shortrightarrow}q'\right.\right\} $
	being the set of input alphabets that enables the transition from
	$q$ to $q^{\prime}$. Borrowed from \cite{Guo2015}, the function
	$\Dist\left(L\left(s\right),\ensuremath{\mathcal{X}}\left(q,q^{\prime}\right)\right)$
	measures the distance from $L\left(s\right)$ to the set $\ensuremath{\mathcal{X}}\left(q,q^{\prime}\right)$.
	\end{itemize}
	
\end{defn}

\begin{rem}
Given a PL-MDP $\mathcal{M}$ and $A_{\phi}$, the relaxed product MDP $\mathcal{R}$ holds the same state space as the corresponding product MDP $\mathcal{P}$. The main difference compared with $\mathcal{P}$ is that the $\mathcal{R}$ has a different action space with revised transition conditions so that $\mathcal{R}$ has a more connected structure. In addition, we propose the violation cost for each transition to measure the AVPS over the relaxed product model $\mathcal{R}$. The complexity analysis of applying the relaxed product MDP is discussed in Section \ref{sec:complexity}. Note that the environment uncertainties influence the transition probabilities of a relaxed product MDP, and in turn, affect the probabilities of entering into AMECs.
\end{rem}

The weighted violation function $w_{V}\left(x,x'\right)$ quantifies
how much the transition from $x$ to $x'$ in a product automaton
violates the constraints imposed by $\phi$. It holds that $c_{V}^{\mathcal{R}}\left(x,u^{\mathcal{R}},x'\right)=0$
if $p^{\mathcal{P}}\left(x,u^{\mathcal{P}},x'\right)\neq0$, since
a non-zero $p^{\mathcal{P}}\left(x,u^{\mathcal{P}},x'\right)$ indicates
either $\delta\left(q,L\left(s\right)\right)=q'$ or $\delta\left(q,\epsilon\right)=q'$,
leading to $w_{V}\left(x,x'\right)=0$. Let $\boldsymbol{\pi}$ denote the policy of $\mathcal{R}$. 
Consequently, we can transform the measurement of AEPS, and AVPS $J_{V}(\mathcal{M}^{\boldsymbol{\mu}},\phi)$ from PL-MDP $\mathcal{M}$ into $\mathcal{R}$. 

\begin{defn}
Given a relaxed product MDP $\mathcal{R}$ generated from a PL-MDP $\mathcal{M}$ and an LDBA $A_{\phi}$, the AEPS of $\mathcal{R}$ under policy $\boldsymbol{\pi}$ can be defined as:
\begin{equation}
J_{E}(\mathcal{R}^{\boldsymbol{\pi}})=\mathbb{E}_{\mathcal{R}}^{\boldsymbol{\boldsymbol{\boldsymbol{\pi}}}}\left[\underset{n\rightarrow\infty}{\lim\inf}\frac{1}{n}\stackrel[i=0]{n}{\sum}c_{A}^{\mathcal{R}}(x_{i},u_{i}^{\mathcal{R}})\right].\label{eq: AEPS}
\end{equation}
Similarly, the AVPS of $\mathcal{R}$ can be reformulated as: 
\begin{equation}
J_{V}(\mathcal{R}^{\boldsymbol{\pi}})=\mathbb{E}_{\mathcal{R}}^{\boldsymbol{\boldsymbol{\boldsymbol{\pi}}}}\left[\underset{n\rightarrow\infty}{\lim\inf}\frac{1}{n}\stackrel[i=0]{n}{\sum}(c_{V}^{\mathcal{R}}\left(x_{i},u_{i}^{\mathcal{R}},x_{i+1}\right))\right].\label{eq: AVPS}
\end{equation}
\end{defn}

Hence, $J_{V}(\mathcal{R}^{\boldsymbol{\pi}})$
can be applied to measure how much $\phi$ is satisfied in Problem
\ref{Prob1}.
It should be pointed out that the violation cost $c_{V}^{\mathcal{R}}$
	jointly considers the probability of an event $p_{L}\left(s',l'\right)$
	and the violation of the desired $\phi$. For instance, Fig. \ref{example1} (b) shows the trajectories generated from two different policies that traverse regions labeled $\text{Obs}$ with different probabilities. It's obvious that the task of infinitely visiting $\text{Base1}$ and $\text{Base2}$ is infeasible. The paths induced from different policies hold different AVPSs for partial satisfaction. Consequently, a large cost
	$c_{V}^{\mathcal{R}}$ can occur if $p_{L}\left(s',l'\right)$
	is close to 1 (e.g., an obstacle appears with high probability), or
	the violation $w_{V}$ is large, or both are large. Hence, minimizing
	the AVPS will not only bias the planned path
	towards more fulfillment of $\phi$ by penalizing $w_{V}$, but also
	towards more satisfaction of mission operation (e.g., reduce the risk
	of mission failures by avoiding areas with high probability obstacles).
	This idea is illustrated via simulations in Case 2 in Section
	\ref{sec:Case}.

\subsection{Properties of Relaxed Product MDP\label{subsec:Property_relax}}

Given an LTL formula $\phi$ and a PL-MDP $\mathcal{M}$, this section verifies properties of the designed relaxed product
MDP $\mathcal{R}$, which can be applied to solve feasible cases where there exists at least one policy $\boldsymbol{\mu}$ such that $\mathbb{\Pr}_{\mathcal{M}}^{\boldsymbol{\mu}}\left(\phi\right)>0$, and infeasible cases where $\mathbb{\Pr}_{\mathcal{M}}^{\boldsymbol{\mu}}\left(\phi\right)=0$ for any policy $\boldsymbol{\mu}$.
Based on definition \ref{def:relaxed-product}, the relaxed product MDP $\mathcal{R}$ and its corresponding product MDP $\mathcal{P}$ have the same states. Hence, we can regard $\mathcal{R}$ and $\mathcal{P}$ as two separate directed graphs. Let ABSCC denote the BSCC that contains at least one accepting state in $\mathcal{P}$ or $\mathcal{R}$.
\begin{thm}
	\label{Thm_Properties}Given a PL-MDP $\mathcal{M}$ and an LDBA automaton
	$\mathcal{A}_{\phi}$ corresponding to the desired LTL task specification
	$\phi$, the relaxed product MDP $\mathcal{\mathcal{R}=M}\otimes\mathcal{A}_{\phi}$ and corresponding product MDP $\mathcal{P}$
	have the following properties:
	\begin{enumerate}
		\item the directed graph of traditional product $\mathcal{P}$ is sub-graph of
		 the directed graph of $\mathcal{R}$,
		\item there always exists at least one AMEC in $\mathcal{R}$,
		\item if the LTL formula $\phi$ is feasible over $\mathcal{M}$, any direct graph of AMEC of $\mathcal{P}$ is the sub-graph of a direct graph of AMEC of $\mathcal{R}$.
	\end{enumerate}
\end{thm}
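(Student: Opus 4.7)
The plan is to handle the three claims separately, with a direct transition-table comparison for Part 1, an explicit end-component construction for Part 2, and a lifting argument for Part 3. The key conceptual point underpinning all three is that $\mathcal{R}$ strictly enlarges $\mathcal{P}$'s action space, since in $\mathcal{P}$ the automaton is forced to read the sampled label $l$, whereas in $\mathcal{R}$ the input $\iota$ is a free choice charged by $c_V^{\mathcal{R}}$. For Part 1, I would compare Def.~\ref{def:P-MDP} with Def.~\ref{def:relaxed-product} case by case: every positive-probability edge of $\mathcal{P}$ from $x=(s,l,q)$ to $x'=(s',l',q')$ is either non-$\epsilon$ (with $a\in A(s)$ and $\delta(q,l)=q'$) or $\epsilon$ (with $q'\in\delta(q,\epsilon)$ and $(s',l')=(s,l)$); the matching $\mathcal{R}$-action is $u^{\mathcal{R}}=(a,l)$ in the first case and $u^{\mathcal{R}}=(a,\epsilon)$ in the second, and in both cases $p^{\mathcal{R}}(x,u^{\mathcal{R}},x')=p^{\mathcal{P}}(x,u^{\mathcal{P}},x')$, so every edge of $\mathcal{P}$'s graph is an edge of $\mathcal{R}$'s.

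For Part 2 I would construct an AMEC of $\mathcal{R}$ directly, under the implicit assumption that $\phi$ is a satisfiable LTL formula (otherwise $\mathcal{A}_{\phi}$ has no accepting run and no product model has an AMEC). With satisfiability, $\mathcal{A}_{\phi}$ admits an accepting lasso whose looping segment lies in an SCC $C\subseteq Q_D$ of the underlying LDBA graph containing some $q^{\star}\in F$. I would pick such a $C$ and any MEC $(S',A')$ of $\mathcal{M}$ (which exists since $\mathcal{M}$ is finite), and set $I(q)=\{\iota\in 2^{\AP}:\delta(q,\iota)\in C\}$, which is non-empty on $C$ by strong connectivity. Consider the sub-MDP of $\mathcal{R}$ with states
\[
X^{\star}=\{(s,l,q)\in X:s\in S',\,l\in L(s),\,q\in C\}
\]
and action set $U^{\star}(x)=\{(a,\iota):a\in A'(s),\,\iota\in I(q)\}$ at each $x=(s,l,q)$. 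I would verify closure from closure of $A'$ in $S'$ and of $I(q)$ in $C$, strong connectivity by chaining MDP-MEC paths on the $s$-coordinate with LDBA-SCC paths on the $q$-coordinate and using $p_L(s',l')>0$ for every $l'\in L(s')$, and $X^{\star}\cap\Acc\neq\emptyset$ from $q^{\star}\in C\cap F$. By maximality $(X^{\star},U^{\star})$ is then contained in some AMEC of $\mathcal{R}$. The main obstacle is precisely this strong-connectivity check; the clean way around it is to observe that once $q\in C$, every $\iota\in I(q)$ keeps $q\in C$ independently of the choice of $a$, so an LDBA path $q_1\to\cdots\to q_2$ inside $C$ and an MDP path $s_1\to\cdots\to s_2$ inside the MEC can be scheduled in parallel, with each joint step realized by a single action $(a,\iota)$.

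For Part 3, let $M=\mathcal{P}'_{(X',U')}$ be an AMEC of $\mathcal{P}$. By Part 1 each edge of $M$'s induced graph also appears in the graph of $\mathcal{R}$ via the matching action $(u,l)$ or $(u,\epsilon)$, where $l$ is the label coordinate of the source state; collect these into $U''$ over $X'$. Then $(X',U'')$ is closed, its induced graph is strongly connected in $\mathcal{R}$, and $X'\cap\Acc\neq\emptyset$, so it is an accepting EC and hence contained in some AMEC of $\mathcal{R}$, whose induced graph contains $M$'s induced graph as a subgraph, as required.
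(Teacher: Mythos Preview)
Your outline follows the paper's proof closely. Part~1 is the same case split on $\epsilon$ versus non-$\epsilon$ transitions, and Part~3 is the paper's argument made explicit: lifting an AMEC $(X',U')$ of $\mathcal{P}$ to an accepting EC $(X',U'')$ of $\mathcal{R}$ via the action map $a\mapsto(a,l)$ preserves closure, strong connectivity, and the accepting intersection. For Part~2 the only substantive difference is the source of the automaton component: the paper invokes \cite{Sickert2016} to obtain an accepting BSCC $Q_B\subseteq Q_D$ of $\mathcal{A}_\phi$, whereas you extract an SCC $C\subseteq Q_D$ containing an accepting state from the loop of an accepting lasso (under the natural satisfiability assumption on $\phi$). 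The BSCC route is slightly cleaner because every $\iota\in\Sigma$ already keeps $q$ inside $Q_B$, so the set $I(q)$ never needs to be defined; but both constructions lead to the same product sub-MDP.

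The one place your plan needs strengthening is the strong-connectivity check for $(X^{\star},U^{\star})$. The ``parallel scheduling'' sentence presumes that an LDBA path $q_1\to\cdots\to q_2$ inside $C$ and an MEC path $s_1\to\cdots\to s_2$ can be run side by side, one joint step per action $(a,\iota)$; but the two paths may have different lengths, and padding one side by a cycle does not always let you synchronize. For instance, if the MEC is a deterministic $2$-cycle and $C$ is a rigid $4$-cycle (same successor for every input letter), then $(X^{\star},U^{\star})$ decomposes into two disjoint ECs and is not itself an EC. The easy repair is to weaken what you claim: $(X^{\star},U^{\star})$ is a \emph{closed} sub-MDP of $\mathcal{R}$, and under any policy that drives the $q$-coordinate along a fixed cycle in $C$ through some $q^{\star}\in C\cap F$ (the choice of $\iota$ at each $q$ being independent of $a$), every recurrent class of the induced Markov chain contains a state with $q$-component $q^{\star}$ and is therefore an accepting EC, hence contained in an AMEC of $\mathcal{R}$. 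The paper's own proof simply asserts that the constructed set is an EC without further comment, so this gap is shared with the original rather than introduced by your write-up.
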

\begin{proof}
	Property 1: by definition \ref{def:P-MDP}, there is a transition between
	$x=\left\langle s,l,q\right\rangle $ and $x'=\left\langle s',l',q'\right\rangle $
	in $\mathcal{P}$, if and only if $p^{\mathcal{P}}\left(x,u^{\mathcal{P}},x'\right)\neq0$.
	There are two cases for $p^{\mathcal{P}}\left(x,u^{\mathcal{P}},x'\right)\neq0$:
	i) $\exists l\in L(s), \delta(q,l)=q'$ and $p_{S}\left(s,a,s'\right)\neq0$
	with $u^{\mathcal{P}}=a$; and ii) $q'\in\delta\left(q,\epsilon\right)$
	and $u^{\mathcal{P}}=\epsilon$. In the relaxed $\mathcal{R}$, for
	case i), there always exist $u\bigr|_{\ensuremath{\mathcal{A}}}^{\mathcal{R}}=L\left(s\right)$
	and $u\bigr|_{\ensuremath{\mathcal{\mathcal{M}}}}^{\mathcal{\mathcal{R}}}=u^{\mathcal{P}}=a$
	with $p_{S}\left(s,a,s'\right)\neq0$ such that $p^{\mathcal{R}}\left(x,u^{\mathcal{R}},x'\right)\neq0$.
	For case ii), based on the fact that $q'\in\delta\left(q,\epsilon\right)$,
	there always exists $u\bigr|_{\ensuremath{\mathcal{A}}}^{\mathcal{R}}=\epsilon$
	such that $p^{\mathcal{R}}\left(x,u^{\mathcal{R}},x'\right)\neq0$.
	Therefore, any existing transition in $\mathcal{P}$ is also
	preserved in the corresponding relaxed product MDP $\mathcal{R}$. 
	
	Property 2: as indicated in \cite{Sickert2016}, for an LDBA $\mathcal{A}_{\phi}$,
	there always exists a BSCC that contains at least one of the accepting
	states. Without loss of generality, let $Q_{B}\subseteq Q$ be a BSCC
	of $\mathcal{A}_{\phi}$ s.t.  $Q_{B}\cap F\neq\emptyset$. Denote
	by $\mathcal{M}_{\left(S',A'\right)}$ an EC of $\mathcal{M}$.
	By the definition of the relaxed product MDP $\mathcal{\mathcal{R}=M}\otimes\mathcal{A}_{\phi}$,
	we can construct a sub-product MDP $\mathcal{\mathcal{R}}_{\left(X_{B},U_{B}^{\mathcal{R}}\right)}$
	such that $x=\left\langle s,l,q\right\rangle \in X_{B}$ with $s\in S'$
	and $q\in Q_{B}$. For each $u_{B}^{\mathcal{R}}\left(x\right)\in U_{B}^{\mathcal{R}}$,
	we restrict $u_{B}^{\mathcal{R}}\left(x\right)=\left(A\left(s\right),l_{B}\right)$
	with $A\left(s\right)\in A'$ and $\delta\left(q,l_{B}\right)\in Q_{B}$.
	As a result, we can obtain that an EC $\mathcal{\mathcal{R}}_{\left(X_{B},U_{B}^{\mathcal{R}}\right)}$
	that contains at least one of the accepting states due to the fact i.e. $Q_{B}\cap F\neq\emptyset$. Therefore, there exists at
	least an AMEC in the relaxed $\mathcal{R}$.
	
	Property 3: if $\phi$ is feasible over $\mathcal{M}$, there exist
	AMECs in both $\mathcal{P}$ and $\mathcal{R}$. Let $\Xi_{\mathcal{P}}$
	and $\Xi_{\mathcal{\mathcal{R}}}$ be an AMEC of $\mathcal{P}$ and
	$\mathcal{R}$, respectively. From graph perspectives, $\Xi_{\mathcal{P}}$
	and $\Xi_{\mathcal{\mathcal{R}}}$ can be considered as BSCCs $\mathcal{G}_{\left(\Xi_{\mathcal{P}}\right)}\subseteq\mathcal{G}_{\left(X,U^{\mathcal{\mathcal{P}}}\right)}$
	and $\mathcal{G}_{\left(\Xi_{\mathcal{\mathcal{R}}}\right)}\subseteq\mathcal{G}_{\left(X,U^{\mathcal{R}}\right)}$
	containing accepting states, respectively. According to Property 1,
	it can be concluded that for any $\mathcal{G}_{\left(\Xi_{\mathcal{P}}\right)}$, we can find a $\mathcal{G}_{\left(\Xi_{\mathcal{R}}\right)}$ s.t. $\mathcal{G}_{\left(\Xi_{\mathcal{P}}\right)}$ is a sub-graph of $\mathcal{G}_{\left(\Xi_{\mathcal{R}}\right)}$
\end{proof}

\begin{figure}
	\centering{}\includegraphics[scale=0.60]{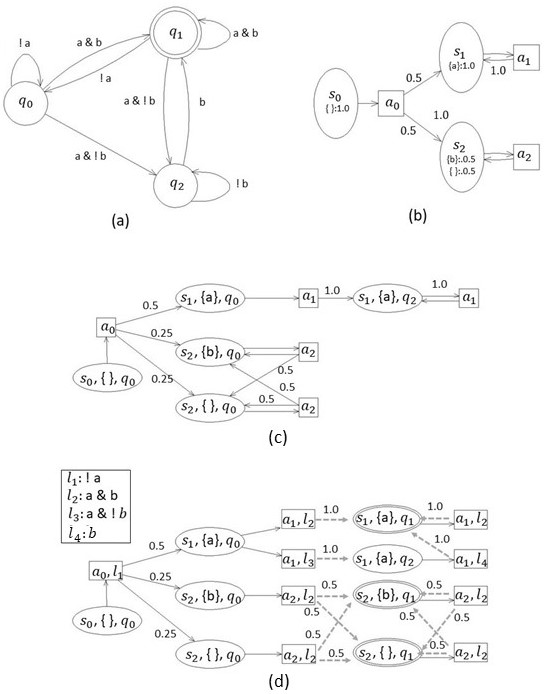}\caption{\label{fig:LDBA =000026 MDP =000026 product} (a) The LDBA $\mathcal{A}_{\phi}$.
		(b) The MDP $\mathcal{M}$. (c) The constrained product MDP. (d) The
		relaxed product MDP.}
\end{figure}

Theorem \ref{Thm_Properties} indicates that the directed graph of $\mathcal{R}$
is more connected than the directed graph of the corresponding $\mathcal{P}$. Therefore, there always exists at least one AMEC in $\mathcal{R}$ even for the infeasible cases, which allows us to measure the violation with respect to the original LTL formula. Moreover,
if a given task $\phi$ is fully feasible in $\mathcal{P}$ (
there exists a policy $\boldsymbol{\pi}_{\mathcal{P}}$ such that its induced path $\boldsymbol{x}_{\boldsymbol{\pi}^{\mathcal{P}}}$
over $\mathcal{P}$ satisfying $\phi$ i.e. $\mathbb{\Pr}_{\mathcal{M}}^{\boldsymbol{\mu}}\left(\phi\right)>0$). Also, there must exist a policy $\boldsymbol{\pi}$. s.t. its induced path $\boldsymbol{x}^{\boldsymbol{\pi}}$ over $\mathcal{R}$ is free of
violation cost. In other words, $\mathcal{R}$ can also
handle feasible tasks by identifying accepting paths with zero AVPS.

\begin{example}
	To illustrate Theorem \ref{Thm_Properties}, a running
	example is shown here. Consider an LDBA $\mathcal{A}_{\phi}$ corresponding
	to $\phi=\boxempty\diamondsuit\mathtt{a}\land\boxempty\diamondsuit\mathtt{b}$
	and an MDP $\mathcal{M}$ as shown in Fig. \ref{fig:LDBA =000026 MDP =000026 product}
	(a) and (b), respectively. For ease of presentation, partial structures
	of the product MDPs $\mathcal{P}=\mathcal{M}\otimes\mathcal{A}_{\phi}$
	and $\mathcal{R}=\mathcal{M}\otimes\mathcal{A}_{\phi}$ are shown
	in Fig. \ref{fig:LDBA =000026 MDP =000026 product} (c) and (d), respectively.
	Since the LTL formula $\phi$ is infeasible over $\mathcal{M}$, there
	is no AMEC in $\mathcal{P}$, whereas there exists one in $\mathcal{R}$.
	Note that there is no $\epsilon$-transitions in this
	case.
\end{example}

Given an accepting path $\boldsymbol{x}_{\infty}^{\boldsymbol{\boldsymbol{\pi}}}=x_{0}\ldots x_{i}x_{i+1}\ldots$, we propose to regulate the multi-objective optimization objective consisting of implementation cost and violation cost for each transition as:
\begin{equation}
c^{\mathcal{R}}\left(x_{i},u_{i}^{\mathcal{R}},x_{i+1}\right)= c_{A}^{\mathcal{R}}\left(x_{i},u_{i}^{\mathcal{R}}\right)\cdot\max\left\{e^{\beta c_{V}^{\mathcal{R}}(x_{i},u_{i}^{\mathcal{R}},x_{i+1})},1\right\} 
\label{eq:combined cost}
\end{equation}
where $\beta\in\mathbb{R}^{+}$ indicates the relative importance.
Based on (\ref{eq:combined cost}), the expected
average regulation cost per stage (ARPS) of $\mathcal{R}$ under a policy $\boldsymbol{\pi}$ is formulated as:
\begin{equation}
   J(\mathcal{R}^{\boldsymbol{\pi}})=\mathbb{E}^{\boldsymbol{\pi}}_{\mathcal{R}}\left[\underset{n\rightarrow\infty}{\lim\inf}\frac{1}{n}\stackrel[i=0]{n}{\sum}c^{\mathcal{R}}\left(x_{i},u_{i}^{\mathcal{R}},x_{i+1}\right)\right]. \label{eq:ARPS}
\end{equation}

In this work, we aim at generating the optimal policy $\boldsymbol{\pi}$ that minimizes the ARPS $J(\mathcal{R}^{\boldsymbol{\pi}})$, while satisfying the acceptance condition of $\mathcal{R}$.

\begin{lem}
\label{lem:priority}
 By selecting a large parameter $\beta>>1$ of (\ref{eq:combined cost})  the first priority of minimizing the AVPS $J_{V}(\mathcal{R}^{\boldsymbol{\pi}})$ in ARPS  $J(\mathcal{R}^{\boldsymbol{\pi}})$ is guaranteed such that minimizing the  AEPS with weighting $\beta$
will never come at the expense of minimizing AVPS i.e., $J_{V}(\mathcal{R}^{\boldsymbol{\pi}})\geq J_{V}(\mathcal{R}^{\boldsymbol{\pi}'})\Longrightarrow J(\mathcal{R}^{\boldsymbol{\pi}})\geq J(\mathcal{R}^{\boldsymbol{\pi}'})$.
\end{lem}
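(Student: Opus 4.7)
My plan is to exploit the multiplicative-exponential structure of $c^{\mathcal{R}}$ together with the finiteness of the relaxed product MDP. First, I would simplify (\ref{eq:combined cost}) by observing that $c_V^{\mathcal{R}}(x,u^{\mathcal{R}},x')\ge 0$ by construction (a probability times a non-negative distance), so $e^{\beta c_V^{\mathcal{R}}}\ge 1$ always holds and the $\max$ collapses to $c^{\mathcal{R}}(x,u^{\mathcal{R}},x')=c_A^{\mathcal{R}}(x,u^{\mathcal{R}})\,e^{\beta c_V^{\mathcal{R}}(x,u^{\mathcal{R}},x')}$. Consequently
\[
J(\mathcal{R}^{\boldsymbol{\pi}})=\mathbb{E}_{\mathcal{R}}^{\boldsymbol{\pi}}\!\left[\liminf_{n\to\infty}\frac{1}{n}\sum_{i=0}^{n}c_A^{\mathcal{R}}(x_i,u_i^{\mathcal{R}})\,e^{\beta c_V^{\mathcal{R}}(x_i,u_i^{\mathcal{R}},x_{i+1})}\right].
\]

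Next, since $X$ and $U^{\mathcal{R}}$ are finite, the values of $c_A^{\mathcal{R}}$ and $c_V^{\mathcal{R}}$ lie in finite subsets of $\mathbb{R}_{\ge 0}$; in particular $c_A^{\mathcal{R}}\le c_A^{\max}<\infty$, and any strictly positive value of $c_V^{\mathcal{R}}$ is bounded below by some $c_V^{+}>0$. I would then apply Jensen's inequality to the convex map $x\mapsto e^{\beta x}$ to obtain a lower bound of the form $J(\mathcal{R}^{\boldsymbol{\pi}})\ge\kappa\,e^{\beta J_V(\mathcal{R}^{\boldsymbol{\pi}})}$, with a $\beta$-independent constant $\kappa>0$ controlled by the stationary mass that $\boldsymbol{\pi}$ places on transitions with $c_A^{\mathcal{R}}>0$. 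Symmetrically, a uniform upper envelope for $J(\mathcal{R}^{\boldsymbol{\pi}'})$ can be written in terms of $c_A^{\max}$ and the largest value of $c_V^{\mathcal{R}}$ visited with positive frequency under $\boldsymbol{\pi}'$, which again takes values in a finite, $\beta$-independent set.

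With these envelopes in hand, for any pair with $J_V(\mathcal{R}^{\boldsymbol{\pi}})>J_V(\mathcal{R}^{\boldsymbol{\pi}'})$ the gap $J(\mathcal{R}^{\boldsymbol{\pi}})-J(\mathcal{R}^{\boldsymbol{\pi}'})$ is lower-bounded by a quantity proportional to $e^{\beta[J_V(\mathcal{R}^{\boldsymbol{\pi}})-J_V(\mathcal{R}^{\boldsymbol{\pi}'})]}$ minus a bounded prefactor that depends only on $c_A^{\max}$ and the finite cost support. Choosing $\beta\gg 1$ makes this gap positive, forcing $J(\mathcal{R}^{\boldsymbol{\pi}})\ge J(\mathcal{R}^{\boldsymbol{\pi}'})$. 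The equality case $J_V(\mathcal{R}^{\boldsymbol{\pi}})=J_V(\mathcal{R}^{\boldsymbol{\pi}'})$ reduces the implication to an AEPS-level comparison, which is the low-order content of $J$ once the leading exponential contributions coincide and therefore remains consistent with the stated monotonicity.

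The main obstacle is the multiplicative coupling between $c_A^{\mathcal{R}}$ and $e^{\beta c_V^{\mathcal{R}}}$: Jensen's inequality only gives one-sided control, and because $c_V^{\mathcal{R}}$ is multi-valued, two policies sharing the same $J_V$ but differing in the spread of $c_V^{\mathcal{R}}$ can in principle induce different ARPS values. Fortunately, the finite support of the joint distribution of $(c_A^{\mathcal{R}},c_V^{\mathcal{R}})$ on the ergodic components of $\mathcal{R}$ ensures that the Jensen gap stays uniformly bounded in $\beta$ (polynomially rather than exponentially), so the lexicographic priority between AVPS and AEPS is preserved asymptotically and the implication in the lemma follows for $\beta$ sufficiently large.
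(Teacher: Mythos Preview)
The paper's own justification of this lemma is a single sentence asserting that it ``can be directly verified based on formulating the exponential function in (\ref{eq:combined cost})''; no argument is supplied. Your proposal is therefore already far more detailed than anything the authors provide. Unfortunately, the step where you do the real work contains a genuine error.

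The claim that fails is that ``the Jensen gap stays uniformly bounded in $\beta$ (polynomially rather than exponentially).'' For a bounded nonnegative random variable $Y$ with finite support, the ratio $\mathbb{E}[e^{\beta Y}]/e^{\beta\,\mathbb{E}[Y]}$ grows like $e^{\beta(\max Y-\mathbb{E}[Y])}$ as $\beta\to\infty$, which is exponential in $\beta$ whenever $Y$ is non-degenerate. This breaks the sandwich you build: your Jensen lower bound $J(\mathcal{R}^{\boldsymbol{\pi}})\ge\kappa\,e^{\beta J_V(\mathcal{R}^{\boldsymbol{\pi}})}$ and your upper envelope for $J(\mathcal{R}^{\boldsymbol{\pi}'})$---which is governed by the \emph{largest} value of $c_V^{\mathcal{R}}$ that $\boldsymbol{\pi}'$ visits with positive frequency, not by $J_V(\mathcal{R}^{\boldsymbol{\pi}'})$---are both exponential in $\beta$ with rates that need not be ordered the right way. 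Concretely, take $c_A^{\mathcal{R}}\equiv 1$ and let $\boldsymbol{\pi}$ sit on a single transition with $c_V^{\mathcal{R}}=\tfrac12+\varepsilon$, while $\boldsymbol{\pi}'$ splits its stationary mass equally between transitions with $c_V^{\mathcal{R}}=0$ and $c_V^{\mathcal{R}}=1$ (such mixtures are available in $\mathcal{R}$ precisely because the extended action $u\bigr|_{\mathcal{A}}^{\mathcal{R}}$ lets a stochastic policy randomize over different violation levels at the same state). Then $J_V(\mathcal{R}^{\boldsymbol{\pi}})=\tfrac12+\varepsilon>\tfrac12=J_V(\mathcal{R}^{\boldsymbol{\pi}'})$, yet $J(\mathcal{R}^{\boldsymbol{\pi}})=e^{\beta(1/2+\varepsilon)}$ and $J(\mathcal{R}^{\boldsymbol{\pi}'})=\tfrac12(1+e^{\beta})$, so $J(\mathcal{R}^{\boldsymbol{\pi}})<J(\mathcal{R}^{\boldsymbol{\pi}'})$ for all large $\beta$ whenever $\varepsilon<\tfrac12$. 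This simultaneously refutes your polynomial-gap claim and shows that the implication in the lemma, read literally as a statement about \emph{arbitrary} policy pairs, does not follow from the exponential weighting alone.

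Your worry about the equality case is a second manifestation of the same phenomenon: two policies with identical $J_V$ but different dispersion in $c_V^{\mathcal{R}}$ can have ARPS values differing by a factor exponential in $\beta$, so the ``low-order AEPS comparison'' you appeal to does not survive. In short, the paper is using the exponential weighting as a heuristic device to enforce lexicographic priority in the downstream LP (where only finitely many extreme occupation measures compete), not as a pointwise monotonicity over all policy pairs; your attempt to make the latter rigorous correctly isolates the Jensen obstruction but then dismisses it with a bound that does not hold.
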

Lemma \ref{lem:priority} can be directly verified based on formulating the exponential function in (\ref{eq:combined cost}).

\begin{problem}
\label{problem2}
Given an $\mathcal{R}$ from $\mathcal{M}$ and $A_{\phi}$, Problem \ref{Prob1} can be formulated
as
\begin{equation}
\begin{aligned}\underset{\boldsymbol{\pi}\in\bar{\boldsymbol{\pi}}}{\min} & J(\mathcal{R}^{\boldsymbol{\pi}})\\
\text{s.t.} & \sideset{}{_{\mathcal{M}}^{\boldsymbol{\pi}}}\Pr\left(\oblong\lozenge\Acc\right)\geq\gamma
\end{aligned}
\label{eq:Prob2}
\end{equation}
where $\beta>>1$, $\bar{\boldsymbol{\pi}}$ represents the set of admissible policies over
$\mathcal{R}$, $\sideset{}{_{\mathcal{M}}^{\boldsymbol{\pi}}}\Pr\left(\oblong\lozenge\Acc\right)$
is the probability of visiting the accepting states of $\mathcal{R}$
infinitely often, and $\gamma$ is the desired threshold for the probability of task satisfaction. 
\end{problem}

\begin{rem}
When the LTL task with respect to the PL-MDP is infeasible, the threshold $\gamma$ represents the probability of entering into any of an AMEC in $\mathcal{R}$. Furthermore, for the cases where there exist no policies satisfying $\sideset{}{_{\mathcal{M}}^{\boldsymbol{\pi}}}\Pr\left(\oblong\lozenge\Acc\right)\geq\gamma$ for a given $\gamma$, the above optimization Problem \ref{problem2} is infeasible, and returns no solutions. However, we can regard $\gamma$ as a slack variable, and technical details are explained in remark \ref{rem:slack}.
\end{rem}

\section{Solution\label{sec:Solution}}

The prefix-suffix structure of LTL satisfaction over an infinite horizon is inspired by the following Lemma

\begin{lem}\label{lem:induced_markov_chain}
Given any Markov
chain $MC_{\mathcal{P}}^{\boldsymbol{\pi}}$ under policy $\boldsymbol{\pi}$, its states can be represented by a disjoint union of a transient
class $\ensuremath{\mathcal{T}_{\boldsymbol{\pi}}}$ and $n_R$ closed
irreducible recurrent classes $\ensuremath{\mathcal{R}_{\boldsymbol{\pi}}^{j}}$,
$j\in\left\{ 1,\ldots,n_{R}\right\} $ \cite{Durrett1999}.
\end{lem}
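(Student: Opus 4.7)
The plan is to reduce the claim to the classical structural theorem for finite Markov chains cited from \cite{Durrett1999} by verifying that its hypotheses hold in our setting. First I would observe that once a stationary policy $\boldsymbol{\pi}$ is fixed on $\mathcal{P}$, the induced transition kernel is determined by $P^{\boldsymbol{\pi}}(x,x') = \sum_{u \in U^{\mathcal{P}}(x)} \boldsymbol{\pi}(x,u)\, p^{\mathcal{P}}(x,u,x')$, and since $X = S \times 2^{\AP} \times Q$ is a product of finite sets, $MC_{\mathcal{P}}^{\boldsymbol{\pi}}$ is a finite-state, time-homogeneous Markov chain---precisely the object to which the decomposition theorem applies.

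Second, I would construct the candidate partition combinatorially. Build the directed graph $G^{\boldsymbol{\pi}} = (X, E^{\boldsymbol{\pi}})$ with $(x,x') \in E^{\boldsymbol{\pi}}$ iff $P^{\boldsymbol{\pi}}(x,x') > 0$, and take $\{\mathcal{R}_{\boldsymbol{\pi}}^{j}\}_{j=1}^{n_R}$ to be the bottom strongly connected components of $G^{\boldsymbol{\pi}}$ (in the sense of Definition~4 applied to the sub-MDP $MC_{\mathcal{P}}^{\boldsymbol{\pi}}$), and set $\mathcal{T}_{\boldsymbol{\pi}} = X \setminus \bigcup_{j=1}^{n_R} \mathcal{R}_{\boldsymbol{\pi}}^{j}$. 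Finiteness of $X$ ensures that at least one BSCC exists and that $n_R$ is finite, so $X$ is indeed the disjoint union $\mathcal{T}_{\boldsymbol{\pi}} \cup \bigcup_{j} \mathcal{R}_{\boldsymbol{\pi}}^{j}$.

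Third, I would verify the three advertised properties: each $\mathcal{R}_{\boldsymbol{\pi}}^{j}$ is closed (no edges exit a BSCC by construction), irreducible (strong connectivity of the induced subgraph means every pair of states communicates under $P^{\boldsymbol{\pi}}$), and consists entirely of positive-recurrent states (the standard fact that every state in a finite closed communicating class is recurrent). Conversely, every $x \in \mathcal{T}_{\boldsymbol{\pi}}$ has, by the maximality in the BSCC definition, a positive-probability path to some $\mathcal{R}_{\boldsymbol{\pi}}^{j}$, and once the chain enters a closed class it cannot return to $x$; a routine Borel--Cantelli argument on the geometric bound for the exit time then yields transience of $x$. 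The main subtlety---really bookkeeping rather than a genuine obstacle---is the equivalence between the graph-theoretic BSCCs of $G^{\boldsymbol{\pi}}$ and the Markov-chain closed irreducible recurrent classes; this equivalence holds in full generality for finite chains, so since the statement is essentially a restatement of the classical theorem of \cite{Durrett1999}, no new argument is required beyond the hypothesis check above.
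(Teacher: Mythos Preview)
Your proposal is correct and in fact supplies substantially more detail than the paper itself: the paper does not prove this lemma at all but simply states it with the citation to \cite{Durrett1999}, treating it as a standard structural fact about finite Markov chains. Your reduction to the classical decomposition theorem, together with the BSCC characterisation of the recurrent classes, is exactly the right justification and is consistent with the paper's intent.
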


Given any policy, Lemma \ref{lem:induced_markov_chain} indicates that the behaviors before entering into AMECs involve the transient class, and a recurrent class represents the decision-making within an AMEC. Note that Lemma \ref{lem:induced_markov_chain} provides a general form of state partition that can be applied to any MDP model. This section shows how to integrate the state partition with a relaxed product MDP.
Especially, we analyze states partition to divide Problem \ref{problem2} into two parts and focus on synthesizing the optimal prefix and suffix policies via linear programming (LP), which addresses the trade-off between minimizing the ARPS (Section \ref{subsec:Plan-Suffix}) over a long term and reaching the probability threshold of task satisfaction. 

This solution framework mainly focuses on adopting the ideas of prefix-suffix plans and the method of MDP optimization for relaxed product structures. The details about the intuition, i.e., the analysis of policies over an infinite horizons, computation of AMECs, and linear programming, can be found in
 \cite{Baier2008, Ding2014a, Forejt2011}.

\subsection{State Partition\label{subsec:state}}

According to Property 1 of Theorem \ref{Thm_Properties}, let
$\Xi_{i}^{\mathcal{R}}=\left(X_{i},U_{i}^{\mathcal{R}}\right)$ denote
an AMEC of $\mathcal{R}$ and let $\Xi_{acc}^{\mathcal{R}}=\left\{ \Xi_{1}^{\mathcal{R}},\ldots,\Xi_{N}^{\mathcal{R}}\right\} $
denote the set of AMECs.
To facilitate the analysis, the state $X$ of $\mathcal{R}$ is divided
into a transient class $X_{T}$ and a recurrent class $X_{R}$, where
$X_{R}=\cup_{\left(X_{i},U_{i}^{\mathcal{R}}\right)\subseteq\Xi_{acc}^{\mathcal{R}}}X_{i}$
is the union of the AMEC states of $\mathcal{R}$ and $X_{T}=X\setminus X_{R}$.
Let $X_{r}\subseteq X_{T}$ and $X_{\lnot r}\subseteq X_{T}$ denote
the set of states that can and cannot be reached from the initial
state $x_{0}$, respectively. Since the states in $X_{\lnot r}$ cannot
be reached from $x_{0}$ (i.e., bad states), we will only focus on
$X_{r}$, which can be further divided into $X_{n}$ and $X_{\lnot n}$
based on the violation conditions. Let $X_{\lnot n}$ and $X_{n}$ be
the set of states that can reach $X_{R}$ with and without violation
edges, respectively. Based on $X_{n}$ , $X_{\lnot n}$ and $X_{R}$,
let $X_{tr},X_{tr}'\subseteq X_{R}$ denote the sets of states that
can be reached within one transition from $X_{n}$ and $X_{\lnot n}$,
respectively. An example is provided in Fig. \ref{fig:graph} to illustrate
the partition of states. 

\subsection{Plan Prefix\label{subsec:Plan-Prefix}}

\begin{figure}[t]
	\centering{}\includegraphics[scale=0.6]{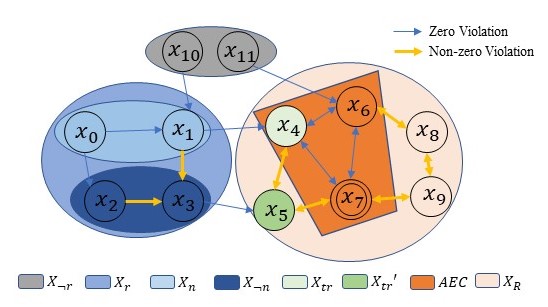}\caption{\label{fig:graph} The illustration of the partition of $X$ in $\ensuremath{\mathcal{R}}$,
		where $x_{7}$ is an accepting state. The edges with and without violation
		cost are marked.}
\end{figure}

The objective of plan prefix is to construct an optimal policy that 
drives the agent from $x_{0}$ to $X_{tr}\cup X_{tr}'$ while minimizing
the combined average cost. To achieve this goal, we construct the prefix MDP model of $\mathcal{R}$ to analyze the prefix behaviors under any policy.

\begin{defn}
\label{def:prefix}
A prefix MDP of $\mathcal{R}$ can be defined as $\ensuremath{\mathcal{R}}_{pre}=\left\{ X_{pre},U_{pre}^{\mathcal{R}},p_{pre}^{\mathcal{R}},x_{0},c_{A_{pre}}^{\mathcal{R}},c_{V_{pre}}^{\mathcal{R}}\right\} $
of $\ensuremath{\mathcal{R}}$, where 
\begin{itemize}
    \item $X_{pre}=X_{r}\cup X_{tr}\cup X_{tr}'\cup v$,
where $v$ is a trap state that models the behaviors within the union of AMECs.

	$\text{ }$
	
    \item The set of actions is $U_{pre}^{\mathcal{R}}=U^{\mathcal{R}}\cup\tau$, 
where $\tau$ represents a self-loop action only enabled at state $v$ s.t. $\tau=U_{pre}(v)$, and $U_{pre}^{\mathcal{R}}(x)$ is the actions enabled at $x\in X_{pre}$.

	$\text{ }$
	
    \item The transition probability $p_{pre}^{\mathcal{R}}$,
is defined as: (i) $p_{pre}^{\mathcal{R}}(x,u^{\mathcal{R}},\bar{x})=p^{\mathcal{R}}(x,u^{\mathcal{R}},\bar{x})$,
$\forall x\in X_{r}$, $\bar{x}\in X_{pre}\setminus  v$, and $\forall u^{\mathcal{R}}\in U^{\mathcal{R}}\left(x\right)$;
(ii) $p_{pre}^{\mathcal{R}}(x,u^{\mathcal{R}},v)=1$, $\forall x\in X_{tr}\cup X_{tr}'$, $u^{\mathcal{R}}\in U^{\mathcal{R}}\left(x\right)$ and $v\in\mathcal{V}$; (iii)
$P_{pre}^{\mathcal{R}}(v,\tau,v)=1$,

	$\text{ }$
	
    \item The implementation cost is defined
as: (i) $c_{A_{pre}}^{\mathcal{R}}(x,u^{\mathcal{R}})=c_{A}^{\mathcal{R}}(x,u^{\mathcal{R}})$,
$\forall x\in X_{r}$ and $u^{\mathcal{R}}\in U^{\mathcal{R}}\left(x\right)$;
and (ii) $c_{A_{pre}}^{\mathcal{R}}(x,u^{\mathcal{R}})=c_{A_{pre}}^{\mathcal{R}}(v,\tau)=0$,
$\forall x\in X_{tr}\cup X_{tr}'$,

	$\text{ }$
	
    \item The violation cost is defined
as: $c_{V_{pre}}^{\mathcal{R}}(x,u^{\mathcal{R}},\bar{x})=c_{V}^{\mathcal{R}}(x,u^{\mathcal{R}},\bar{x})$,
$\forall x\in X_{r}$, $\bar{x}\in X_{r}\cup X_{tr}$, $u^{\mathcal{R}}\in U^{\mathcal{R}}\left(x\right)$;
$c_{V_{pre}}^{\mathcal{R}}(x,u^{\mathcal{R}},\bar{x})=0$ otherwise.
\end{itemize}
\end{defn}

In Def. \ref{def:prefix}, $v$ is the trap state s.t. there's only self-loop action enabled at the state. The agent's state remains the same once it enters the trap state. Therefore, the optimization process of desired policy over prefix product MDP $\mathcal{R}_{pre}$ can be formulated as

\begin{equation}
\begin{array}{cc}\underset{\boldsymbol{\pi}\in\boldsymbol{\bar{\pi}}_{pre}}{\min} & \mathbb{E}_{\ensuremath{\mathcal{R}}_{pre}}^{\boldsymbol{\pi}}\left[\underset{n\rightarrow\infty}{\lim\sup}\frac{1}{n}\stackrel[i=0]{n}{\sum}c_{pre}^{\mathcal{R}}\left(x_{i},u_{i}^{\mathcal{R}},x_{i+1}\right)\right]\\
\text{s.t.} & \sideset{}{_{x_{0},\ensuremath{\mathcal{R}}_{pre}}^{\boldsymbol{\pi}}}\Pr(\lozenge v)\geq\gamma,
\end{array}
\label{eq:Prob_pre}
\end{equation}

where $\boldsymbol{\bar{\pi}}_{pre}$
represents a set of all admissible policies over $\mathcal{R}_{pre}$, $\sideset{}{_{x_{0},\ensuremath{\mathcal{R}}_{pre}}^{\boldsymbol{\pi}}}\Pr\left(\lozenge v\right)$
denotes the probability of $\boldsymbol{x}_{pre}$ starting from
$x_{0}$ and eventually reaching the trap state $v$, and $c_{pre}^{\mathcal{R}}(x_{i},u_{i}^{\mathcal{R}},x_{i+1})$ is the regulation cost for each transition such that

\begin{equation}
\begin{array}{cc}
c_{pre}^{\mathcal{R}}(x_{i},u_{i}^{\mathcal{R}},x_{i+1})=c_{A_{pre}}^{\mathcal{R}}(x_{i},u_{i}^{\mathcal{R}})\\
\cdot\max\left\{ e^{\beta\cdot c_{V_{pre}}^{\mathcal{R}}(x_{i},u_{i}^{\mathcal{R}},x_{i+1})},1\right\}.\\
\end{array}
\end{equation}

In the prefix plan, the policies of staying within AMECs can be modeled by adding the trap state $v$. Based on the station partition in Section \ref{subsec:state}, reaching an AMEC of $\ensuremath{\mathcal{R}}$ is equivalent to reaching the set $X_{tr}\cup X_{tr}'$. Furthermore, since there exist policies under which paths starting from $X_{r}$ to $X_{tr}\cup X_{tr}'$ only traverse the transitions with zero violation cost and the cost of staying at $v$ 
is zero, a large $\beta$ in $c^{\mathcal{R}}_{pre}$
is employed to search policies minimizing the AVPS over $\ensuremath{\mathcal{R}}_{pre}$ as the first priority. 
It should
be noted that there always exists at least one solution $\boldsymbol{\pi}$ in (\ref{eq:Prob_pre}). This is because AMECs in $\ensuremath{\mathcal{R}}$
always exist by Theorem \ref{Thm_Properties}, and we can always obtain a valid prefix MDP $\ensuremath{\mathcal{R}}_{pre}$ of $\mathcal{R}$.

Inspired by the network flow approaches
\cite{Forejt2011,Forejt2012}, (\ref{eq:Prob_pre}) can be reformulated as a graph-constrained optimization problem and solved through LP. Especially, let $y_{x,u}$ denote the
expected number of times over the infinite horizons such that $x$ is visited with $u\in U_{pre}^{\mathcal{R}}$. It measures the state occupancy
among all paths starting from the initial state $x_{0}$ under policy
$\boldsymbol{\pi}$ in $\ensuremath{\mathcal{R}}_{pre},$ i.e., $y_{x,u}=\stackrel[i=0]{\infty}{\sum}\sideset{}{_{x_{0},\ensuremath{\mathcal{R}}_{pre}}^{\boldsymbol{\pi}}}\Pr\left(x_{i}=x,u_{i}^{\ensuremath{\mathcal{R}}}=u\right)$
. Then, we can solve (\ref{eq:Prob_pre}) as the following LP: 

\begin{equation}
    \begin{array}{cc}
    \underset{\left\{ y_{x,u}\right\} }{\min}\left[J_{pre}\overset{\vartriangle}{=}\underset{\left(x,u\right)}{\sum}\underset{\bar{x}\in X_{pre}}{\sum} y_{x,u}\cdot p_{pre}^{\mathcal{R}}\left(x,u,\bar{x}\right)\cdot c_{pre}^{\mathcal{R}}\left(x,u,\bar{x}\right)\right]\\
    \text{  }\\
    \text{s.t. } \underset{\left(x,u\right)}{\sum}\underset{\bar{x}\in (X_{tr}\cup X_{tr}')}{\sum} y_{x,u}\cdot p_{pre}^{\mathcal{R}}\left(x,u,\bar{x}\right)\geq\gamma\\
    \text{  }\\
    \underset{u'\in U_{pre}^{\mathcal{R}}\left(x'\right)}{\sum}y_{x',u'}= \underset{\left(x,u\right)}{\sum} y_{x,u}\cdot p_{pre}^{\mathcal{R}}(x,u,x') +\chi_{0}\left(x'\right)\\
    \text{  }\\
    y_{x,u}\geq0, \forall x'\in X_{r}\\
    \end{array}
\label{eq: Pre_LP}
\end{equation}

where $\chi_{0}$ is the distribution of initial state, and $\underset{\left(x,u\right)}{\sum}\coloneqq\underset{x\in (X_{r}\cup X_{tr}\cup X_{tr}')}{\sum}\underset{u\in U_{pre}^{\mathcal{R}}\left(x\right)}{\sum}$.

Once the solution $y_{x,u}^{*}$ to (\ref{eq: Pre_LP}) is obtained,
the optimal stochastic policy $\boldsymbol{\pi}_{pre}^{*}$ can be generated as

\begin{equation}
\boldsymbol{\pi}_{pre}^{*}(x,u)=\left\{ \begin{array}{cc}
\frac{y_{x,u}^{*}}{\underset{\overline{u}\in U_{pre}^{\mathcal{R}}\left(x\right)}{\sum}y_{x,\overline{u}}^{*}} & \text{ if \ensuremath{x\in X_{r}^{*},} }\\
\frac{1}{\bigl|U_{pre}^{\mathcal{R}}\left(x\right)\bigr|} & \text{ if \ensuremath{x\in X_{pre}\setminus X_{r}^{*}},}
\end{array}\right.\label{eq:prefix_policy}
\end{equation}

where $X_{r}^{*}=\left\{ x\in X_{r}\left|\underset{u\in U_{pre}^{\mathcal{R}}\left(x\right)}{\sum}y_{x,u}^{*}>0\right.\right\} $.

\begin{lem}
	\label{lemma:prefix}The optimal policy $\boldsymbol{\pi}_{pre}^{*}$ in (\ref{eq:prefix_policy})
	ensures that $\sideset{}{_{x_{0},\ensuremath{\mathcal{R}}_{pre}}^{\boldsymbol{\pi}}}\Pr\left(\lozenge v\right)\geq\gamma$.
\end{lem}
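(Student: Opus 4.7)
The plan is to exploit the standard occupation-measure interpretation of the LP (\ref{eq: Pre_LP}) and to use the absorbing structure of $\mathcal{R}_{pre}$, where the only way to ever leave the transient region $X_r$ is through $X_{tr}\cup X_{tr}'$, and these states then deterministically transit to the trap $v$ which itself is absorbing via $\tau$. Because $v$ is absorbing, the event $\lozenge v$ coincides with the event ``the path is eventually absorbed in $X_{tr}\cup X_{tr}'$'', and its probability equals the total expected flow entering that set under $\boldsymbol{\pi}_{pre}^{*}$.

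First, I would recall the meaning of the LP variables: for any stationary policy $\boldsymbol{\pi}$ over $\mathcal{R}_{pre}$ the quantity $y_{x,u}^{\boldsymbol{\pi}}\coloneqq\sum_{i=0}^{\infty}\Pr_{x_0,\mathcal{R}_{pre}}^{\boldsymbol{\pi}}(x_i=x,u_i^{\mathcal{R}}=u)$ satisfies the balance equation $\sum_{u'\in U_{pre}^{\mathcal{R}}(x')}y_{x',u'}=\sum_{(x,u)}y_{x,u}\cdot p_{pre}^{\mathcal{R}}(x,u,x')+\chi_0(x')$ on the transient part $X_r$; and, conversely, for any feasible $\{y_{x,u}^*\}$ of (\ref{eq: Pre_LP}) with $y_{x,u}^*\geq 0$, the normalized policy in (\ref{eq:prefix_policy}) induces exactly these occupation counts on $X_r^*$. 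This is the network-flow correspondence used in \cite{Forejt2011,Forejt2012} that I would cite rather than re-derive.

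Next, using the definition of $\mathcal{R}_{pre}$ (items (ii) and (iii) of Def.~\ref{def:prefix}), any visit to a state $x\in X_{tr}\cup X_{tr}'$ is followed by a transition into $v$ with probability one, after which the process loops at $v$ forever. Hence
\begin{equation*}
\Pr{}_{x_0,\mathcal{R}_{pre}}^{\boldsymbol{\pi}_{pre}^{*}}(\lozenge v)=\sum_{x\in X_r}\sum_{u\in U_{pre}^{\mathcal{R}}(x)}\sum_{\bar x\in X_{tr}\cup X_{tr}'}y_{x,u}^{*}\cdot p_{pre}^{\mathcal{R}}(x,u,\bar x),
\end{equation*}
i.e.\ the probability of reaching $v$ equals the aggregate flow into $X_{tr}\cup X_{tr}'$ under the induced Markov chain.

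Finally, I would invoke the first inequality constraint of (\ref{eq: Pre_LP}), which precisely lower-bounds this aggregate flow by $\gamma$, to conclude $\Pr_{x_0,\mathcal{R}_{pre}}^{\boldsymbol{\pi}_{pre}^{*}}(\lozenge v)\geq\gamma$. The main technical obstacle is the first step: establishing rigorously that the stochastic policy in (\ref{eq:prefix_policy}) realizes the LP's occupation measure on $X_r^*$ (and that defining $\boldsymbol{\pi}_{pre}^{*}$ arbitrarily on $X_{pre}\setminus X_r^*$ does not affect the reachability probability, since those states carry zero occupation mass and thus are almost surely never visited from $x_0$). Once this correspondence is in place, the remainder is a direct application of the LP constraint and the absorbing structure of $v$.
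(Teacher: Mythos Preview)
Your proposal is correct and follows essentially the same approach as the paper's own proof: both interpret the LP variables as occupation measures, observe that the first constraint of (\ref{eq: Pre_LP}) equals the total flow into $X_{tr}\cup X_{tr}'$, and use the absorbing structure at $v$ to identify this flow with $\Pr_{x_0,\mathcal{R}_{pre}}^{\boldsymbol{\pi}_{pre}^{*}}(\lozenge v)$. The paper cites \cite{Etessami2007} for the occupation-measure/policy correspondence where you cite \cite{Forejt2011,Forejt2012}, and is terser about the step you flag as the main technical obstacle, but the logic is the same.
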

\begin{proof}
	The proof is similar to Lemma 3.3 in \cite{Etessami2007}. Due to
	the transient class of $X_{r}$, $y_{x,u}$ is finite. In first constraint of (\ref{eq: Pre_LP}), the sum $\underset{\left(x,u\right)}{\sum}\underset{\bar{x}\in (X_{tr}\cup X_{tr}')}{\sum} y_{x,u}\cdot p_{pre}^{\mathcal{R}}\left(x,u,\bar{x}\right)$
	is the expected number of times that $X_{tr}\cup X_{tr}'$
	can be reached for the first time from a given initial state under the policy $\boldsymbol{\pi}_{pre}^{*}$.
	Since the agent remains in $v$ once it enters $X_{tr}\cup X_{tr}'$,
	the sum is the probability of reaching $X_{R}$, which is lower bounded
	by $\gamma$. The second constraint of (\ref{eq: Pre_LP})
guarantees the balances of network flow for the distribution of initial states.
\end{proof}

\begin{example}
	\begin{figure}
	\centering{}\includegraphics[scale=0.28]{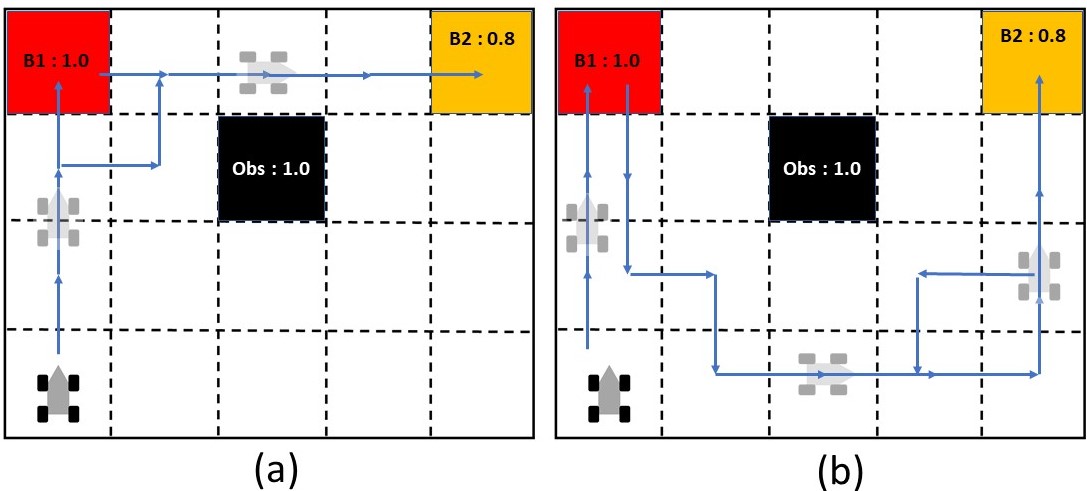}\caption{\label{example_prefix} Simulated trajectories with different $\gamma$ under optimal prefix policies. (a) $\gamma=0.3$. (b) $\gamma=1.0$.}
\end{figure}
    As a running example in Fig. \ref{example_prefix}, we illustrate the importance of the threshold $\gamma$ that balances the trade-off between optimizing the ARPS and reaching the probability of satisfaction. The motion uncertainties and the action cost are set the same as in Example \ref{example1}. An LTL task is considered as $\phi_{pre}=\Eventually (\mathtt{B1}\land\Eventually\mathtt{B2})$ that requires visiting region $\mathtt{B1}$ first and then $\mathtt{B2}$ sequentially. $\phi_{pre}$ is feasible with respect to the corresponding PL-MDP ($\mathtt{B2}$ is surrounded by probabilistic obstacles). Fig. \ref{example_prefix} shows the results with two different $\gamma$ under generated prefix optimal policies. It can be observed how such a parameter impacts the optimization bias since $\gamma$ represents the quantitative probabilistic satisfaction \cite{Baier2008}. 
\end{example}

\begin{rem}
\label{rem:slack}
The pre-defined threshold may influence the feasibility of the optimization (\ref{eq: Pre_LP}) when there exist no policies s.t. $\sideset{}{_{x_{0},\ensuremath{\mathcal{R}}_{pre}}^{\boldsymbol{\pi}}}\Pr(\lozenge v)\geq\gamma$. Since LP is a linear convex optimization \cite{Forejt2011,Forejt2012}, to alleviate the issue, we can treat $\gamma$ as a slack variable such that (\ref{eq: Pre_LP}) can be reformulated as: 
\begin{equation}
\begin{array}{cc}\underset{\boldsymbol{\pi}\in\boldsymbol{\bar{\pi}}_{pre}}{\min} & \mathbb{E}_{\ensuremath{\mathcal{R}}_{pre}}^{\boldsymbol{\pi}}\left[\underset{n\rightarrow\infty}{\lim\sup}\frac{1}{n}\stackrel[i=0]{n}{\sum}c_{pre}^{\mathcal{R}}\left(x_{i},u_{i}^{\mathcal{R}},x_{i+1}\right)\right] + w_{\gamma}\cdot\gamma\\
\text{s.t.} & \sideset{}{_{x_{0},\ensuremath{\mathcal{R}}_{pre}}^{\boldsymbol{\pi}}}\Pr(\lozenge v)\geq\gamma,
\end{array}
\label{eq:Prob_pre_slack}
\end{equation}
where $w_{\gamma}$ is a regulation parameter that can be designed based on the users' preference for the multi-objective problems. Then, directly adopting the optimization process as the same as (\ref{eq: Pre_LP}) and (\ref{eq:prefix_policy}) can find feasible solutions. 

Because the optimization of (\ref{eq:Prob_pre_slack}) increases the computational complexity, it is only applied when the formulation (\ref{eq: Pre_LP}) returns no solution.
\end{rem}

\subsection{Plan Suffix\label{subsec:Plan-Suffix}}

Suppose the prefix optimal policy $\boldsymbol{\pi}_{pre}^{*}$ drives the RL-agent into one AMEC $\Xi_{j}^{\mathcal{R}}$ of $\Xi_{acc}^{\mathcal{R}}$.
This section considers the long-term
behavior of the agent inside AMEC $\Xi_{j}^{\mathcal{R}}$. Since the agent can enter any AMEC, let $\Xi_{j}^{\mathcal{R}}=\left(X_{j},U_{j}^{\mathcal{R}}\right)\subseteq\Xi_{acc}^{\mathcal{R}}$
denote such an AMEC and $X_{j}^{tr}= X_{j}\cap(X_{tr}\cup X_{tr}')$ denote the
set of states that can be reached from plan prefix $\boldsymbol{x}_{pre}$.
As a result, $X_{j}^{tr}$ can be treated as an initial state for
plan suffix after entering the AMEC. The objective of suffix policies is to enforce the accepting conditions and consider the optimization of long-term behavior. Therefore, after the agent entering into AMEC $\Xi_{j}^{\mathcal{R}}$, the optimization process of the desired policy over $\Xi_{j}^{\mathcal{R}}$ can be formulated: 

\begin{equation}
\begin{array}{cc}
     \underset{\boldsymbol{\pi}\in\boldsymbol{\bar{\pi}}_{\Xi_{j}^{\mathcal{R}}}}{\min} \mathbb{E}_{\Xi_{j}^{\mathcal{R}}}^{\boldsymbol{\pi}}\left[\underset{n\rightarrow\infty}{\lim\sup}\frac{1}{n}\stackrel[i=0]{n}{\sum}c^{\mathcal{R}}\left(x_{i},u_{i}^{\mathcal{R}},x_{i+1}\right)\right]  \\
     \text{s.t.   } \inf(\boldsymbol{x}_{\Xi_{j}^{\mathcal{R}}}^{\boldsymbol{\pi}})\cap\Acc\neq\emptyset, \forall \boldsymbol{x}_{\Xi_{j}^{\mathcal{R}}}^{\boldsymbol{\pi}}\in \boldsymbol{X}_{\Xi_{j}^{\mathcal{R}}}^{\boldsymbol{\pi}},
\end{array}
\label{eq:formulate_suffix}
\end{equation}
where $\boldsymbol{\bar{\pi}}_{\Xi_{j}^{\mathcal{R}}}$
represents a set of all admissible policies over $\Xi_{j}^{\mathcal{R}}$, $\boldsymbol{X}_{\Xi_{j}^{\mathcal{R}}}^{\boldsymbol{\pi}}$ is the set of all paths over the finite horizons under the policy $\boldsymbol{\pi}$, and $c^{\mathcal{R}}$ is the regulation transition cost in (\ref{eq:combined cost}).

Let $A_{j}$ denote a set of accepting states in $X_{j}$ of $\Xi_{j}^{\mathcal{R}}$, i.e., $A_{j}=X_{j}\cap\Acc$. Consequently, the acceptance condition of $\mathcal{R}$ can be satisfied s.t. $\inf(\boldsymbol{x}_{\Xi_{j}^{\mathcal{R}}}^{\boldsymbol{\pi}})\cap\Acc\neq\emptyset$. One infinite accepting path $\boldsymbol{x}_{\Xi_{j}^{\mathcal{R}}}^{\boldsymbol{\pi}}$ can be regarded as a concatenation of an infinite number of cyclic paths starting and ending in the set $A_{j}$.

\begin{defn}
	\label{def:average cycle cost} A cyclic path $\boldsymbol{x}_{c}=x_{1}\ldots x_{N_{x_{c}}}$
	associated with $\Xi_{j}^{\mathcal{R}}$ is a finite path with horizons
	$N_{x_{c}}$ starting and ending at any subset $X_{j}'\subseteq X_{j}$, i.e., $x_{1},x_{N_{x_{c}}}\in X_{j}'$,
	while actions are restricted to $U_{j}^{\mathcal{R}}$ to remain within $X_{j}$. A cyclic path $\boldsymbol{x}_{c}$
	is called an accepting cyclic path if it starts and ends at $A_{j}$, i.e., $x_{1},x_{N_{x_{c}}}\in A_{j}$.
\end{defn}

By definition \ref{def:average cycle cost}, the optimization problem (\ref{eq:formulate_suffix}) over the infinite horizons can be reformulated as

\begin{equation}
\begin{array}{cc}
     \underset{\boldsymbol{\pi}\in\boldsymbol{\bar{\pi}}_{\Xi_{j}^{\mathcal{R}}}}{\min}\mathbb{E}_{\boldsymbol{x}_{c}\in \boldsymbol{X}_{\Xi_{j}^{\mathcal{R}},cycle}^{\boldsymbol{\pi}}}\left[\frac{1}{N_{x_{c}}}\stackrel[i=1]{N_{x_{c}}}{\sum}\left(c^{\mathcal{R}}\left(x_{i},u_{i}^{\mathcal{R}},x_{i+1}\right)\right)\right]  \\
     \text{s.t.   } x_{1}, x_{N_{x_{c}}}\in A_{j}, \forall \boldsymbol{x}_{c}\in \boldsymbol{X}_{\Xi_{j}^{\mathcal{R}},cycle}^{\boldsymbol{\pi}}
\end{array}
\label{eq:suffix_prob}
\end{equation}
where $\boldsymbol{X}_{\Xi_{j}^{\mathcal{R}},cycle}^{\boldsymbol{\pi}}$ is a set of all cyclic paths under policy $\boldsymbol{\pi}$ over the AEMC $\Xi_{j}^{\mathcal{R}}$, the mean cyclic cost $\frac{1}{\bar{N}}\stackrel[i=t]{t+\bar{N}}{\sum}\left(c^{\mathcal{R}}\left(x_{i},u_{i}^{\mathcal{R}},x_{i+1}\right)\right)$ corresponds to the average cost per stage, and the constraint requires all induced cyclic paths under policy $\pi$ are accepting cyclic paths. 

Similarly, inspired from
\cite{Randour(1)2015,chatterjee2011}, we can also construct the suffix MDP model of $\mathcal{R}$ based on the state partition. Then
(\ref{eq:suffix_prob})
can be solved through LP. In order to apply the network flow algorithm to constrain paths starting
from and ending at $A_{j}$, we need to transform accepting cyclic paths into the form of acyclic paths. 
To do so,  
we split
$A_{j}$ to create a virtual copy $A_{j}^{out}$ that has no incoming
transitions from $A_{j}$ and a virtual copy $A_{j}^{in}$ that only
has incoming transitions from $A_{j}$, which allows representing
a cyclic path as an acyclic path starting from $A_{j}^{out}$ and ending in
$A_{j}^{in}$. To convert the analysis of cyclic paths into acyclic paths, we construct the following suffix MDP of $\mathcal{R}$ for $\Xi_{j}^{\mathcal{R}}$.

\begin{defn}
\label{def:suffix}
 A suffix MDP of $\mathcal{R}$ can be defined as $\ensuremath{\mathcal{R}}_{suf}=\left\{ X_{suf},U_{suf}^{\mathcal{R}},p_{suf}^{\mathcal{R}},D_{tr}^{\mathcal{R}},c_{A_{suf}}^{\mathcal{R}},c_{V_{suf}}^{\mathcal{R}}\right\} $
where
\begin{itemize}
    \item $X_{suf}=(X_{j}\backslash A_{j})\cup A_{j}^{out}\cup A_{j}^{in}$.
    \item  $U_{suf}^{\mathcal{R}}=U_{j}^{\mathcal{R}}\cup\tau$
with $U_{suf}^{\mathcal{R}}\left(x\right)=\tau,\forall x\in A_{j}^{in}$, where $U_{suf}^{\mathcal{R}}(x)$ is the actions enabled at $x\in X_{suf}$.

	$\text{ }$
	
    \item The transition probability $p_{suf}^{\mathcal{R}}$ can be defined
as follows: (i) $p_{suf}^{\mathcal{R}}(x,u^{\mathcal{R}},\overline{x})=p^{\mathcal{R}}(x,u^{\mathcal{R}},\overline{x})$,
$\forall x,\overline{x}\in (X_{j}\backslash A_{j})\cup A_{j}^{out}$
and $u^{\mathcal{R}}\in U_{j}^{\mathcal{R}}$; (ii) $p_{suf}^{\mathcal{R}}(x,u^{\mathcal{R}},\overline{x})=p^{\mathcal{R}}(x,u^{\mathcal{R}},\overline{x})$,$\forall x\in (X_{j}\backslash A_{j})\cup A_{j}^{out}$,
$\overline{x}\in A_{j}^{in}$ and $u^{\mathcal{R}}\in U_{j}^{\mathcal{R}}$;
(iii) $p_{suf}^{\mathcal{R}}(x,\tau,\overline{x})=1$, $\forall x,\overline{x}\in A_{j}^{in}$.

	$\text{ }$
	
    \item The implementation cost is defined as: (i) $c_{A_{suf}}^{\mathcal{R}}(x,u^{\mathcal{R}})=c_{A}^{\mathcal{R}}(x,u^{\mathcal{R}})$,
$\forall x\in X_{j}\backslash A_{j}\cup A_{j}^{out}$ and $u^{\mathcal{R}}\in U_{j}^{\mathcal{R}}$;
and (ii) $c_{A_{suf}}^{\mathcal{R}}(x,\tau)=0$, $\forall x\in A_{j}^{in}$.

	$\text{ }$
	
    \item The violation cost is defined as: (i) $c_{V_{suf}}^{\mathcal{R}}(x,u^{\mathcal{R}},\overline{x})=c_{V}^{\mathcal{R}}(x,u^{\mathcal{R}},\overline{x})$,
$\forall x,\overline{x}\in (X_{j}\backslash A_{j})\cup A_{j}^{out}$
and $u^{\mathcal{R}}\in U_{j}^{\mathcal{R}}$; (ii) $c_{V_{suf}}^{\mathcal{R}}(x,u^{\mathcal{R}},\overline{x})=c_{V}^{\mathcal{R}}(x,u^{\mathcal{R}},\overline{x})$,$\forall x\in (X_{j}\backslash A_{j})\cup A_{j}^{out}$,
$\overline{x}\in A_{j}^{in}$ and $u^{\mathcal{R}}\in U_{j}^{\mathcal{R}}$;
and (iii) $c_{V_{suf}}^{\mathcal{R}}(x,\tau,\overline{x})=0$,$\forall x,\overline{x}\in A_{j}^{in}$.

	$\text{ }$
	
    \item The distribution of the initial state $D_{tr}^{\mathcal{R}}:X_{suf}\shortrightarrow\mathbb{R}$
is defined as (i) $D_{tr}^{\mathcal{R}}\left(x\right)=\underset{\hat{x}\in X_{n}}{\sum}\underset{u^{\mathcal{R}}\in U^{\mathcal{R}}}{\sum}y_{pre}^{*}(\hat{x},u^{\mathcal{R}})\cdot P^{\mathcal{R}}(\hat{x},u^{\mathcal{R}},x)$, if $x\in X_{j}^{tr}$; (ii) $D_{tr}^{\mathcal{R}}\left(x\right)=0$
if $x\in X_{suf}\backslash X_{j}^{tr}$, where $X_{n}$ is a set of
states that can reach $X_{R}$ in transient class.  
\end{itemize}
\end{defn}
Let $z_{x,u}=z\left(x,u\right)$ denote the long-term frequency that
the state is at $x\in X_{suf}\setminus A_{j}^{in}$ and the action
$u$ is taken. Then, to solve (\ref{eq:suffix_prob}), the following
LP is formulated as

\begin{equation}
    \begin{array}{cc}
    \underset{\left\{ z_{x,u}\right\} }{\min} \left[J_{\Xi_{j}^{\mathcal{R}}}\overset{\vartriangle}{=}\underset{\left(x,u\right)}{\sum}\underset{\overline{x}\in X_{suf}}{\sum}z_{x,u}p_{suf}^{\mathcal{R}}\left(x,u,\bar{x}\right)c_{suf}^{\mathcal{R}}\left(x,u,\bar{x}\right)\right]\\
    \text{  }\\
    \text{s.t. }\underset{u'\in U_{suf}^{\mathcal{R}}\left(x'\right)}{\sum}z_{x',u'}= {\underset{\left(x,u\right)}{\sum}}z_{x,u}\cdot p_{suf}^{\mathcal{R}}(x,u,x')+D_{tr}^{\mathcal{R}}\left(x'\right)\\
    \text{  }\\
    \underset{\left(x,u\right)}{\sum}\underset{\overline{x}\in A_{j}^{in}}{\sum}z_{x,u}\cdot p_{suf}^{\mathcal{R}}(x,u,\bar{x})=\underset{x\in X_{suf}'}{\sum}D_{tr}^{\mathcal{R}}\left(x\right),\\
    \text{  }\\
    z_{x,u}\geq0, \forall x'\in X_{suf}'\\
    \end{array}
\label{eq:suffix_LP}
\end{equation}

where $X_{suf}'=X_{suf}\setminus A_{j}^{in}$,
$\underset{\left(x,u\right)}{\sum}\coloneqq\underset{x\in X_{suf}'}{\sum}\underset{u\in U_{suf}^{\mathcal{R}}\left(x\right)}{\sum}$,
and $c_{suf}^{\mathcal{R}}\left(x,u,\bar{x}\right)=c_{A_{suf}}^{\mathcal{R}}\left(x,u\right)\cdot\max\left\{ e^{\beta\cdot c_{V_{suf}}^{\mathcal{R}}\left(x,u,\bar{x}\right)},1\right\} $. The first constraint represents the in-out flow balance, and the second constraint ensures that $A_{j}^{in}$ is eventually reached.
Note that (\ref{eq:formulate_suffix}) and (\ref{eq:suffix_prob}) are defined for the suffix MDP $\Xi_{j}^{\mathcal{R}}$, whereas (\ref{eq:suffix_LP}) is formulated over the suffix MDP $\ensuremath{\mathcal{R}}_{suf}$.

Once the solution $z_{x,u}^{*}$ to (\ref{eq:suffix_LP}) is obtained,
the optimal policy can be generated by 
\begin{equation}
\boldsymbol{\pi}_{suf}^{*}(x,u)=\begin{cases}
\frac{z_{x,u}^{*}}{\underset{\overline{u}\in U_{suf}^{\mathcal{R}}\left(x\right)}{\sum}z_{x,\overline{u}}^{*}}, & \text{if } x\in X_{j}^{*},\\
\frac{1}{\bigl|U_{suf}^{\mathcal{R}}\left(x\right)\bigr|}, & \text{if } x\in X_{suf}\setminus X_{j}^{*}
\end{cases}\label{eq:suffix_policy}
\end{equation}
where $X_{j}^{*}=\left\{ x\in X_{j}\left|\underset{u\in U_{suf}^{\mathcal{R}}\left(x\right)}{\sum}z_{x,u}^{*}>0\right.\right\} $.

\begin{algorithm}
	\caption{\label{Alg1} Synthesis and execution of complete policy }
	
	\scriptsize
	
	\singlespacing
	
	\begin{algorithmic}[1]
		
		\Procedure {Input: } {$\mathcal{M}$ , $\phi$, and $\beta$}
		
		{Output: } { the optimal policy $\boldsymbol{\pi}^{*}$ and $\boldsymbol{\boldsymbol{\boldsymbol{\boldsymbol{\mu}}}}^{*}$ }
		
		{Initialization: } { Construct $\ensuremath{\mathcal{\mathcal{A}_{\phi}}}$
			and $\mathcal{R}=\mathcal{M}\times\mathcal{A}_{\phi}$}
		
		\State Set $t=0$ and the execution horizons $T$. 
		
		\State Construct AMECs $\Xi_{acc}^{\mathcal{R}}=\left\{ \Xi_{1}^{\mathcal{R}},\ldots,\Xi_{N}^{\mathcal{R}}\right\} $. 
		
		\State Construct $X_{r}$,$X_{n}$, $X_{\lnot n}$, $X_{tr},X_{tr}'$.

		\If { $X_{r}=\emptyset$ }
		
		\State $\Xi_{acc}^{\mathcal{\mathcal{R}}}$ can not be reached from
		$x_{0}$ and no $\boldsymbol{\pi}^{*}$ exists;
		
		\Else
		
		\State Construct $\ensuremath{\mathcal{R}}_{pre}$.
		
		\For { each $\Xi_{j}^{\mathcal{R}}\subseteq\Xi_{acc}^{\mathcal{\mathcal{R}}}$
		}
		
		\State Construct $\ensuremath{\mathcal{R}}_{suf}$.
		
		\EndFor
		
		\State Obtain $\boldsymbol{\pi}^{*}$ by solving the coupled LP in \ref{eq: Complete_policy}.
    
		\State Set $x_{t}=x_{0}=\left(s_{0},l_{0},q_{0}\right)$ and $s_{t}=s_{0}$.
		
		\State Set $\boldsymbol{s}_{\mathcal{M}}=x_{t}$
		
		\While { $t\leq T$ }
		
		\State Select an action $u_{t}^{\mathcal{R}}$ according to $\boldsymbol{\pi}^{*}\left(x_{t}\right)$.
		
		\State Obtain $s_{t+1}$ in $\mathcal{M}$ by applying action $a_{t}=u_{t}\bigr|_{\mathcal{\mathcal{M}}}^{\mathcal{R}}$.
		
		\State Observe $l_{t+1}$ .
		
		\State Set $x_{t+1}=\left(s_{t+1},l_{t+1},q_{t+1}\right)$.
		
		\State Update $\boldsymbol{s}$ by concatenating $x_{t+1}$.
		
		\State $t++$.
		
		\EndWhile
		
		\State Return $\boldsymbol{\boldsymbol{\boldsymbol{\boldsymbol{\mu}}}}^{*}\left(\boldsymbol{s}\left[:t\right],L(\boldsymbol{s}\left[:t\right])\right)$
		$\forall t=0,1\ldots T$.
		
		\EndIf
		
		\EndProcedure
		
	\end{algorithmic}
\end{algorithm}

\begin{lem}
	\label{lemma:suffix} The plan suffix $\boldsymbol{\pi}_{suf}^{*}$ in (\ref{eq:suffix_policy})
	solves (\ref{eq:formulate_suffix}) for the suffix MDP of AMEC $\Xi_{j}^{\mathcal{R}}$.
\end{lem}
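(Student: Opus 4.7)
The plan is to establish two properties of $\boldsymbol{\pi}_{suf}^{*}$ defined through the LP solution $z_{x,u}^{*}$: (i) feasibility, namely that every trajectory induced by $\boldsymbol{\pi}_{suf}^{*}$ within $\Xi_{j}^{\mathcal{R}}$ satisfies the acceptance condition $\inf(\boldsymbol{x}^{\boldsymbol{\pi}})\cap\Acc\neq\emptyset$ almost surely, and (ii) optimality, namely that the LP objective $J_{\Xi_{j}^{\mathcal{R}}}$ coincides with the expected average regulation cost per stage of (\ref{eq:formulate_suffix}). The overall route is to interpret $z_{x,u}$ as an expected long-run state-action occupation measure induced on the suffix MDP $\mathcal{R}_{suf}$, mirroring the network-flow derivation used for the prefix in Lemma~\ref{lemma:prefix} and the classical average-cost LP formulations of \cite{Forejt2011,Forejt2012,Randour(1)2015,chatterjee2011}.

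First I would show feasibility. The first (in-out) constraint of (\ref{eq:suffix_LP}) is the standard flow-conservation equation, which guarantees that the values $z_{x,u}^{*}$ constitute a well-defined stationary occupation measure initiated from the distribution $D_{tr}^{\mathcal{R}}$ inherited from the prefix solution $y_{pre}^{*}$. The second constraint equates the total incoming flow into $A_{j}^{in}$ with the total initial mass $\sum_{x\in X_{suf}'} D_{tr}^{\mathcal{R}}(x)$, which means that under the induced policy the chain reaches $A_{j}^{in}$ with probability one from every initial state in the support of $D_{tr}^{\mathcal{R}}$. Because the virtual split was constructed so that $A_{j}^{in}$ represents an entry into $A_{j}$ in the original AMEC, together with the self-loop action $\tau$ at $A_{j}^{in}$ acting as a trap in $\mathcal{R}_{suf}$, and because the underlying $\Xi_{j}^{\mathcal{R}}$ is a strongly connected end component, collapsing $A_{j}^{in}$ back onto $A_{j}$ recovers a stationary policy on $\Xi_{j}^{\mathcal{R}}$ under which $A_{j}$ is visited infinitely often almost surely. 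Thus $\boldsymbol{\pi}_{suf}^{*}$ of (\ref{eq:suffix_policy}) meets the acceptance constraint of (\ref{eq:formulate_suffix}).

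Second I would show optimality by a two-sided inclusion between stationary policies on $\Xi_{j}^{\mathcal{R}}$ satisfying the acceptance condition and feasible solutions of (\ref{eq:suffix_LP}). In the forward direction, for any such policy $\boldsymbol{\pi}$, its expected long-run frequencies of visits to state-action pairs inside $\Xi_{j}^{\mathcal{R}}$ (with $A_{j}$ split into $A_{j}^{out}$ and $A_{j}^{in}$) yield a feasible $z$; the accepting-cycle decomposition from Def.~\ref{def:average cycle cost} and the ergodicity of the induced chain inside the AMEC ensure the expected average regulation cost of (\ref{eq:formulate_suffix}) equals $\sum_{(x,u)}\sum_{\bar{x}} z_{x,u} p_{suf}^{\mathcal{R}}(x,u,\bar{x}) c_{suf}^{\mathcal{R}}(x,u,\bar{x})$ after the $A_{j}$-splitting is undone. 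In the reverse direction, the stochastic policy (\ref{eq:suffix_policy}) derived from $z_{x,u}^{*}$ induces a Markov chain whose stationary occupation measure recovers $z_{x,u}^{*}$, so its expected average regulation cost equals $J_{\Xi_{j}^{\mathcal{R}}}$. Because $z_{x,u}^{*}$ minimizes $J_{\Xi_{j}^{\mathcal{R}}}$ over the entire feasible polytope, no acceptance-respecting stationary policy can achieve strictly lower cost.

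The main obstacle I foresee is the careful bookkeeping of the $A_{j}=A_{j}^{out}\cup A_{j}^{in}$ splitting: I must argue rigorously that the flow-balance structure of $\mathcal{R}_{suf}$ together with the self-loop $\tau$ at $A_{j}^{in}$ correctly encodes the per-cycle average cost defined in (\ref{eq:suffix_prob}) as a per-stage average cost in the LP, and that the correspondence between occupation measures on $\mathcal{R}_{suf}$ and stationary policies on $\Xi_{j}^{\mathcal{R}}$ preserves both feasibility and cost. Once this bookkeeping is settled, the rest follows from standard average-cost MDP LP duality as in \cite{Forejt2011,Forejt2012}.
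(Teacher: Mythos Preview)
Your proposal is correct and follows essentially the same route as the paper's own proof: the paper also argues that the second constraint of (\ref{eq:suffix_LP}) forces all flow to reach $A_{j}^{in}$, hence $A_{j}$ is visited infinitely often, and that the LP objective over $\mathcal{R}_{suf}$ coincides with the mean cyclic cost of (\ref{eq:suffix_prob}) and thus with the ARPS in (\ref{eq:formulate_suffix}). Your write-up is considerably more detailed than the paper's brief sketch---in particular your explicit two-sided correspondence between occupation measures and acceptance-respecting stationary policies is a welcome addition---but the underlying argument is the same.
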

\begin{proof}
 Due to the fact that all input
	flow from the transient class will eventually end up in $A_{j}^{in}$, the second constraint in (\ref{eq:suffix_LP})
	guarantees the states in $A_{j}^{in}$ can be eventually reached from
	$x\in X_{suf}\backslash A_{j}^{in}$.
	Thus, the
	solution of (\ref{eq:suffix_LP}) indicates the accepting states $A_{j}$
	can be visited infinitely often within the AMEC $\Xi_{j}^{\mathcal{R}}$. Based on the construction of $\ensuremath{\mathcal{R}}_{suf}$, the objective function in (\ref{eq:suffix_LP}) represents the mean
	cost of cyclic paths analyzed in (\ref{eq:suffix_prob}), which is exactly the ARPS of suffix MDP $\Xi_{j}^{\mathcal{R}}$ in (\ref{eq:formulate_suffix}).
\end{proof}

\begin{rem}
The above process that constructing suffix MDP in definition \ref{def:suffix}, solving optimization problem (\ref{eq: Pre_LP}), and synthesizing suffix optimal policies (\ref{eq:suffix_policy}) is repetitively applied to every AMEC $\Xi_{j}^{\mathcal{R}}$ of $\Xi_{acc}^{\mathcal{R}}$. 
\end{rem}

To demonstrate the efficiency of our approach, we apply the widely used Round-Robin policy \cite{Baier2008} for comparison in the following example and Section \ref{sec:Case}.
After the agent enters into one AMEC $\Xi_{i}^{\mathcal{R}}=\left(X_{i},U_{i}^{\mathcal{R}}\right)$, an ordered sequence of actions from $U_{i}^{\mathcal{R}}(x), \forall x\in X_{i}$ is created. The Round-Robin policy guides the agent to visit each state by iterating over the ordered actions, and this ensures all states of the AMEC are visited infinitely often (i.e., satisfying the acceptance condition). For decision-making within an AMEC,  the Round-Robin policy does not consider optimality.

\begin{example}
	\begin{figure}
	\centering{}\includegraphics[scale=0.28]{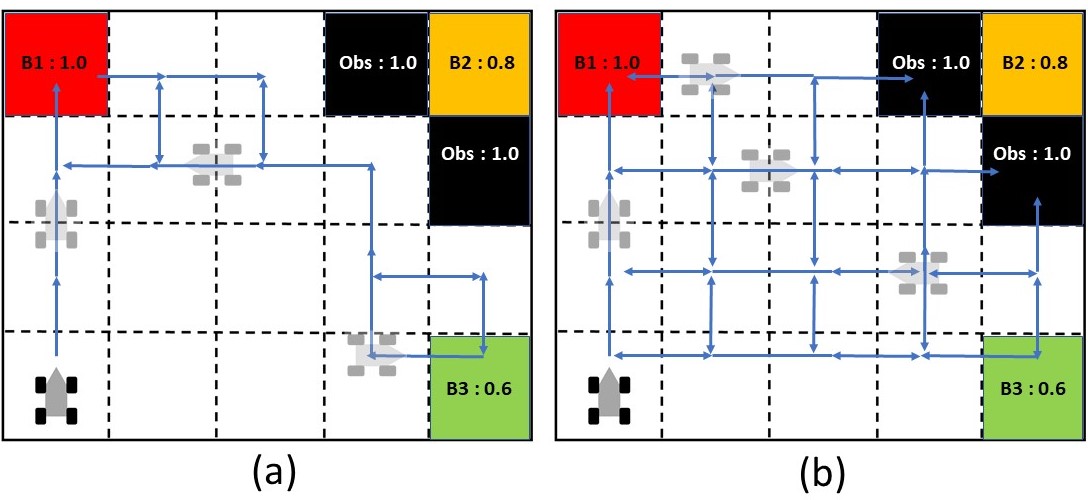}\caption{\label{example_suffix} Simulated trajectories with different suffix policies. (a) Optimal policy generated from our framework. (2) The Round-Robin Policy.}
\end{figure}
    As another running example in Fig. \ref{example_suffix}, we demonstrate the importance of optimizing ARPS to the motion planning after entering into AMECs compared with the Round-Rabin policy.
    The motion uncertainties and the action cost are set as the same as in Example \ref{example1}. An LTL task is considered as $\phi_{suf}=\Always\Eventually\mathtt{B1}\land\Always\Eventually\mathtt{B2}\land\Always\Eventually\mathtt{B3}$ that requires to infinitely often visit regions $\mathtt{B1}$, $\mathtt{B2}$ and $\mathtt{B3}$. $\phi_{suf}$ is infeasible with respect to the corresponding PL-MDP ($\mathtt{B3}$ is surrounded by obstacles). Fig. \ref{example_suffix} shows the results with two different policies. Without minimizing the AVPS, the Round-Rabin policy can not be applied to the infeasible cases using the relaxed product MDP $\mathcal{R}$. In addition, the framework in \cite{Guo2018} returns no solution for this example.
\end{example}

\subsection{Complete Policy and Complexity}

A complete optimal stationary policy $\boldsymbol{\pi}^{*}$ can be obtained by concatenating the procedure of
solving the linear programs in (\ref{eq: Pre_LP}) and (\ref{eq:suffix_LP}) as

\begin{equation}
    \begin{array}{cc}
    \underset{\left\{ y_{x,u}, z_{x,u}\right\} }{\min}\left[(1-\eta)\cdot J_{pre}+\eta\cdot\underset{\Xi_{j}^{\mathcal{R}}\in\Xi_{Acc}^{\mathcal{R}}}{\sum} J_{\Xi_{j}^{\mathcal{R}}}\right]\\
    \text{  }\\
    \text{s.t. } \text{Constraints in } (\ref{eq: Pre_LP}) \text{ and } (\ref{eq:suffix_LP}),\\
    \text{  }\\
    \end{array}
\label{eq: Complete_policy}
\end{equation}
where $J_{pre}, y_{x,u}$  and $J_{\Xi_{j}^{\mathcal{R}}}, z_{x,u}$ are defined in (\ref{eq: Pre_LP}) and (\ref{eq:suffix_LP}) respectively, and $\eta$ is a trade-off parameter to balance the importance of minimizing the ARPS between prefix plan and suffix plan. The (\ref{eq: Complete_policy}) can be solved via any LP solvers i.e., Gurobi \cite{gurobi} and CPLEX \footnote{\url{https://www.ibm.com/analytics/cplex-optimizer}}. Once the optimal solutions $y_{x,u}^{*}$ and $z_{x,u}^{*}$ are generated, we can synthesize the optimal policies $\boldsymbol{\pi}^{*}_{pre}$ and $\boldsymbol{\pi}^{*}_{suf}$ via (\ref{eq:prefix_policy}) and (\ref{eq:suffix_policy}). The complete optimal  policy $\boldsymbol{\pi}^{*}$ can be obtained by concatenating $\pi^{*}_{pre}$ and $\pi^{*}_{suf}$ for all states of $\mathcal{R}$.

Since $\boldsymbol{\pi}^{*}$ is defined over $\mathcal{R}$, to execute the optimal
policy over $\ensuremath{\mathcal{M}}$, we still need to map $\boldsymbol{\pi}^{*}$
to an optimal finite-memory policy $\boldsymbol{\boldsymbol{\boldsymbol{\boldsymbol{\mu}}}}^{*}$ of $\ensuremath{\mathcal{M}}$.
Suppose the agent starts from an initial state $x_{0}=\left(s_{0},l_{0},q_{0}\right)$
and the distribution of optimal actions at $t=0$ is given by $\boldsymbol{\pi}^{*}\left(x_{0}\right)$.
Taking an action $u_{0}^{\mathcal{R}}$ according to $\boldsymbol{\pi}^{*}\left(s_{0}\right)$,
the agent moves to $s_{1}$ and observes its current label $l_{1}$,
resulting in $x_{1}=\left(s_{1},l_{1},q_{1}\right)$ with $q_{1}=\delta\left(q_{0},u_{0}\bigr|_{\ensuremath{\mathcal{A}}}^{\mathcal{R}}\right)$.
Note that $q_{1}$ is deterministic if $u_{0}\bigr|_{\ensuremath{\mathcal{A}}}^{\mathcal{R}}\neq\epsilon$.
The distribution of optimal actions at $t=1$ now becomes $\boldsymbol{\pi}^{*}\left(x_{1}\right)$.
Repeating this process infinitely will generate a path $\boldsymbol{x}_{\mathcal{R}}^{\boldsymbol{\pi}^{*}}=x_{0}x_{1\ldots}$
over $\mathcal{R}$, corresponding to a path $\boldsymbol{s}=s_{0}s_{1}\ldots$
over $\ensuremath{\mathcal{M}}$ with associated labels $L(\boldsymbol{s})=l_{0}l_{1}\ldots$.
Such a process is presented in Algorithm \ref{Alg1}. Since the state
$x_{t}$ is unique given the agent's past path $\boldsymbol{s}\left[:t\right]$
and past labels $L(\boldsymbol{s}\left[:t\right])$ up to $t$, the
optimal finite-memory policy is designed as
\begin{equation}
\boldsymbol{\mu}^{*}\left(\boldsymbol{s}\left[:t\right],L(\boldsymbol{s}\left[:t\right])\right)=\left\{ \begin{array}{cc}
\boldsymbol{\pi}^{*}\left(x_{t}\right), & \text{for }u_{t}\bigr|_{\ensuremath{\mathcal{M}}}^{\mathcal{R}}=a,\\
\text{  } & \text{  }\\
0, & \text{for }\ensuremath{u_{t}\bigr|_{\ensuremath{\mathcal{A}}}^{\mathcal{\mathcal{R}}}=\epsilon}.\\
\end{array}\right.\label{eq:policy mapping}
\end{equation}
From definition \ref{def:relaxed-product}, the state $s_{t}$
in $x_{t}$ remains the same if $\ensuremath{u_{t}\bigr|_{\ensuremath{\mathcal{\mathcal{M}}}}^{\mathcal{\mathcal{R}}}=\epsilon}$ which gives rise to $\boldsymbol{\mu}^{*}\left(\boldsymbol{s}\left[:t\right],L(\boldsymbol{s}\left[:t\right])\right)=0$
in (\ref{eq:policy mapping}). 

\begin{thm}
Given a PL-MDP and an LTL formula $\phi$, the optimal policy $\boldsymbol{\mu}^{*}$ from (\ref{eq: Complete_policy}) and in (\ref{eq:policy mapping}) solves the Problem \ref{Prob1} exactly s.t. achieve multiple objectives in order of decreasing priority:
	1) if $\phi$ is fully feasible, $\mathbb{\Pr}_{\mathcal{M}}^{\boldsymbol{\boldsymbol{\boldsymbol{\boldsymbol{\boldsymbol{\mu}}}}}}\left(\phi\right)\geq\gamma$
	with $\gamma\in (0,1]$; 2) if $\phi$ is infeasible, satisfy $\phi$ as much as possible via minimizing AVPS; 3) minimize AEPS over the infinite horizons. 
\end{thm}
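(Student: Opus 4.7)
The plan is to assemble the theorem from the three building blocks already proved in the excerpt: Theorem~\ref{Thm_Properties} on the structural properties of $\mathcal{R}$, Lemma~\ref{lemma:prefix} on the prefix LP, Lemma~\ref{lemma:suffix} on the suffix LP, together with the priority guarantee of Lemma~\ref{lem:priority}. I would begin by arguing that solving Problem~\ref{Prob1} on $\mathcal{M}$ is equivalent to solving Problem~\ref{problem2} on $\mathcal{R}$. By Def.~\ref{def:relaxed-product}, each path $\boldsymbol{s}_{\infty}^{\boldsymbol{\mu}}$ in $\mathcal{M}$ lifts uniquely to a path $\boldsymbol{x}_{\infty}^{\boldsymbol{\pi}}$ in $\mathcal{R}$ whose action projection $u\bigr|_{\mathcal{M}}^{\mathcal{R}}$ recovers the original control. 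Together with the mapping (\ref{eq:policy mapping}), this gives a bijection between admissible policies on $\mathcal{M}$ and admissible policies on $\mathcal{R}$ that preserves both the satisfaction probability (since $\square\lozenge\Acc$ holds along a lifted path exactly when $L(\boldsymbol{s}_{\infty}^{\boldsymbol{\mu}})\models\phi$ in the zero-violation feasible case) and the stage costs $c_{A}^{\mathcal{R}}, c_{V}^{\mathcal{R}}$ defined to mirror $c_{A}, c_{V}$. Hence the AEPS, AVPS, and satisfaction probability on $\mathcal{M}$ match their counterparts on $\mathcal{R}$.

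Next I would use Lemma~\ref{lem:induced_markov_chain} to decompose any policy-induced Markov chain on $\mathcal{R}$ into a transient class and recurrent classes, matching the state partition in Section~\ref{subsec:state}. This justifies splitting the infinite-horizon optimization into the prefix problem on $\mathcal{R}_{pre}$ (Def.~\ref{def:prefix}), which governs the probability of reaching $X_{tr}\cup X_{tr}'$, and the suffix problem on $\mathcal{R}_{suf}$ for each $\Xi_{j}^{\mathcal{R}}\in\Xi_{acc}^{\mathcal{R}}$, which governs the long-run average cost inside an AMEC. Lemma~\ref{lemma:prefix} then yields $\Pr^{\boldsymbol{\pi}_{pre}^{*}}_{x_{0},\mathcal{R}_{pre}}(\lozenge v)\geq\gamma$, and Property~2 of Theorem~\ref{Thm_Properties} ensures at least one AMEC exists, so the prefix LP is always feasible (or feasible via the slack reformulation in Remark~\ref{rem:slack}). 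Lemma~\ref{lemma:suffix} shows that once $\Xi_{j}^{\mathcal{R}}$ is entered, $\boldsymbol{\pi}_{suf}^{*}$ visits $\Acc$ infinitely often and minimizes the cyclic ARPS. Concatenating (\ref{eq:prefix_policy}) and (\ref{eq:suffix_policy}) via the coupled LP (\ref{eq: Complete_policy}) therefore produces a policy that jointly satisfies the probability threshold and minimizes the two ARPS contributions weighted by $\eta$.

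The priority ordering of Problem~\ref{Prob1} then follows from Lemma~\ref{lem:priority}: choosing $\beta\gg 1$ makes the exponential factor in $c^{\mathcal{R}}$ dominate whenever any nonzero violation occurs along a path, so minimizing $J_{pre}$ and $J_{\Xi_{j}^{\mathcal{R}}}$ automatically minimizes $J_{V}$ first and uses the cost $J_{E}$ only to break ties among zero-violation policies. In the feasible case, Property~3 of Theorem~\ref{Thm_Properties} guarantees the existence of a $\boldsymbol{\pi}$ inducing zero AVPS, so the exponential factor collapses to $1$ and the LP recovers the standard cost-optimal LTL synthesis, giving $\Pr^{\boldsymbol{\mu}^{*}}_{\mathcal{M}}(\phi)\geq\gamma$ and minimum AEPS. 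In the infeasible case, all accepting paths incur positive violation, and $\boldsymbol{\mu}^{*}$ minimizes AVPS (second priority) and then AEPS (third priority). Finally, the mapping (\ref{eq:policy mapping}) transfers the optimality from $\mathcal{R}$ back to $\mathcal{M}$: the finite-memory structure of $\boldsymbol{\mu}^{*}$ records the current automaton state $q_{t}$ via the past trace $(\boldsymbol{s}[:t],L(\boldsymbol{s}[:t]))$, so the induced process on $\mathcal{M}$ has identical path distribution to $\boldsymbol{\pi}^{*}$ on $\mathcal{R}$.

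The main obstacle I anticipate is rigorously justifying that the coupled LP (\ref{eq: Complete_policy}), for any $\eta\in(0,1)$, actually preserves the lexicographic priority. Each individual LP inherits the priority via Lemma~\ref{lem:priority}, but in the coupled formulation one must verify that the initial-distribution coupling $D_{tr}^{\mathcal{R}}(x)$ of the suffix LP respects the prefix occupancy $y^{*}_{\hat{x},u^{\mathcal{R}}}$ so that the two optimizations are not in conflict. I would address this by arguing that the flow-conservation constraints in (\ref{eq: Pre_LP}) and (\ref{eq:suffix_LP}) exactly match at the boundary $X_{tr}\cup X_{tr}'$, so the solution $(y^{*},z^{*})$ of (\ref{eq: Complete_policy}) defines a globally consistent stationary distribution on $\mathcal{R}$ whose objective value decomposes additively, allowing the priority induced by $\beta$ to propagate independently through the prefix and suffix components.
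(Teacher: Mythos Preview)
Your proposal is correct and follows essentially the same approach as the paper: invoke Theorem~\ref{Thm_Properties}, Lemma~\ref{lemma:prefix}, and Lemma~\ref{lemma:suffix} to show that $\boldsymbol{\pi}^{*}$ solves Problem~\ref{problem2}, then appeal to the equivalence of Problems~\ref{Prob1} and~\ref{problem2} and the policy projection (\ref{eq:policy mapping}) to transfer the result to $\mathcal{M}$. The paper's own proof is considerably terser---it simply cites these three results and the equivalence without spelling out the priority argument via Lemma~\ref{lem:priority} or the coupled-LP consistency issue you flag---so your expanded treatment, including the explicit handling of the lexicographic ordering and the boundary-matching of the flow constraints, fills in details the paper leaves implicit.
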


\begin{proof}
First, the optimal policy $\boldsymbol{\pi}^{*}$ solves Problem \ref{problem2} exactly. Such a conclusion can be verified directly based on Theorem \ref{Thm_Properties}, Lemma \ref{lemma:prefix}, and Lemma \ref{lemma:suffix}. Because Problem \ref{Prob1} and Problem \ref{problem2} are equivalent, the policy projection in (\ref{eq:policy mapping}) finds a policy in $\mathcal{M}$ that solves Problem \ref{Prob1} exactly \cite{Baier2008}.
\end{proof}

In Alg. \ref{Alg1}, the overall policy synthesis is summarized in lines 1-12 of Alg. Note that the optimization process of suffix plan (line 9-11) is applied to every AMEC of $\mathcal{R}$.  After obtaining the complete optimal policies, the process of executing such a policy for PL-MDP $\mathcal{M}$ is outlined in lines 13-24.

\begin{rem}
The complete policy developed in the work can handle both feasible and infeasible cases simultaneously, and AMECs of relaxed product MDP are computed off-line once based on the algorithms of \cite{Baier2008}.
\end{rem}

\subsection{Complexity Analysis}
\label{sec:complexity}
The maximum number of states is $\left|X\right|=\left|S\right|\times\left|L_{max}\left(S\right)\right|\times\left|Q\right|$,
where $\left|Q\right|$ is determined by the LDBA $\ensuremath{\mathcal{A}}_{\phi}$,
$\left|S\right|$ is the size of the environment, and $L_{max}\left(S\right)$
is the maximum number of labels associated with a state $s\in S$.
Due to the consideration of relaxed product MDP and the extended actions,
the maximum complexity of actions available at $x_{0}=\left(s_{0},l_{0},q_{0}\right)\in X$
is $O\left(\left|\mathcal{A}\left(s\right)\right|\times\left|\Sigma\cup\left\{ \epsilon\right\}\right|\right)$.
From \cite{Baier2008}, the complexity of computing AMECs for $\mathcal{R}$
is $O\left(\left|X\right|^{2}\right)$. The size of LPs
in (\ref{eq: Pre_LP}) and (\ref{eq:suffix_LP}) is linear with respect
to the number of transitions in $\mathcal{R}$ and can be solved in
polynomial time \cite{Dantzig1998}.

\section{Case Studies\label{sec:Case}}

Here considers a mobile agent operating in a grid environment, which is a commonly used benchmark for probabilistic model checking in the literature \cite{Sadigh2014,Guo2018,hasanbeig2018logically, hasanbeig2019certified}.
There are
properties of interest associated with the cells. To model environment
uncertainties, these properties are assumed to be probabilistic. We consider the same motion uncertainties as Example \ref{example1}. The
agent is allowed to transit between adjacent cells or stay in a cell,
i.e., the action space is $\left\{ \mathtt{Up,Right,Down,Left},\mathtt{Stay}\right\}$, and the action costs are $[ 3, 4, 2, 3, 1]$.
To model the agent\textquoteright s motion uncertainty caused by actuation
noise and drifting, the agent's motion is also assumed to be probabilistic. For instance, the robot may successfully take the desired action with a
probability of $0.85$, and there's a probability of $0.15$ to take other perpendicular actions based on uniform distributions. There is no motion uncertainty for the action of "$\text{Stay}$".
In the following cases, the algorithms developed
in Section \ref{sec:Solution} are implemented, where $\beta=100$
is employed to encourage a small violation of the desired task if the task
is infeasible. The desired satisfaction probability is set
as $\gamma=0.9$. Gurobi \cite{gurobi} is used to solve the linear program problems
in (\ref{eq: Pre_LP}) and (\ref{eq:suffix_LP}). All algorithms are
implemented in Python 2.7, and Owl \cite{Kretinsky2018} is used to
convert LTL formulas into LDBA. All simulations are carried out on
a laptop with a 2.60 GHz quad-core CPU and 8GB of RAM. 

\subsection{Case 1: Feasible Tasks}
\label{subsec:case1}

\begin{figure}
	\centering{}\includegraphics[scale=0.26]{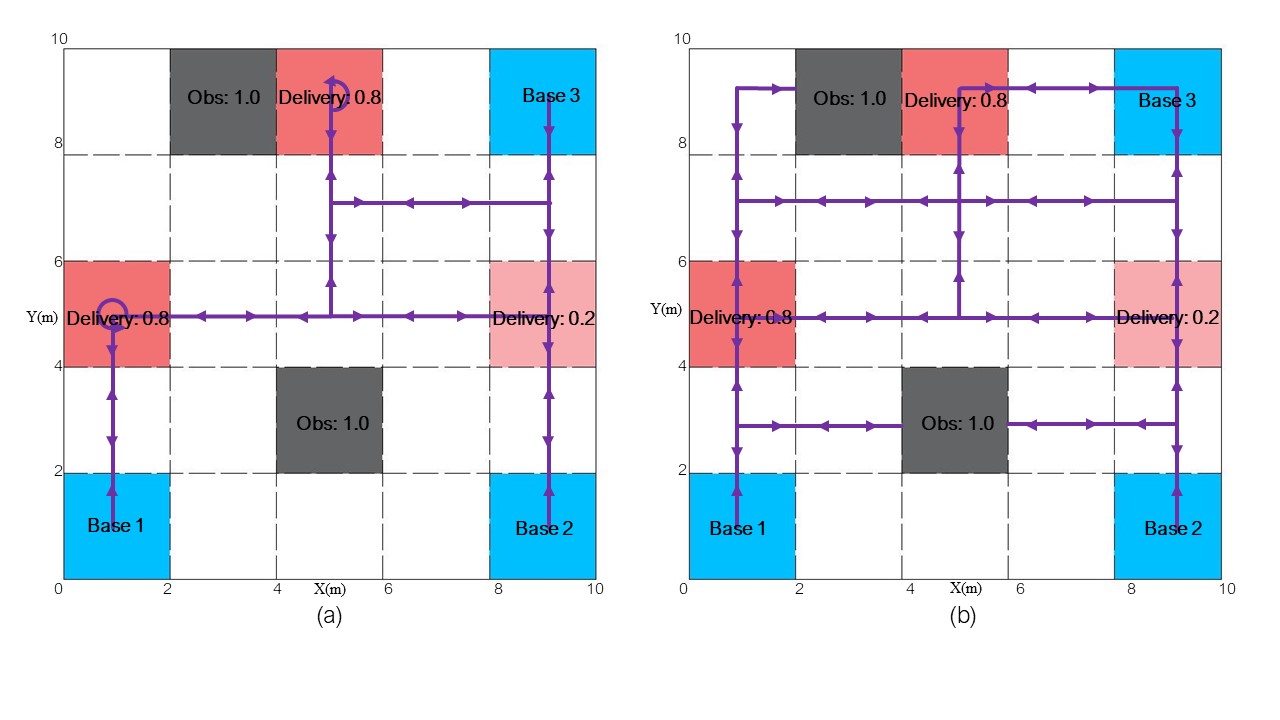}\caption{\label{fig:case_study1.1}Simulated trajectories
		by the optimal policy in (a) and the Round-Robin policy in (b). The
			line arrows represent the directions of movement and the circle arrows
			represent the $\mathtt{Stay}$ action.}
\end{figure}
\begin{figure}
	\centering{}\includegraphics[scale=0.45]{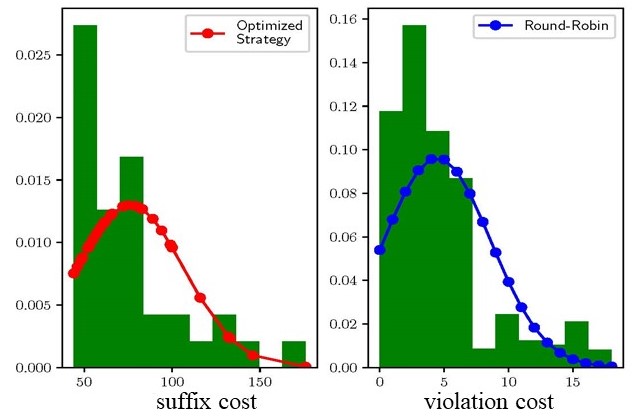}\caption{\label{fig:case_study1.2} (a) Normalized distribution of the plan
		suffix cost under the optimal policy. (b) Normalized distribution
		of the violation cost under the Round-Robin policy.}
\end{figure}

This case considers motion planning in an environment where the desired
task can be completely fulfilled.
Suppose the agent is required to perform a surveillance task in a
workspace as shown in Fig. \ref{fig:case_study1.1} and the task specification
is expressed in the form of LTL formula as
\begin{equation}
\begin{aligned}\varphi_{case1}= & \left(\oblong\lozenge\mathtt{base}1\right)\land\left(\oblong\lozenge\mathtt{base}2\right)\land\left(\oblong\lozenge\mathtt{base}3\right)\\
& \land\oblong\left(\varphi_{one}\rightarrow\Circle\left(\left(\lnot\varphi_{one}\right)\cup\mathtt{Delivery}\right)\right)\\
& \land\oblong\lnot\mathtt{Obs},
\end{aligned}
\label{eq:case1}
\end{equation}
where $\varphi_{one}=\mathtt{base}1\lor\mathtt{base}2\lor\mathtt{base}3$.
The LTL formula in (\ref{eq:case1}) means that the agent visits one
of the base stations and then goes to one of the delivery stations
while avoiding obstacles. All base stations need to be visited. Based on the environment and motion uncertainties, the LTL formula 
$\varphi_{case1}$ with respect to PL-MDP is feasible.
The
corresponding LDBA has $35$ states and $104$ transitions, and the
PL-MDP has 28 states. It took $11.2$s to construct the relaxed product
MDP and $0.15$s to synthesize the optimal policy via Alg. \ref{Alg1}.
To demonstrate the efficiency, we also compare the optimal policies generated from this
with the Round-Robin policy.

\begin{figure}
	\centering{}\includegraphics[scale=0.24]{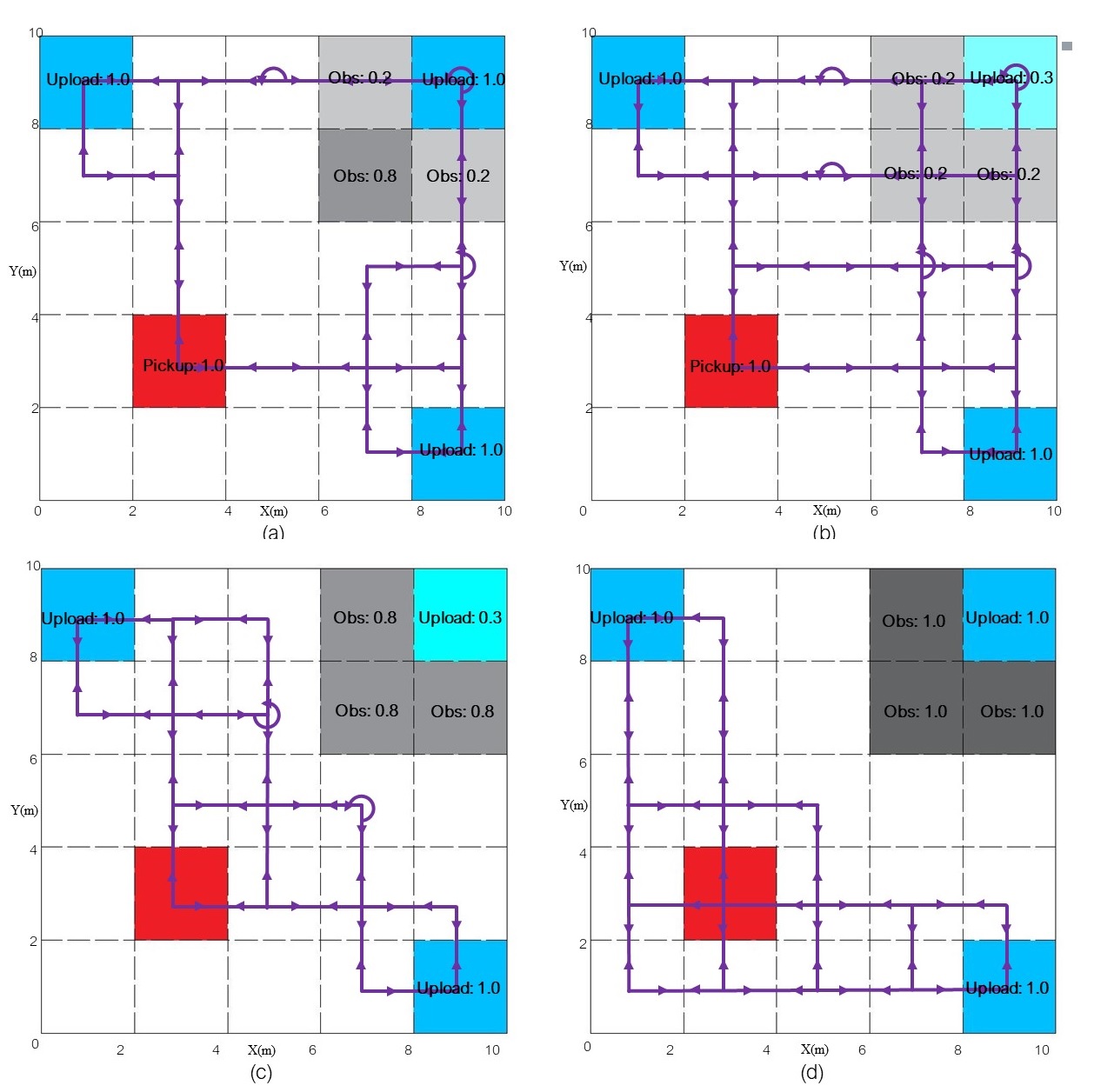}\caption{\label{fig:case_study2.1} Simulated trajectories by the optimal policy
		for different environments.}
\end{figure}

Fig. \ref{fig:case_study1.1} (a) and (b) show the
	trajectories generated by our optimal policy and the Round-Robin policy,
	respectively. The arrows represent the directions of movement, and
	the circles represent the $\mathtt{Stay}$ action. Clearly, the optimal
	policy is more efficient in the sense that fewer cells were visited
	during mission operation. In Fig. \ref{fig:case_study1.2}, 1000 Monte
	Carlo simulations were conducted. Fig. \ref{fig:case_study1.2} (a)
	shows the distribution of the plan suffix cost. It indicates that,
	since the task is completely feasible, the optimal policy in this
	work can always find feasible plans with zero AVPS. Since Round-Robin
	policy would select all available actions enabled at each state of
	AMEC, Fig. \ref{fig:case_study1.2} (b) shows the distribution of
	the violation cost under Round-Robin policy. 

\subsection{Case 2: Infeasible Tasks }

\begin{figure}
	\centering{}\includegraphics[scale=0.45]{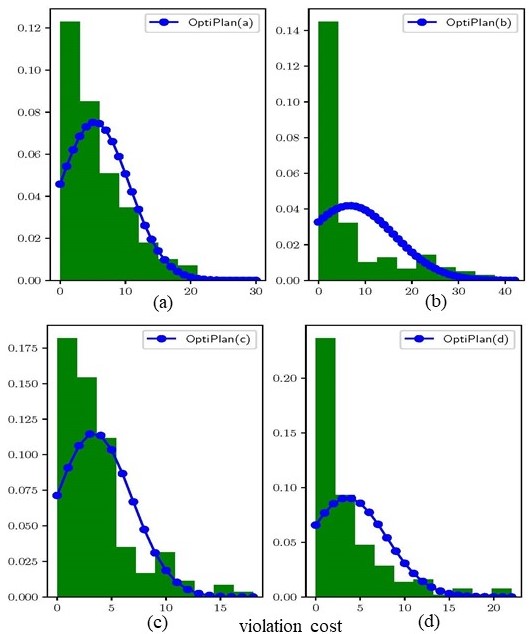}\caption{\label{fig:case_study2.2} Normalized distributions of the violation
		cost for different environments.}
\end{figure}

This case considers motion planning in an environment where the desired
task might not be fully executed. In Fig. \ref{fig:case_study2.1},
Suppose the agent is tasked to visit the $\mathtt{pickup}$ station
and then goes to one of the $\mathtt{upload}$ stations while avoiding
obstacles. In addition, the agent is not allowed to visit the $\mathtt{pickup}$
station before visiting an $\mathtt{upload}$ station, and all $\mathtt{upload}$
stations need to be visited. Such a task can be written in an LTL
formula as
\begin{equation}
\begin{aligned}\varphi_{case2}= & \oblong\lozenge\mathtt{Pickup}\land\oblong\lnot\mathtt{Obs}\\
& \land\oblong\left(\mathtt{Pickup}\rightarrow\Circle\left(\left(\lnot\mathtt{Pickup}\right)\cup\varphi_{one}\right)\right)\\
& \land\oblong\lozenge\mathtt{Upload1}\land\oblong\lozenge\mathtt{Upload2}\land\oblong\lozenge\mathtt{Upload3},
\end{aligned}
\label{eq:case2}
\end{equation}
where $\varphi_{one}=\mathtt{Upload1}\lor\mathtt{Upload2}\lor\mathtt{Upload3}$.
Fig. \ref{fig:case_study2.1} (a)-(c) show infeasible tasks
since the cells surrounding $\mathtt{Upload2}$ are occupied by obstacles
probabilistically. Fig. \ref{fig:case_study2.1} (d) shows an infeasible
environment since $\mathtt{Upload2}$ is surrounded by obstacles for
sure and can never be reached. 

Simulation results show how the relaxed product MDP can synthesize
an optimal plan when no AMECs or no ASCCs exist. Note that the\textcolor{blue}{{}
}algorithm in \cite{Guo2018} returns no solution if no ASCCs exist.
The resulting LDBA has $43$ states and $136$ transitions, and it
took 0.15s on average to synthesize the optimal policy. The
	simulated trajectories are shown in Fig. \ref{fig:case_study2.1}
	with arrows indicating the directions of movement. In Fig. \ref{fig:case_study2.1}
	(a), since the probability of $\mathtt{Upload2}$ is high and the
	probabilities of surrounding obstacles are relatively low, the planning
	tries to complete the desired task $\varphi_{case2}$. In Fig. \ref{fig:case_study2.1}
	(b) and (c), the probability of $\mathtt{Upload2}$ is 0.3 while the
	probabilities of surrounding obstacles are 0.2 and 0.8, respectively.
	The agent still tries to complete $\varphi_{case2}$ by visiting $\mathtt{Upload2}$
	in Fig. \ref{fig:case_study2.1} (b) while the agent is relaxed to
	not visit $\mathtt{Upload2}$ in Fig. \ref{fig:case_study2.1} (c)
	due to the high risk of running into obstacles and low probability
	of $\mathtt{Upload2}$. Since $\varphi_{case2}$ is completely infeasible
	in Fig. \ref{fig:case_study2.1} (d), the motion plan is revised to
	not visit $\mathtt{Upload2}$ and select paths with the minimum violation
	and implementation cost to mostly satisfy $\varphi_{case2}$. To illustrate the ability to minimize AVPS for infeasible cases, we analyze the violations such that Fig. 9 shows the distribution of AVPS for 1000 Monte Carlo simulations corresponding to the four different infeasible cases in
	Fig. \ref{fig:case_study2.1}, respectively. It can be observed that there is a high probability of obtaining a small AVPS with this framework.

\subsection{Parameter Analysis and Results Comparison}

\begin{table}
	\caption{\label{tab:parameter}The expected execution costs using different parameters $\eta$ for $\phi_{case1}$.}
	\centering{}%
	\begin{tabular}{c|cccc}
		\hline 
		$\eta$ & Total cost & Cyclic cost & Mean Cost \tabularnewline
		parameter & Prefix & Suffix & Suffix \tabularnewline
		\hline 
		$0$ & 36.4 & 178.5 & 2.823 \tabularnewline
		$0.2$ & 36.7  & 66.1 & 2.545 \tabularnewline
		$0.4$ & 36.7 & 65.8 & 2.540 \tabularnewline
		$0.6$ & 39.4 & 62.1 & 2.538 \tabularnewline
		$0.8$ & 50.9 & 57.2 & 2.520 \tabularnewline
		$1.0$ & 115.6 & 55.9 & 2.512 \tabularnewline
		\hline 
	\end{tabular}
\end{table}

\begin{figure}[t]
	\centering{}\includegraphics[scale=0.31]{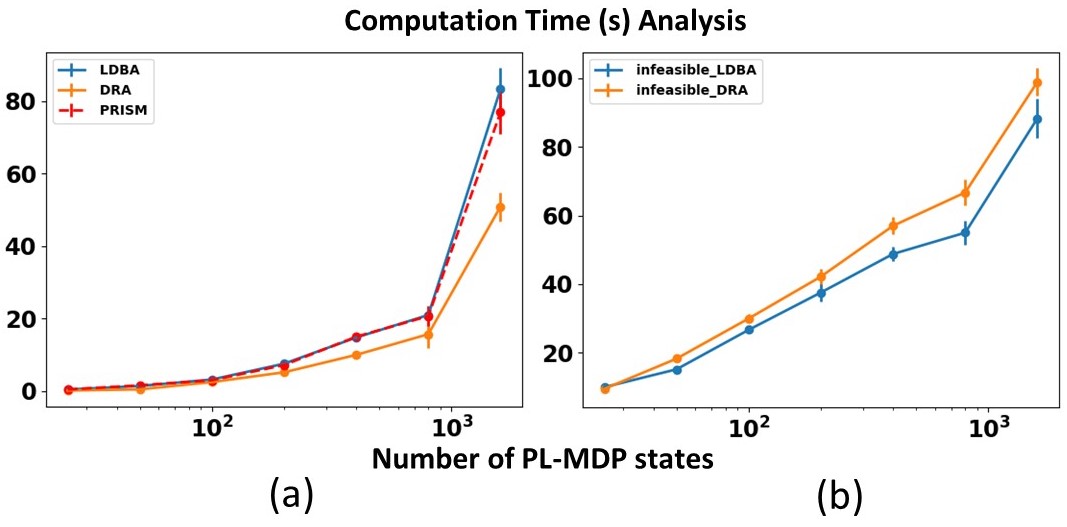}\caption{\label{fig:complexity}Computation time for different methods. (a) Computation Time of solving the optimization process using "LDBA", "DRA", PRISM for feasible cases. (b)
			Computation Time of overall process (construct models and solve the optimization) using "LDBA", "DRA" for infeasible cases.}
\end{figure}

In this section, we first apply the feasible task $\phi_{case1}$ to analyze the effect of $\eta$ in (\ref{eq: Complete_policy}) on the trade-off between the optimal expected execution costs of prefix and suffix plans. The results are shown in Table \ref{tab:parameter}.
Then we compared our framework referred as "LDBA" with widely used model-checking tool PRISM\cite{Kwiatkowska2011}. To implement PRSIM with the PL-MDP, we use the package \cite{Guo2018} to translate the relaxed product automaton into PRISM language and verify the LDBA accepting condition. We select the option of PRISM  "multi-objective property" that finds the policies satisfying task with the risk lower-bounded by $1-\gamma$ while minimizing cumulative reward. The tool PRSIM can only synthesize the optimal prefix policy and does not support the optimization of suffix structure to handle infeasible cases.  Thus, we compare the computation time for the feasible LTL task $\phi_{case1}$ with its environments, divide each grid of the environment to construct various workspace sizes, and run $10$ times for each environment with the same size where the initial locations are selected from uniform distributions. We compare such complexity of the feasible case $\phi_{case1}$ with the works \cite{Guo2018},\cite{Forejt2011,Forejt2012} named as "DRA" that uses the Deterministic Rabin Automaton and standard product MDP to synthesize solutions. The results are shown in Fig. \ref{fig:complexity} (a). we can see the computation time of the optimization process with PRISM is almost the same. And since relaxed product MDP is more connected than standard product MDP, the computation time of our work is a little higher than the works  \cite{Guo2018},\cite{Forejt2011,Forejt2012} for feasible cases.

As for infeasible cases, even though the algorithm \cite{Guo2018} returns no solutions for cases described in Fig. \ref{example1} (a), Fig. \ref{example_suffix}, and Fig. \ref{fig:case_study2.1} (d), it still has solutions for case  $\phi_{case2}$ in  Fig. \ref{fig:case_study2.1} (a) (b) (c). We refer to the algorithm \cite{Guo2018} as "infeasible DRA", and our framework as "infeasible LDBA". To analyze the computational time, we select the environment of Fig. \ref{fig:case_study2.1} (b). We also divide each grid to generate various workspace sizes and run $10$ times for each environment with the same size where the initial locations are selected from uniform distributions. The results of computation time for the overall process are shown in Fig. \ref{fig:complexity} (b). It shows that our algorithm has better computational performance because the work  \cite{Guo2018} needs to construct AMECs first to check the feasibility and then construct ASCCs to start the optimization process, which is computationally expensive.

\subsection{Case 3: Large Scale Analysis}

\begin{table}
	\caption{\label{tab:case3_1}The comparison of workspace size and computation
		time.}
	\centering{}%
	\begin{tabular}{c|cccc}
		\hline 
		Workspace & $\mathcal{M}$ & $\mathcal{R}$ & AMECs & $\boldsymbol{\pi}^{*}$\tabularnewline
		size{[}cell{]} & Time{[}s{]} & Time{[}s{]} & Time{[}s{]} & Time{[}s{]}\tabularnewline
		\hline 
		$5\times5$ & 0.14 & 0.56 & 0.64 & 0.45\tabularnewline
		$10\times10$ & 1.59 & 1.34 & 1.88 & 3.20\tabularnewline
		$30\times30$ & 25.4 & 5.20 & 7.41 & 20.71\tabularnewline
		$50\times50$ & 460.1 & 28.95 & 25.89 & 124.03\tabularnewline
		$100\times100$ & 843.9 & 41.47 & 39.80 & 276.05\tabularnewline
		\hline 
	\end{tabular}
\end{table}
This case considers motion planning in a larger scale problem. To
show the efficiency of using LDBA, we first repeat
the task of Case 1 for different workspace sizes. Table \ref{tab:case3_1}
lists the computation time for the construction of PL-MDP, the relaxed
product MDP, AMECs, and the optimal plan $\boldsymbol{\pi}^{*}$ in different workspace
sizes. 

\begin{figure}[t]
	\centering{}\includegraphics[scale=0.25]{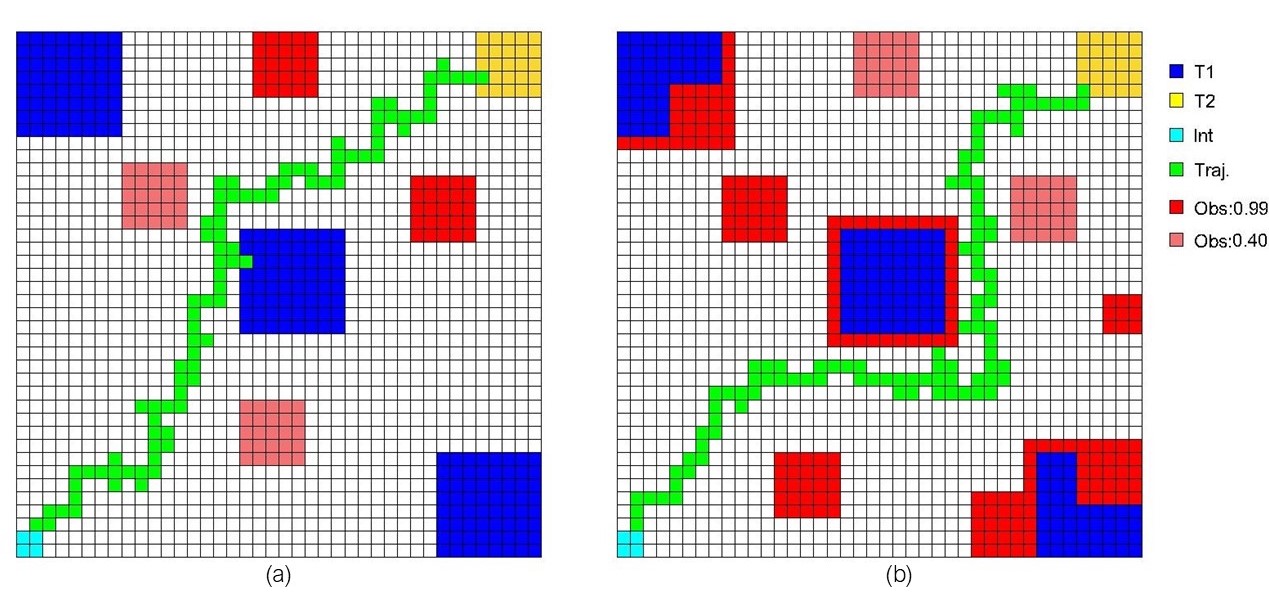}\caption{\label{fig:large_scale}Simulation results for the
			specification $\varphi_{case3}$. (a) shows a feasible case and (b)
			shows an infeasible case where $T1$ can not be visited.}
\end{figure}

To demonstrate the scalability and computational complexity, consider
a $40\times40$ workspace as in Fig. \ref{fig:large_scale}. The desired
task expressed in an LTL formula is given by
\[
\varphi_{case3}=\boxempty\lnot\mathtt{Obs}\wedge\diamondsuit\mathtt{T1}\wedge\boxempty\left(\mathtt{T1}\rightarrow\varbigcirc\left(\lnot\mathtt{T1}\cup\mathtt{T2}\right)\right),
\]
where $\mathtt{T1}$ and $\mathtt{T2}$ represent two targets properties
to be visited sequentially. The agent starts from the left corner
(i.e., the light blue cell). The LDBA associated with $\varphi_{case3}$
has $6$ states and $17$ transitions, and it took $27.3$ seconds
to generate an optimal plan. The simulation trajectory is shown in
Fig. \ref{fig:large_scale}. Note that AMECs of $\mathcal{P}$ only
exist in Fig. \ref{fig:large_scale} (a). Neither AMECs nor ASCCs
of $\mathcal{P}$ exist in Fig. \ref{fig:large_scale} (b), since
$\mathtt{T1}$ is surrounded by obstacles and can not be reached.
Clearly, the desired task $\varphi_{case3}$ can
	be mostly and efficiently executed whenever the task is feasible or
	not.

\subsection{Mock-up Office Scenario}

In this section, we verify our algorithm for high-level decision-making problems in a real-world environment, and show that the framework can be adopted with any stochastic abstractions and low-level noisy controllers to formulate a hierarchical architecture.
Consider a TIAGo robot operating in an office environment
	as shown in Fig. \ref{fig:case_study3}, which can be modeled in ROS
	Gazebo in Fig. \ref{fig:case_study3}. The mock-up office consists
	of $6$ rooms $S_{i}$, $i=0,\ldots,5$, and $4$ corridors $C_{i}$,
	$i=0,\ldots,3$. The TIAGo robot can follow a collision-free path
	from the center of one region to another without crossing other regions
	using obstacle-avoidance navigation. The marked area $C_{2}$ represents
	an inclined surface, where more control effort is required by TIAGo
	robot to walk through. To model motion uncertainties, it is assumed
	that the robot can follow its navigation controller moving
	to the desired region with a probability of $0.9$ and fail by moving to
	the adjacent region with a probability of $0.1$. The resulting MDP has
	$10$ states. 

The LTL task is formulated as 
	\[
	\varphi_{case4}=\oblong\lozenge S_{0}\land\oblong\lozenge S_{1}\land\oblong\lozenge S_{2}\land\oblong\lozenge S_{3}\land\oblong\lozenge S_{4}\land\oblong\lozenge S_{5},
	\]
	which requires the robot to periodically serve all rooms. Its corresponding
	LDBA has $6$ states with $6$ accepting states and the relaxed product
	MDP has $60$ states. The simulated trajectories are shown in Fig.
	\ref{fig:case_study3} (a) and (b). The task is satisfied exactly
	in Fig. \ref{fig:case_study3} (a) because all rooms are accessible.
	In Fig. \ref{fig:case_study3} (b), $\varphi_{case4}$ is only
	feasible since rooms $S_{2}$ and $S_{5}$ are closed. Hence, the
	robot revises its plan to only visit rooms $S_{0}$, $S_{1}$, $S_{3}$,
	and $S_{4}$. In both Fig. \ref{fig:case_study3} (a) and (b), $C_{2}$
	is avoided for energy efficiency.


\begin{figure}
	\centering{}\includegraphics[scale=0.40]{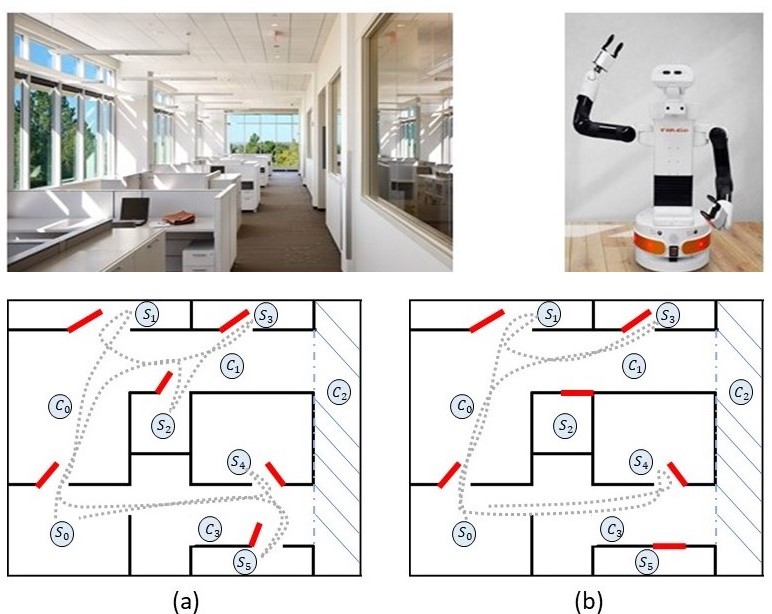}\caption{\label{fig:case_study3}  Generated trajectories for the task using  $\varphi_{case4}$ in the mock-up office scenario
		with the TIAGo robot.
	}
\end{figure}

\section{Conclusion}

A plan synthesis algorithm for probabilistic motion
	planning is developed for both feasible and infeasible tasks.
	LDBA is employed to evaluate the LTL satisfaction with the B\"uchi acceptance condition. The extended actions and relaxed product MDP are developed to
	allow probabilistic motion revision if the workspace is not fully feasible to the
	desired mission. Cost optimization is studied in both plan
	prefix and plan suffix of the trajectory.
	Inspired by the existing works, e.g., \cite{cai2021safe},
	future research will consider the optimization of multi-objective over
	continuous space with safety-critical constraints. Additional in-depth research includes extending this work to multi-agent systems with cooperative
	tasks.
	
\section{Acknowledgement}
We thank Meng Guo for the open-source software.
We also thank the editor and anonymous reviewers for their time and efforts in helping improve the paper.

\bibliographystyle{IEEEtran}
\bibliography{BibMaster}

\begin{thebibliography}{10}
\providecommand{\url}[1]{#1}
\csname url@samestyle\endcsname
\providecommand{\newblock}{\relax}
\providecommand{\bibinfo}[2]{#2}
\providecommand{\BIBentrySTDinterwordspacing}{\spaceskip=0pt\relax}
\providecommand{\BIBentryALTinterwordstretchfactor}{4}
\providecommand{\BIBentryALTinterwordspacing}{\spaceskip=\fontdimen2\font plus
\BIBentryALTinterwordstretchfactor\fontdimen3\font minus
  \fontdimen4\font\relax}
\providecommand{\BIBforeignlanguage}[2]{{%
\expandafter\ifx\csname l@#1\endcsname\relax
\typeout{** WARNING: IEEEtran.bst: No hyphenation pattern has been}%
\typeout{** loaded for the language `#1'. Using the pattern for}%
\typeout{** the default language instead.}%
\else
\language=\csname l@#1\endcsname
\fi
#2}}
\providecommand{\BIBdecl}{\relax}
\BIBdecl

\bibitem{Baier2008}
C.~Baier and J.-P. Katoen, \emph{Principles of model checking}.\hskip 1em plus
  0.5em minus 0.4em\relax MIT press, 2008.

\bibitem{Kloetzer2008}
M.~Kloetzer and C.~Belta, ``A fully automated framework for control of linear
  systems from temporal logic specifications,'' \emph{IEEE Transactions on
  Automatic Control}, vol.~53, no.~1, pp. 287--297, 2008.

\bibitem{Kantaros2018}
Y.~Kantaros and M.~M. Zavlanos, ``Sampling-based optimal control synthesis for
  multirobot systems under global temporal tasks,'' \emph{IEEE Transactions on
  Automatic Control}, vol.~64, no.~5, pp. 1916--1931, 2018.

\bibitem{srinivasan2020control}
M.~Srinivasan and S.~Coogan, ``Control of mobile robots using barrier functions
  under temporal logic specifications,'' \emph{IEEE Transactions on Robotics},
  vol.~37, no.~2, pp. 363--374, 2020.

\bibitem{Ulusoy2014a}
A.~Ulusoy, T.~Wongpiromsarn, and C.~Belta, ``Incremental controller synthesis
  in probabilistic environments with temporal logic constraints,''
  \emph{International Journal of Robotics Research}, vol.~33, no.~8, pp.
  1130--1144, 2014.

\bibitem{jagtap2020formal}
P.~Jagtap, S.~Soudjani, and M.~Zamani, ``Formal synthesis of stochastic systems
  via control barrier certificates,'' \emph{IEEE Transactions on Automatic
  Control}, 2020.

\bibitem{Lahijanian2012}
M.~Lahijanian, S.~B. Andersson, and C.~Belta, ``Temporal logic motion planning
  and control with probabilistic satisfaction guarantees,'' \emph{IEEE
  Transactions on Robotics}, vol.~28, no.~2, pp. 396--409, 2012.

\bibitem{Nuzzo2019}
P.~Nuzzo, J.~Li, A.~L. Sangiovanni-Vincentelli, Y.~Xi, and D.~Li, ``Stochastic
  assume-guarantee contracts for cyber-physical system design,'' \emph{ACM
  Trans. Embed Comput Syst.}, vol.~18, no.~1, p.~2, 2019.

\bibitem{Sadigh2014}
D.~Sadigh, E.~S. Kim, S.~Coogan, S.~S. Sastry, and S.~A. Seshia, ``A learning
  based approach to control synthesis of markov decision processes for linear
  temporal logic specifications,'' in \emph{53rd IEEE Conference on Decision
  and Control}.\hskip 1em plus 0.5em minus 0.4em\relax IEEE, 2014, pp.
  1091--1096.

\bibitem{hasanbeig2018logically}
M.~Hasanbeig, A.~Abate, and D.~Kroening, ``Logically-constrained neural fitted
  q-iteration,'' \emph{arXiv preprint arXiv:1809.07823}, 2018.

\bibitem{hasanbeig2019certified}
------, ``Certified reinforcement learning with logic guidance,'' \emph{arXiv
  preprint arXiv:1902.00778}, 2019.

\bibitem{Modular_CAI}
M.~Cai, M.~Hasanbeig, S.~Xiao, A.~Abate, and Z.~Kan, ``Modular deep
  reinforcement learning for continuous motion planning with temporal logic,''
  \emph{IEEE Robotics and Automation Letters}, vol.~6, no.~4, pp. 7973--7980,
  2021.

\bibitem{cai2021safe}
M.~Cai and C.-I. Vasile, ``Safety-critical modular deep reinforcement learning
  with temporal logic through gaussian processes and control barrier
  functions,'' \emph{arXiv preprint arXiv:2109.02791}, 2021.

\bibitem{Svorevnova2013}
M.~Svore{\v{n}}ov{\'a}, I.~{\v{C}}ern{\'a}, and C.~Belta, ``Optimal control of
  mdps with temporal logic constraints,'' in \emph{IEEE Conference on Decision
  and Control}, 2013, pp. 3938--3943.

\bibitem{Smith2011}
S.~L. Smith, J.~Tumova, C.~Belta, and D.~Rus, ``Optimal path planning for
  surveillance with temporal-logic constraints,'' \emph{International Journal
  of Robotics Research}, vol.~30, no.~14, pp. 1695--1708, 2011.

\bibitem{Ding2014a}
X.~Ding, S.~L. Smith, C.~Belta, and D.~Rus, ``Optimal control of {Markov}
  decision processes with linear temporal logic constraints,'' \emph{IEEE
  Transactions on Automatic Control}, vol.~59, no.~5, pp. 1244--1257, 2014.

\bibitem{Guo2018}
M.~Guo and M.~M. Zavlanos, ``Probabilistic motion planning under temporal tasks
  and soft constraints,'' \emph{IEEE Transactions on Automatic Control},
  vol.~63, no.~12, pp. 4051--4066, 2018.

\bibitem{Forejt2011}
V.~Forejt, M.~Kwiatkowska, G.~Norman, D.~Parker, and H.~Qu, ``Quantitative
  multi-objective verification for probabilistic systems,'' in \emph{Int. Conf.
  Tool. Algorithm. Const. Anal. Syst.}\hskip 1em plus 0.5em minus 0.4em\relax
  Springer, 2011, pp. 112--127.

\bibitem{Forejt2012}
V.~Forejt, M.~Kwiatkowska, and D.~Parker, ``Pareto curves for probabilistic
  model checking,'' in \emph{Int. Symp. Autom. Tech. Verif. Anal.}\hskip 1em
  plus 0.5em minus 0.4em\relax Springer, 2012, pp. 317--332.

\bibitem{Guo2015}
M.~Guo and D.~V. Dimarogonas, ``Multi-agent plan reconfiguration under local
  {LTL} specifications,'' \emph{International Journal of Robotics Research},
  vol.~34, no.~2, pp. 218--235, 2015.

\bibitem{Kim2012}
K.~Kim and G.~E. Fainekos, ``Approximate solutions for the minimal revision
  problem of specification automata,'' in \emph{IEEE/RSJ International
  Conference on Intelligent Robots and Systems}, 2012, pp. 265--271.

\bibitem{Kim2012a}
K.~Kim, G.~E. Fainekos, and S.~Sankaranarayanan, ``On the revision problem of
  specification automata,'' in \emph{International Conference on Robotics and
  Automation (ICRA)}, 2012, pp. 5171--5176.

\bibitem{Tumova2013}
J.~Tumova, G.~C. Hall, S.~Karaman, E.~Frazzoli, and D.~Rus, ``Least-violating
  control strategy synthesis with safety rules,'' in \emph{Proc. Int. Conf.
  Hybrid syst., Comput. Control}, 2013, pp. 1--10.

\bibitem{cai2020receding}
M.~Cai, H.~Peng, Z.~Li, H.~Gao, and Z.~Kan, ``Receding horizon control-based
  motion planning with partially infeasible ltl constraints,'' \emph{IEEE
  Control Systems Letters}, vol.~5, no.~4, pp. 1279--1284, 2020.

\bibitem{peterson2021distributed}
R.~Peterson, A.~T. Buyukkocak, D.~Aksaray, and Y.~Yaz{\i}c{\i}o{\u{g}}lu,
  ``Distributed safe planning for satisfying minimal temporal relaxations of
  twtl specifications,'' \emph{Robotics and Autonomous Systems}, vol. 142, p.
  103801, 2021.

\bibitem{li2021online}
Z.~Li, M.~Cai, S.~Xiao, and Z.~Kan, ``Online motion planning with soft timed
  temporal logic in dynamic and unknown environment,'' \emph{arXiv preprint
  arXiv:2110.09007}, 2021.

\bibitem{vasile2017minimum}
C.-I. Vasile, J.~Tumova, S.~Karaman, C.~Belta, and D.~Rus, ``Minimum-violation
  scltl motion planning for mobility-on-demand,'' in \emph{2017 IEEE
  International Conference on Robotics and Automation (ICRA)}.\hskip 1em plus
  0.5em minus 0.4em\relax IEEE, 2017, pp. 1481--1488.

\bibitem{wongpiromsarn2021minimum}
T.~Wongpiromsarn, K.~Slutsky, E.~Frazzoli, and U.~Topcu, ``Minimum-violation
  planning for autonomous systems: Theoretical and practical considerations,''
  in \emph{2021 American Control Conference (ACC)}.\hskip 1em plus 0.5em minus
  0.4em\relax IEEE, 2021, pp. 4866--4872.

\bibitem{Cai2020}
M.~Cai, H.~Peng, Z.~Li, and Z.~Kan, ``Learning-based probabilistic ltl motion
  planning with environment and motion uncertainties,'' \emph{IEEE Transactions
  on Automatic Control}, vol.~66, no.~5, pp. 2386--2392, 2021.

\bibitem{cai2021_soft_reinforcement}
M.~Cai, S.~Xiao, Z.~Li, and Z.~Kan, ``Reinforcement learning based temporal
  logic control with soft constraints using limit-deterministic generalized
  buchi automata,'' \emph{arXiv preprint arXiv:2101.10284}, 2021.

\bibitem{Lahijanian2016}
M.~Lahijanian, M.~R. Maly, D.~Fried, L.~E. Kavraki, H.~Kress-Gazit, and M.~Y.
  Vardi, ``Iterative temporal planning in uncertain environments with partial
  satisfaction guarantees,'' \emph{IEEE Transactions on Robotics}, vol.~32,
  no.~3, pp. 583--599, 2016.

\bibitem{Lacerda2019}
B.~Lacerda, F.~Faruq, D.~Parker, and N.~Hawes, ``Probabilistic planning with
  formal performance guarantees for mobile service robots,'' \emph{The
  International Journal of Robotics Research}, vol.~38, no.~9, pp. 1098--1123,
  2019.

\bibitem{niu2020optimal}
L.~Niu, J.~Fu, and A.~Clark, ``Optimal minimum violation control synthesis of
  cyber-physical systems under attacks,'' \emph{IEEE Transactions on Automatic
  Control}, vol.~66, no.~3, pp. 995--1008, 2020.

\bibitem{Sickert2016}
S.~Sickert, J.~Esparza, S.~Jaax, and J.~K{\v{r}}et{\'\i}nsk{\`y},
  ``Limit-deterministic {B}{\"u}chi automata for linear temporal logic,'' in
  \emph{Int. Conf. Comput. Aided Verif.}\hskip 1em plus 0.5em minus 0.4em\relax
  Springer, 2016, pp. 312--332.

\bibitem{Hasanbeig2019a}
M.~Hasanbeig, Y.~Kantaros, A.~Abate, D.~Kroening, G.~J. Pappas, and I.~Lee,
  ``Reinforcement learning for temporal logic control synthesis with
  probabilistic satisfaction guarantees,'' in \emph{2019 IEEE 58th Conference
  on Decision and Control (CDC)}.\hskip 1em plus 0.5em minus 0.4em\relax IEEE,
  2019, pp. 5338--5343.

\bibitem{Bozkurt2020}
A.~K. Bozkurt, Y.~Wang, M.~M. Zavlanos, and M.~Pajic, ``Control synthesis from
  linear temporal logic specifications using model-free reinforcement
  learning,'' in \emph{2020 IEEE International Conference on Robotics and
  Automation (ICRA)}.\hskip 1em plus 0.5em minus 0.4em\relax IEEE, 2020, pp.
  10\,349--10\,355.

\bibitem{cai2020reinforcement}
M.~Cai, S.~Xiao, B.~Li, Z.~Li, and Z.~Kan, ``Reinforcement learning based
  temporal logic control with maximum probabilistic satisfaction,'' in
  \emph{2021 IEEE International Conference on Robotics and Automation
  (ICRA)}.\hskip 1em plus 0.5em minus 0.4em\relax IEEE, 2021, pp. 806--812.

\bibitem{Kwiatkowska2011}
M.~Kwiatkowska, G.~Norman, and D.~Parker, ``Prism 4.0: Verification of
  probabilistic real-time systems,'' in \emph{Int. Conf. Comput. Aided
  Verif.}\hskip 1em plus 0.5em minus 0.4em\relax Springer, 2011, pp. 585--591.

\bibitem{Puterman2014}
M.~L. Puterman, \emph{Markov Decision Processes.: Discrete Stochastic Dynamic
  Programming}.\hskip 1em plus 0.5em minus 0.4em\relax John Wiley \& Sons,
  2014.

\bibitem{Hahn2013}
E.~M. Hahn, G.~Li, S.~Schewe, A.~Turrini, and L.~Zhang, ``Lazy probabilistic
  model checking without determinisation,'' \emph{arXiv preprint
  arXiv:1311.2928}, 2013.

\bibitem{Vardi1985}
M.~Y. Vardi, ``Automatic verification of probabilistic concurrent finite state
  programs,'' in \emph{26th Annual Symposium on Foundations of Computer Science
  (SFCS 1985)}.\hskip 1em plus 0.5em minus 0.4em\relax IEEE, 1985, pp.
  327--338.

\bibitem{Courcoubetis1995}
C.~Courcoubetis and M.~Yannakakis, ``The complexity of probabilistic
  verification,'' \emph{Journal of the ACM (JACM)}, vol.~42, no.~4, pp.
  857--907, 1995.

\bibitem{Randour(1)2015}
M.~Randour, J.-F. Raskin, and O.~Sankur, ``Percentile queries in
  multi-dimensional {Markov} decision processes,'' in \emph{Int. Conf. Comput.
  Aided Verif.}\hskip 1em plus 0.5em minus 0.4em\relax Springer, 2015, pp.
  123--139.

\bibitem{chatterjee2011}
K.~Chatterjee, V.~Forejt, A.~Kucera \emph{et~al.}, ``Two views on multiple
  mean-payoff objectives in {Markov} decision processes,'' in \emph{IEEE Symp.
  Log. Comput. Sci.}, 2011, pp. 33--42.

\bibitem{Durrett1999}
R.~Durrett, \emph{Essentials of stochastic processes, 2nd ed.}\hskip 1em plus
  0.5em minus 0.4em\relax Springer, 2012, vol.~1.

\bibitem{Etessami2007}
K.~Etessami, M.~Kwiatkowska, M.~Y. Vardi, and M.~Yannakakis, ``Multi-objective
  model checking of markov decision processes,'' in \emph{International
  Conference on Tools and Algorithms for the Construction and Analysis of
  Systems}.\hskip 1em plus 0.5em minus 0.4em\relax Springer, 2007, pp. 50--65.

\bibitem{gurobi}
\BIBentryALTinterwordspacing
{Gurobi Optimization, LLC}, ``{Gurobi Optimizer Reference Manual},'' 2021.
  [Online]. Available: \url{https://www.gurobi.com}
\BIBentrySTDinterwordspacing

\bibitem{Dantzig1998}
G.~B. Dantzig, \emph{Linear programming and extensions}.\hskip 1em plus 0.5em
  minus 0.4em\relax Princeton university press, 1998.

\bibitem{Kretinsky2018}
\BIBentryALTinterwordspacing
J.~Kret{\'{\i}}nsk{\'{y}}, T.~Meggendorfer, and S.~Sickert, ``Owl: {A} library
  for {\(\omega\)}-words, automata, and {LTL},'' in \emph{Autom. Tech. Verif.
  Anal.}\hskip 1em plus 0.5em minus 0.4em\relax Springer, 2018, pp. 543--550.
  [Online]. Available: \url{https://doi.org/10.1007/978-3-030-01090-4\_34}
\BIBentrySTDinterwordspacing

\end{thebibliography}

\end{document}